\pgfplotsset{compat=1.12}
\definecolor{DarkGreen}{rgb}{0.1,0.5,0.1}
\definecolor{DarkRed}{rgb}{0.5,0.1,0.1}
\definecolor{DarkBlue}{rgb}{0.1,0.1,0.5}
\definecolor{Gray}{rgb}{0.2,0.2,0.2}
\definecolor{c1}{RGB}{38, 70, 83}
\definecolor{c2}{RGB}{42, 157, 143}
\definecolor{c3}{RGB}{233, 196, 106}
\definecolor{c5}{RGB}{231, 111, 81}
\definecolor{c4}{RGB}{244, 162, 97}
\definecolor{c1}{RGB}{38, 70, 83}
\definecolor{c2}{RGB}{42, 157, 143}
\definecolor{c3}{RGB}{233, 196, 106}
\definecolor{c5}{RGB}{231, 111, 81}
\definecolor{c4}{RGB}{244, 162, 97}
\def\draft{1}
\def\submit{0}
    \def\ShowAuthNotes{1}
    \def\ShowAuthNotes{0}
\newcommand{\forsubmit}[1]{#1}
\newcommand{\forreals}[1]{}
\newcommand{\forreals}[1]{#1}
\newcommand{\forsubmit}[1]{}
\newcommand{\chapterref}[1]{\hyperref[ch:#1]{Chapter~\ref{ch:#1}}}
\newcommand{\claimref}[1]{\hyperref[claim:#1]{Claim~\ref{claim:#1}}}
\newcommand{\corollaryref}[1]{\hyperref[cor:#1]{Corollary~\ref{cor:#1}}}
\newcommand{\definitionref}[1]{\hyperref[def:#1]{Definition~\ref{def:#1}}}
\newcommand{\equationref}[1]{\hyperref[eq:#1]{Equation~\ref{eq:#1}}}
\newcommand{\factref}[1]{\hyperref[fact:#1]{Fact~\ref{fact:#1}}}
\newcommand{\figureref}[1]{\hyperref[fig:#1]{Figure~\ref{fig:#1}}}
\newcommand{\tableref}[1]{\hyperref[tab:#1]{Table~\ref{tab:#1}}}
\newcommand{\itemref}[1]{\hyperref[item:#1]{Item~(\ref{item:#1})}}
\newcommand{\lemmaref}[1]{\hyperref[lem:#1]{Lemma~\ref{lem:#1}}}
\newcommand{\propref}[1]{\hyperref[prop:#1]{Proposition~\ref{prop:#1}}}
\newcommand{\propositionref}[1]{\hyperref[prop:#1]{Proposition~\ref{prop:#1}}}
\newcommand{\remarkref}[1]{\hyperref[rem:#1]{Remark~\ref{rem:#1}}}
\newcommand{\sectionref}[1]{\hyperref[sec:#1]{Section~\ref{sec:#1}}}
\newcommand{\theoremref}[1]{\hyperref[thm:#1]{Theorem~\ref{thm:#1}}}
\newcommand{\E}{\mathbb{E}}
\newcommand{\tv}{\text{TV}}
\renewcommand{\hat}{\widehat}
\newcommand{\cA}{{\cal A}}
\newcommand{\cE}{{\cal E}}
\newcommand{\cF}{{\cal F}}
\newcommand{\cM}{{\cal M}}
\newcommand{\cO}{{\cal O}}
\newcommand{\cS}{{\cal S}}
\newcommand{\defeq}{\stackrel{\small \mathrm{def}}{=}}
\renewcommand{\leq}{\leqslant}
\renewcommand{\geq}{\geqslant}
\newcommand{\Ind}{\mathbb I}
\newcommand{\ignore}[1]{}
\renewcommand{\epsilon}{\varepsilon}
\newcommand{\remove}[1]{}
\def\mydatewithyear{\leavevmode\hbox{\the\year/\twodigits\month/\twodigits\day}}
\def\twodigits#1{\ifnum#1<10 0\fi\the#1}
\def\twodigits#1{\ifnum#1<10 0\fi\the#1}
\newtheorem*{rep@theorem}{\rep@title}
\newcommand{\newreptheorem}[2]{%
\newenvironment{rep#1}[1]{%
 \def\rep@title{#2 \ref{##1}}%
 \begin{rep@theorem}}%
 {\end{rep@theorem}}}
\newcounter{protocol}
\newenvironment{protocol}[1][htb]{%
  \let\c@algorithm\c@protocol
  \renewcommand{\ALG@name}{Procedure}
  \begin{algorithm}[#1]%
  }{\end{algorithm}
}
\newcommand{\numarms}{K}
\newcommand{\numpulls}{n}
\newcommand{\nalg}{n^{\alg}}
\newcommand{\aidx}{i}
\newcommand{\aset}{[K]}
\newcommand{\action}{I}
\newcommand{\reward}{r}
\newcommand{\samplemean}{\bar \mu}
\newcommand{\estmean}{\hat \mu}
\newcommand{\alg}{\mathcal{A}}
\newcommand{\nn}{\kappa_i}
\newcommand{\armdist}{\nu}
\newcommand{\bestmean}{\mu^\ast}
\newcommand{\armmean}{\mu}
\newcommand{\gap}{\Delta}
\newcommand{\Regret}{R}
\newcommand{\ci}{C}
\newcommand{\infoset}{\xi}
\newcommand{\instance}{\cM}
\newcommand{\muhat}{\widehat{\mu}}
\newcommand{\Ecal}{\mathcal{E}}
\newcommand{\Deltahat}{\widehat{\Delta}}
\newcommand{\ACal}{\mathcal{A}}
\newcommand{\kl}{\text{KL}}
\newcommand{\Bern}{\text{Bern}}
\newcommand{\Ecali}{\mathcal{E}^{(i)}}
\newcommand{\Ecalistar}{\mathcal{E}^{(i^*)}}
\newcommand{\Ecaliistar}{\mathcal{E}^{(i^*, i)}}
\newcommand{\Ecalij}{\mathcal{E}^{(j, i)}}
\newcommand{\EcalUCB}{\mathcal{E}_{\mathsf{UCB}}}
\newcommand{\EcalSAE}{\mathcal{E}_{\mathsf{SAE}}}
\newcommand{\tauhigh}{\overline{\tau}}
\newcommand{\tend}{\overline{t}}
\newcommand{\tepoch}{s}
\newcommand{\NUMRUNS}{100}
\newcommand{\authnote}[2]{{ \footnotesize \bf{\color{DarkRed}[#1's note:
{\color{DarkBlue}#2}]}}}
\newcommand{\authnote}[2]{}
\newcommand{\wg}[1]{{\authnote{Wenshuo} {#1}}}
\newcommand{\vm}[1]{{\authnote{Vidya} {#1}}}
\newcommand{\ap}[1]{{\authnote{Ashwin} {#1}}}
\begin{document}

\begin{center}

  {\bf{\LARGE{Learning from an Exploring Demonstrator: \\Optimal Reward Estimation for Bandits}}}

\vspace*{.4in}

{\large{
\begin{tabular}{ccc}
Wenshuo Guo$^{\diamond}$, Kumar Krishna Agrawal$^{\diamond}$, Aditya Grover$^{\dagger}$, Vidya Muthukumar$^{\ddag}$,\\ Ashwin Pananjady$^{\ddag}$
\end{tabular}
}}
\vspace*{.3in}

\begin{tabular}{c}
$^\diamond$Department of Electrical Engineering and Computer Sciences, \\University of California, Berkeley\\
$^{\dagger}$Department of Computer Science, UCLA\\
$^\ddag$School of Electrical \& Computer Engineering and School of Industrial \& Systems Engineering, \\Georgia Institute of Technology
\end{tabular}

\vspace*{.2in}

\today

\vspace*{.2in}

\begin{abstract}
We introduce the ``inverse bandit'' problem of estimating the rewards of a multi-armed bandit instance from observing the learning process of a low-regret demonstrator. Existing approaches to the related problem of inverse reinforcement learning assume the execution of an optimal policy, and thereby suffer from an identifiability issue. In contrast, we propose to leverage the demonstrator's behavior en route to optimality, and in particular, the exploration phase, for reward estimation. We begin by establishing a general information-theoretic lower bound under this paradigm that applies to any demonstrator algorithm, which characterizes a fundamental tradeoff between reward estimation and the amount of exploration of the demonstrator. Then, we  develop simple and efficient reward estimators for upper-confidence-based demonstrator algorithms that attain the optimal tradeoff, showing in particular that consistent reward estimation---free of identifiability issues---is possible under our paradigm. Extensive simulations on both synthetic and semi-synthetic data corroborate our theoretical results. 

\end{abstract}
\end{center}


\section{Introduction}\label{sec:intro}

Reward specification plays a crucial role in building safe and reliable machine learning systems that are aligned with human values~\citep{amodei2016concrete}. However, as pointed out in the extensive behavioral science literature, it is challenging to achieve this alignment, and hand-designed rewards are often misspecified~\citep{anderson2001behavioral, gershman2015novelty, macglashan2015between,bouneffouf2017bandit, gershman2018deconstructing}. 
The paradigm of \textit{inverse} reinforcement learning (IRL) presents a compelling workaround to explicit reward specification, and leverages the implicit optimality in expert demonstrations to infer a reward function.
In particular,
this paradigm places emphasis on behavioral demonstrations---that is, the demonstrator's actions themselves---as reflecting human values. Popular types of IRL include imitating the optimal policy~\citep{ho2016generative,li2017infogail}, apprenticeship learning~\citep{abbeel2004apprenticeship}, meta-learning~\citep{finn2017one} and learning the reward function (the original formulation of IRL)~\citep{ng2000algorithms,ramachandran2007bayesian,ziebart2008maximum, suay2016learning}.

Arguably the most outstanding challenge in reward-based IRL is that the reward function may not be uniquely identifiable from the agent's behavior, and infinitely many reward functions can explain the demonstrator's actions. 
This issue is particularly pronounced when we assume demonstrations from the \emph{optimal} policy~\citep{ng2000algorithms}, and subsequent work in IRL has developed heuristics to regularize the space of reward functions depending on how well they explain behavior
\citep{ziebart2008maximum, ramachandran2007bayesian}. Even so, some of these approaches, including maximum-entropy IRL~\citep{ziebart2008maximum}, still suffer from their own identifiability issues. 

The central message of this paper is that the reward identifiability issue can be alleviated even in the case where we have a \emph{single} demonstration, provided the demonstrator improves over time by exploration and then exploitation. In other words, such a demonstrator begins her trajectory facing an unknown environment, explores the environment through a sequence of actions, and eventually settles on an (approximately) optimal policy. 
Coincidentally, the original introduction of the IRL problem to the AI community did involve learning from this type of evolving demonstrator~\citep{russell1998learning}.
Concretely,~\cite{russell1998learning} frames the goal of IRL as being \emph{``to output the reward function that the agent is optimizing...given measurements of an agent’s behavior over time”}, and asks whether we can determine the reward function ``by observation \emph{during}, rather than \emph{after}, learning". Indeed, the process of policy improvement leaks information: \textit{when} the demonstrator ceases to use a suboptimal policy might contain useful signal about \textit{how} suboptimal that policy is. This, in turn, provides more information about the reward function than observations solely of the optimal policy.

We make this intuition formal and provide simple, tractable and optimal reward estimators from demonstrations in the \textit{multi-armed bandit (MAB)} setting that alleviate the aforementioned identifiability issue.
Note that the identifiability issue from optimal demonstrations is particularly acute in MAB: this is because no information about the suboptimal arms' rewards is revealed from the optimal demonstration, only the fact that they are suboptimal.
In addition to this conceptual motivation, studying the problem in the MAB setting also has several independent motivations. 
First and most directly, MAB forms the cornerstone of experiment design in several applications: two notable examples are hyperparameter selection in large-scale machine learning~\citep{li2017hyperband} and protocol selection for battery charging~\citep{attia2020closed}, where sequential experiment design is performed using popular, off-the-shelf bandit algorithms.
Being able to infer the utility of various alternative options from prior experimentation holds substantial value, as we can use this inference to assess the performance of all configurations that were involved in the experiment without actually rerunning the experiment itself (which may be expensive).
Second, it is also worth noting that humans frequently face MAB problems in the real world~\citep{anderson2001behavioral, bouneffouf2017bandit}.
It is often desirable to make inferences about their intrinsic preferences (e.g. a latent measure of customer utility) from observing their behavior, which can, in turn, be modeled from observing past interactions with a known environment. 

Our paradigm is applicable in both such cases. In contrast to learning purely from the exploitation phase in which the demonstrator pulls the optimal arm, we will use a model for the MAB algorithm---in particular, the temporal information revealed by the choices of which arms to pull over the course of the algorithm---to make inferences about the suboptimal arms.

\paragraph{Contributions.}
\begin{figure}
\centering
    \includegraphics[width=0.55\textwidth]{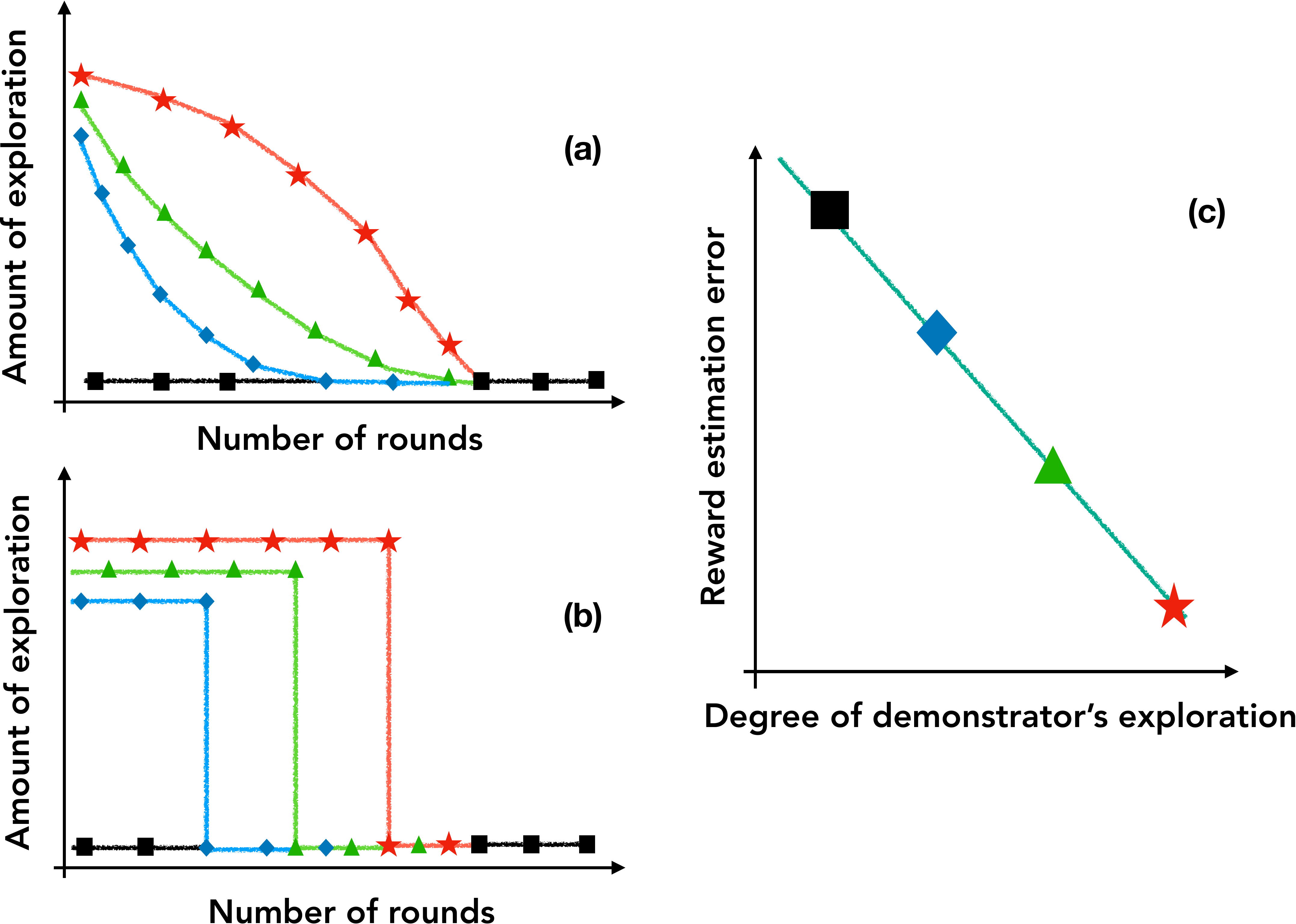}
    \caption{An illustration of exploration decreasing along the learning path for algorithms in the (a) upper-confidence-bound (UCB) family~\citep{auer2002finite} (b) successive-arm-elimination (SAE) family~\citep{even2006action}. (c) The tradeoff between exploration and reward estimation, signifying that algorithms that explore more are easier to estimate rewards from. }
    \label{fig:teaser}
\end{figure}

We formally introduce the inverse bandit problem and take a fundamental approach to it, providing both information-theoretic lower bounds and provably optimal algorithms. What is notable, and perhaps surprising, from our work is that the demonstrator's sequence of choices can reveal not only the relative suboptimality of arms but also the \emph{extent} of suboptimality, enabling consistent estimation of the reward of each arm from the behavior of a single demonstrator. 
To the best of our knowledge, this constitutes the first analysis of this type for inverse reward estimation, whether in bandits or RL. In more detail:

\textbullet \;\;We first derive information-theoretic lower bounds that apply to any demonstrator algorithm (Theorem~\ref{thm:LB}), which provide a quantitative tradeoff between exploration and reward estimation. This is illustrated schematically in Figure~\ref{fig:teaser}(c). In the special case of two arms, these bounds show that reward estimation error is inversely proportional to the square root of the \emph{regret} of the demonstrator's algorithm (Corollary~\ref{cor:two-armed}), thereby formalizing our claim from the abstract.

\textbullet \;\;We develop simple and efficient reward estimation procedures (Procedures~\ref{proc:SAE-estimator} and \ref{proc:UCB-estimator}) for demonstrations based on the popular \emph{successive-arm-elimination} (SAE)~\citep{even2006action} and \emph{upper-confidence-bound} (UCB)~\citep{lai1985asymptotically} algorithms, and prove upper bounds on the estimation error which match our lower bounds (Theorem~\ref{thm:SAE-UCB-reward-estimator}). 
Both algorithms can be naturally parameterized by their amount of in-built exploration, and are schematically represented in Figures~\ref{fig:teaser}(a) and~\ref{fig:teaser}(b), respectively.
In particular, these show that for either type of demonstrator, exploration can be optimally leveraged in reward estimation, even though the exploration schedule takes different forms\footnote{While Section~\ref{sec:upper_bound} provides exact details of both the SAE and UCB algorithms, we provide a short high-level description here. While SAE and UCB algorithms both trade off exploration and exploitation in a similar manner, their day-to-day behavior diverges sharply. In particular, SAE-based algorithms have a marked transition between their exploration and exploitation phases. On the other hand, in UCB-based algorithms the amount of exploration reduces smoothly with time (as reflected in Figure~\ref{fig:teaser}(b).}.


\textbullet \;\;Our theory is corroborated by extensive simulations and semi-synthetic experiments (e.g. on battery charging and gene expression datasets).

After discussing related work in the next subsection, we provide background on bandit algorithms and regret and formally state the inverse bandit problem in Section~\ref{sec:prelim}. Section~\ref{sec:main-results} contains statements and discussions of our main theoretical results, and we present and discuss our experiments in Section~\ref{sec:expts-main}. We conclude with a discussion of future work in Section~\ref{sec:conc}.

\paragraph{Related work.} 

\emph{Alternative IRL paradigms.} 
Recent work has explored easier settings that have the scope to avoid reward identifiability issues in IRL by assuming either additional structure on the reward or access to side-information~\citep{amin2017repeated, gershman2016empirical, geng2020identifying, ballard2019joint, jeon2020reward, fu2017learning}. In contrast to this line of work, our reward estimation procedure recovers the exact reward function in the limit, without additional assumptions, for a natural class of low-regret demonstrations.
A parallel line of work on learning from demonstrations, including imitation learning, studies alternative approaches that directly copy the demonstrators' actions without specifying a reward function~\citep{ho2016generative,li2017infogail}. While these approaches have had many successful applications, the lack of reward identification limits their use in others.
For instance, planning across multiple environments---with different transition dynamics---cannot be accomplished purely by imitation learning since the optimal policy can vary significantly. 
On the other hand, a learned reward function can be used to transfer knowledge across environments.



\emph{Learning from ``improving demonstrators".} 
Our paradigm of learning from an exploring demonstrator is similar in spirit to a line of recent work on learning from improving demonstrators~\citep{gao2018reinforcement, jacq2019learning, wu2019imitation, balakrishna2020policy, ramponi2020inverse}. We highlight two key differences in both the setting and theoretical scope. First, we show that reward estimation is possible from observing a \textit{single} demonstration, and consistent estimation is obtainable as the horizon of the demonstration grows. On the other hand, related work on learning from improving demonstrators is based on estimating population-based quantities arising from RL algorithms like soft policy iteration on gradient descent, and requires a large number of demonstrations for estimation.
At a lower level, our analysis proves 
not only consistency, but also
non-asymptotic guarantees on reward estimation that are matched by information-theoretic lower bounds.
On the other hand, there are no finite-sample guarantees available in the literature on learning from improving demonstrators, optimal or otherwise.
\smallskip

Finally, we mention that IRL in bandits has been considered by two recent papers, but the settings are motivated by social choice~\citep{Noothigattu_Yan_Procaccia_2021} and imitation/assisted learning~\citep{chanassistive2019}, as opposed to reward learning from a single demonstration.

\section{Preliminaries}\label{sec:prelim}

We formally define the problem of reward estimation in a multi-armed bandit (MAB) instance from a single demonstrator who uses a low-regret algorithm.
We begin by setting up standard notation for the MAB problem, and by formally defining (pseudo-)regret~\citep{lattimore2020bandit,slivkins2019introduction}.

\subsection{Multi-armed bandits (MAB) and regret minimization}

Consider a $K$-armed bandit instance with action (arm) set $\aset := \{1,2,\dots, K\}$. Every time an arm $\aidx \in \aset$ is pulled, a random reward is generated, independently of past actions, from an unknown probability distribution $\armdist_i$. 
We assume that the distribution $\armdist_i$ is supported on the interval $[0,1]$ for each $i \in \aset$, and denote by $\armmean_{i} = \mathbb{E}_{X\sim \armdist_i}[X]$ the expected reward of arm $\aidx$.
We assume throughout this paper that there is a unique best arm with highest expected reward. We let $i^\ast := {\arg \max}_{i \in \aset} \armmean_i$ denote the index of the best arm, and use $\bestmean := \armmean_{i^*}$ to denote its reward.
We refer to the remaining arms as suboptimal arms, and define the \textit{suboptimality gap} of arm $i \neq i^*$ to be $\gap_{i}:=\bestmean-\armmean_{i}$.
Owing to the uniqueness of the best arm, note that $\gap_i > 0$ for all $i \neq i^*$.


The demonstrator takes actions on the bandit instance with the goal of maximizing her accumulated reward over a finite horizon consisting of $T$ rounds. At each round $t\in \{1,2,\dots, T\}$ and based on her observations thus far, the demonstrator pulls an arm $\action_{t} \in \aset$ and receives a reward $\reward_{t} \sim \armdist_{\action_{t}}$. Define $\numpulls_{i,t}$ to be the number of times arm $\aidx$ has been pulled up to time~$t$. It is also useful to define the empirical reward estimate of arm $\aidx$ at time $t$ as $\samplemean_{\aidx,t} = \nicefrac{\big( \sum_{s=1}^t \reward_{s} \cdot \mathbb{I}\{\action_s = \aidx\}  \big)}{\numpulls_{\aidx,t}}$.
The performance of the demonstrator is measured by her \emph{regret}, which quantifies the difference between the best possible reward she could accrue if she knew which arm was the best one, and the actual accumulated reward. 

\begin{definition}[Pseudo-regret~\citep{lattimore2020bandit}] \label{def:regret}
The (expected) pseudo-regret is
\begin{align*}
\E[\Regret_T] &= T \bestmean - \mathbb{E}\Big[\sum_{t=1}^T \reward_{t}\Big] = \sum_{\aidx \in \aset}\gap_{i}\mathbb{E}[\numpulls_{\aidx,t}].
\end{align*}
\end{definition}
%
A \emph{low-regret} demonstrator is one whose regret scales \emph{sublinearly} in $T$ with high probability, that is, $\Regret_T = o(T)$ with probability that goes to $1$ as $T \to \infty$.

\subsection{The ``inverse bandit" problem}

The ``inverse bandit" problem is to estimate the expected rewards $\{ \mu_i \}_{i \in [K]}$ of a multi-armed bandit instance from observing only the actions of a demonstrator algorithm. Importantly, 
we \textit{do not} observe the rewards accrued at each round.
%
Consider a demonstration consisting of the sequence of actions $\{\action_t\}_{t = 1}^T$. A reward estimation \emph{procedure} is a mapping from  $\{\action_t\}_{t = 1}^T \mapsto \{\muhat_\aidx\}_{\aidx \neq i^*}$, where $\muhat_i$ denotes the mean estimate arm $i$. The goal of the reward estimation procedure is to minimize the expected estimation error for each arm\footnote{Our guarantees are most natural to state on the stringent arm-by-arm error metric, and yield $\ell_p$ guarantees.}  $i$, given by $\E[|\estmean_{\aidx} - \armmean_{\aidx}|]$. Here, the expectation is taken over the randomness of the received rewards and the sequence of actions. Furthermore, since the behavior of any natural demonstrator 
is invariant to constant shifts of all expected rewards, 
we assume that the procedure has access to the value of $\mu^*$ (but not the index $i^*$) to avoid trivial identifiability issues. Note that one can remove this recentering assumption and instead consider estimating the suboptimality gaps.

Note that this goal of \textit{estimation} is significantly more challenging than simply \textit{ranking} the arms: the latter problem is solvable by ordering the arms according to their pull counts, but does not produce cardinal reward values. Nevertheless, we will show shortly that reward estimation can indeed be performed from observing a single trajectory from a natural class of demonstrators.

\section{Fundamental Limits on Reward Learning} \label{sec:main-results}

To provide a concrete baseline, we first prove information-theoretic lower bounds showing a fundamental tradeoff between reward estimation and exploration, regardless of the specific reward estimation procedure and the demonstrator's learning algorithm. 
At a high level, the identifiability issues that arise in IRL already suggest that exploration is necessary for nontrivial reward estimation; our lower bound makes this formal. 
We then present some intuitive but unsuccessful attempts to achieve this lower bound.

\subsection{Information-theoretic lower bound} \label{sec:LB}

%
The following theorem collects our lower bound.

\begin{theorem}(Proof in Appendix~\ref{app:sec:proof-LB})\label{thm:LB} For every $K$-armed Bernoulli bandit instance $\instance$ satisfying $\max_{i \in [K]} |\mu_i - 1/2| \leq 1/4$ and for each suboptimal arm $i \neq i^*$, the following is true. Suppose that the demonstrator employs algorithm $\alg$, and let $\E[\nalg_{i, T}]$ denote the expected number of times arm $i$ is pulled by $\mathcal{A}$ when presented the instance $\instance$.
Then there exists an instance $\instance'$ such that for any reward estimation procedure having knowledge of $\armmean^\ast$ and mapping $\{\action_1, \ldots, \action_T\} \mapsto \{\muhat_i\}_{i \in \aset, i \neq i^*}$,
\begin{align*}
\max_{\widetilde{\instance} \in \{\instance, \instance'\} }\E[|\estmean_{\aidx}- \armmean_\aidx(\widetilde{\instance})|] \geq \frac{1}{16}\cdot \left(\frac{1}{\sqrt{\E[\nalg_{\aidx, T}]}} \land 1 \right). 
\end{align*}
Here $\mu_i(\mathcal{M})$ denotes the $i$-th reward mean of the bandit instance $\mathcal{M}$.
\end{theorem}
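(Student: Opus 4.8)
The plan is to prove this via a standard two-point (Le Cam) argument, exploiting the fact that the demonstrator's algorithm $\alg$ cannot distinguish two instances that it samples too few times on the relevant arm. Fix a suboptimal arm $i \neq i^*$ and let $p := \mu_i$ be its mean under $\instance$. The idea is to construct a perturbed instance $\instance'$ that differs from $\instance$ only in the reward distribution of arm $i$, replacing $p$ by $q := p + \delta$ for a carefully chosen $\delta > 0$, while keeping all other arms (in particular $i^*$ and its mean $\mu^*$) fixed. We must be slightly careful: the perturbation should be small enough that arm $i$ remains suboptimal under $\instance'$ (so that the roles of $i^*$ and $i$ do not swap and the estimation target is unambiguous), and small enough that the constraint $|\mu_i - 1/2| \le 1/4$ is not badly violated --- since the hypothesis gives us slack of $1/4$ away from the boundary, and $\delta$ will be at most order $1/\sqrt{\E[\nalg_{i,T}]} \wedge 1$, this is not an issue as long as we also cap $\delta$ by a small constant.

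The core steps are as follows. First, by Le Cam's two-point method applied to the estimator $\estmean_i$, for any $\delta$ we have
\begin{align*}
\max_{\widetilde\instance \in \{\instance,\instance'\}} \E[|\estmean_i - \mu_i(\widetilde\instance)|] \;\geq\; \frac{\delta}{2}\left(1 - \TV(P_{\instance}, P_{\instance'})\right),
\end{align*}
where $P_{\instance}, P_{\instance'}$ are the distributions of the observed action sequence $\{\action_1,\dots,\action_T\}$ under the two instances (the estimator has access only to this sequence plus the known value $\mu^*$, which is identical across the two instances by construction). Second, I would bound the total variation between these two action-sequence laws. The key point: the only way information about arm $i$'s distribution enters the action sequence is through the rewards received on the $\nalg_{i,T}$ pulls of arm $i$; so by the chain rule for KL divergence (a standard "divergence decomposition" for bandits, as in Lattimore--Szepesv\'ari), $\KL(P_{\instance} \,\|\, P_{\instance'}) = \E_{\instance}[\nalg_{i,T}] \cdot \kl(\Bern(p), \Bern(q))$. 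Using the bound $\kl(\Bern(p),\Bern(q)) \le \frac{(p-q)^2}{q(1-q)} \le C\delta^2$ valid since $p,q$ are bounded away from $0$ and $1$ (guaranteed by $|\mu_i - 1/2|\le 1/4$ plus smallness of $\delta$), we get $\KL(P_{\instance}\,\|\,P_{\instance'}) \le C\, \E_{\instance}[\nalg_{i,T}]\,\delta^2$. Third, by Pinsker's inequality $\TV(P_{\instance},P_{\instance'}) \le \sqrt{\tfrac12 \KL(P_{\instance}\,\|\,P_{\instance'})} \le \sqrt{\tfrac{C}{2}\,\E_{\instance}[\nalg_{i,T}]}\,\delta$.

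Finally, I would optimize $\delta$: choosing $\delta \asymp \left(\tfrac{1}{\sqrt{\E_{\instance}[\nalg_{i,T}]}} \wedge 1\right)$ with the right constant makes $\TV \le 1/2$, so the two-point bound yields $\max_{\widetilde\instance}\E[|\estmean_i - \mu_i(\widetilde\instance)|] \ge \tfrac{\delta}{4} \ge \tfrac{1}{16}\left(\tfrac{1}{\sqrt{\E[\nalg_{i,T}]}}\wedge 1\right)$ after tracking constants (which is why the precise constant $C$ in the KL bound, and the choice of the constant in $\delta$, need to be pinned down to land exactly on $1/16$). The main obstacle --- really the only delicate part --- is the bookkeeping: verifying that $\instance'$ genuinely lies in the stated instance class (arm $i$ still suboptimal, mean-perturbation within the $[1/4, 3/4]$-type range once $\delta$ is capped), handling the $\wedge 1$ truncation cleanly in the case where $\E[\nalg_{i,T}]$ is small (where one just takes $\delta$ a small constant and $\instance'$ differs by a constant), and propagating constants through the $\kl \le C\delta^2$ step with $p,q \in [1/4 - \delta, 3/4 + \delta]$ so that the final constant is at most $1/16$. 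One subtlety worth noting: the bound in Le Cam needs the estimator's error measured against the \emph{correct} mean in each instance, and since only arm $i$'s mean changes, $|\mu_i(\instance) - \mu_i(\instance')| = \delta$ exactly, which is what drives the lower bound; all other arms' targets are unchanged, so focusing on the single coordinate $\estmean_i$ is legitimate. I would present the argument for a generic suboptimal $i$, since the statement is arm-by-arm.
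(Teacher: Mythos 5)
Your proposal is correct and follows essentially the same route as the paper: a Le Cam two-point reduction with a perturbed instance differing only in arm $i$, a bound on the distance between the two action-sequence laws via the bandit divergence decomposition plus Pinsker (the paper packages this as Lemma~1 of \citet{kaufmann2016complexity}), the KL bound $\kl(\Bern(\mu_i),\Bern(\mu'_i)) \lesssim \epsilon^2$ using that the means lie in $[1/4,3/4]$, and an optimized perturbation of order $1/\sqrt{\E[\nalg_{i,T}]} \wedge 1$. The only cosmetic difference is that the paper perturbs toward $1/2$ (up or down depending on the sign of $\mu_i - 1/2$) to keep the perturbed mean in $[1/4,3/4]$, whereas you always perturb upward and cap $\delta$; both handle the constant-tracking the same way.
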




Note that in addition to applying to any reward estimation procedure, Theorem~\ref{thm:LB} provides a fundamental limit for any choice of demonstrator algorithm in terms of the degree of exploration in that algorithm. 
Its proof utilizes information-theoretic lower bounds on the demonstrator's regret~\citep{kaufmann2016complexity}: even with the strong side information of noisy reward observations, we need sufficiently many pulls of arm $\aidx$ to be able to estimate its reward, since zero information is shared across arms in the MAB setting.
Thus, the efficacy of any inverse procedure for estimating $\mu_i$ is fundamentally limited by $\E[n_{i, T}]^{-1/2}$. 



\subsection{Some initial observations} 
Theorem~\ref{thm:LB} constitutes a fundamental limit on reward estimation from \emph{any} demonstrator algorithm, even if we know the algorithm beforehand. We now make some observations to help assess the types of demonstrator algorithms that allow us to match this lower bound.

\paragraph{The algorithm needs to satisfy instance-adaptivity.}
Ideally, we would aim to obtain reward estimation guarantees from \emph{any} plausible low-regret algorithm. Unfortunately, such a general statement cannot be true (even if we are satisfied with a worse estimation error bound) as witnessed by the following simple counterexample. 
Suppose that the demonstrator employs the explore-then-commit algorithm~\citep{lattimore2020bandit} which pulls arms randomly for $\cO(T^{2/3})$ rounds, and then pulls the arm with the highest estimated mean reward thereafter. 
This algorithm achieves regret $\cO(T^{2/3})$ for all bandit instances, and so constitutes a no-regret algorithm. 
However, it is easy to see (since the arm pulls provide no information about the rewards themselves) that nontrivial reward estimation is impossible from observing the actions alone. 
As this example shows, reward estimation is only possible when the algorithm exhibits some type of instance-dependent behavior (e.g., if the action sequence differs when the suboptimality gaps change). 


\paragraph{Does order-wise instance-optimal regret suffice?}
The next natural question that arises is whether it is possible to estimate the rewards from any algorithm that exhibits (order-wise) optimal instance-dependent behavior, \emph{even when we do not know the specific details of the algorithm}.
In particular, we might hope to use the number of pulls of a suboptimal arm by round $T$, which we denoted by $n_{i,T}$, as a sufficient statistic for our estimation procedure.
For example, classic instance-dependent bounds are of the form $n_{i, T} = \Theta \left(\frac{\log T}{\Delta_i^2}\right)$, where the constant inside the $\Theta(\cdot)$ varies across arms.
A possible estimator from this relation would be to construct $\Deltahat_i = C_0 \sqrt{\frac{\log T}{n_{i, T}}}$ for each suboptimal arm $i$, for some choice of common constant $C_0$.
While this estimator possesses the attractive property of being algorithm-agnostic, it turns out to not even be \emph{statistically consistent} (with respect to the number of rounds $T$), let alone match the fundamental limit given by Theorem~\ref{thm:LB}.
In fact, an elementary analysis verifies that $|\Deltahat_i - \Delta_i| = \Theta\left(\sqrt{\frac{\log T}{n_{i, T}}}\right) = \Theta(1)$, and so the estimation error does not decay with $T$. 
%
At a high level, such a ``naive" estimator 
does not effectively exploit the day-to-day structure present in a demonstrator algorithm, and consequently cannot match the lower bound in Theorem~\ref{thm:LB} (also see Appendix~\ref{app:exp}).
\smallskip

Our lower bound and preliminary observations motivate a class of procedures that utilizes\footnote{In addition to this conceptual motivation, assuming knowledge of the demonstrator's algorithm is reasonable, e.g., in experiment design settings where algorithms like UCB constitute the ``gold standard".} the characteristics of structured, instance-adaptive algorithms like \emph{successive-arm-elimination} (SAE)~\citep{even2006action} and \emph{upper-confidence-bounds} (UCB)~\citep{lai1985asymptotically} \nocite{auer2002finite} to perform reward estimation.

\begin{minipage}[ht]{0.46\textwidth}
\begin{algorithm}[H]
\begin{algorithmic}[1]
\caption{Successive arm elimination (SAE) with $O(T^\alpha)$ regret (for $0 < \alpha < 1$) or $O(\log T)$ regret (for $\alpha = 0$)}\label{alg:SAE-any-regret}
\STATE \textbf{Input:} $K$ arms, $\alpha \in [0,1)$, total rounds $T$.\\[1ex]
\STATE \textbf{Initialize:} Set SAE epoch $t_r = 1$, active set $\cS(1) \gets [\numarms]$ and round $t=0$. 
\WHILE{$|S(t_r)| > 1$}
\STATE{Sample arm $\aidx \in S(t_r)$ once and set $ t \gets t + 1$}
\STATE {Let $\samplemean_{\aidx, t}$ be the average reward of arm $\aidx$ by $t$}
\STATE{Set $\ci_{\aidx,t} \defeq \sqrt{\frac{2(T^\alpha-1)}{\alpha \cdot t_r}}$.}
	\FOR{\textbf{ each }$\aidx \in \cS(t_r)$ and \mbox{$\samplemean_{\aidx, t} \leq \samplemean_{\max}(t) - 2 \ci_{\aidx, t}$}}\label{alg:step:sae-check}
		\STATE $\cS(t_r) \gets S(t_r)\setminus \{i\}$.
    \ENDFOR
    \STATE{$t_r \gets t_r + 1$}
\ENDWHILE\\[1ex]
\STATE{Pull arm in $\cS$ and set $ t \gets t + 1$ until $t = T$.}

\end{algorithmic}
\end{algorithm}
\end{minipage}
\begin{minipage}[ht]{0.44\textwidth}
\begin{algorithm}[H]
\begin{algorithmic}[1]
\caption{Upper confidence bound (UCB) with $O(T^\alpha)$ regret (for $0 < \alpha < 1$) or $O(\log T)$ regret (for $\alpha = 0$)} \label{alg:UCB-any-regret}
\STATE \textbf{Input:} $K$ arms, $\alpha \in [0,1)$, total rounds $T$.\\[1ex]
\STATE \textbf{Initialize:} Set round $t=1$. Set for every arm a confidence width $\ci_{\aidx,0} = \infty$. \\[1ex]
\WHILE{$t<T$}

\STATE Pull arm $\action_t = \argmax_{i \in [K]} \samplemean_{\aidx,t-1} + \ci_{\aidx,t-1}$ (break ties arbitrarily). 
\STATE Let $\samplemean_{\aidx, t}$ be the average reward of arm $\aidx$ by time $t$, and let $\numpulls_{\aidx, t}$ be the number of times arm $\aidx$ is pulled by time $t$.
\STATE Set $\ci_{\aidx,t} \defeq \sqrt{\frac{2(T^\alpha-1)}{\alpha \cdot \numpulls_{\aidx,t}}}$
\STATE $t \gets t+1$
\ENDWHILE 
\end{algorithmic}
\end{algorithm}

\textit{Note: When $\alpha = 0$, we use that $\lim_{\alpha \to 0} \frac{T^\alpha - 1}{\alpha} = \log T$.}
\end{minipage}

\section{Optimal Reward Estimators} \label{sec:upper_bound}


Two popular families of algorithms in the MAB literature are \textit{successive-arm-elimination} (SAE) and \textit{upper-confidence-bounds} (UCB), presented formally in Algorithms~\ref{alg:SAE-any-regret} and~\ref{alg:UCB-any-regret}. While these algorithms differ in their round-by-round details, they are both based on the principle of optimism in the face of uncertainty, whereby exploration is encouraged by constructing an ``optimistic" upper-confidence-bound on the reward of an arm that is a decreasing function of the number of times that arm has been pulled thus far.

The SAE algorithm proceeds in multiple epochs; in each epoch, all active arms are pulled in a round robin fashion and their sample means are maintained. 
As soon as we observe that a certain arm is obviously suboptimal, we drop it from consideration and render it ``inactive", or eliminated.
The UCB algorithm instead intertwines exploration with exploitation.

\begin{remark}
The use of $\alpha$ in Algorithms~\ref{alg:SAE-any-regret} and~\ref{alg:UCB-any-regret} is only to obtain a more general class of algorithms with a smooth variation in their regret.
In particular, a higher value of $\alpha$ essentially inflates the confidence intervals, allowing for greater exploration.
Indeed, both the SAE and UCB algorithm incur a sublinear regret of $O(T^\alpha)$ in high probability for any $\alpha \in [0,1)$ (the statement and proof of this result is in Appendix~\ref{app:prop-regret} for completeness). 
The smaller the value of $\alpha$, the smaller the regret and---from the fundamental limits that we characterized in Theorem~\ref{thm:LB}---the harder it is to perform reward estimation.
The typical choice of $C_{i,t}$ is $\mathcal{O} \left( \sqrt{\frac{\log T}{\numpulls_{\aidx,t}}} \right)$, which yields the instance-optimal regret guarantee $\mathcal{O}\left(\sum_{i \neq i^*} \frac{\log T}{\Delta_i^2}\right)$, is recovered by taking the limit $\alpha \to 0$. 
It is important to note that we obtain consistency of estimation even in this case of minimal exploration, and our main ideas are already evident here.
\end{remark}





\subsection{Optimal reward estimation} \label{sec:rewardestimation}

The naive attempts from before suggest that one needs more delicate procedures in order to an optimal (or even consistent) estimator. We now present such estimators for the SAE and UCB algorithms, starting with SAE since the ideas are most intuitive when there is a clear separation between exploration and exploitation.

\paragraph{SAE reward estimator.}
Note from the description of SAE in Algorithm~\ref{alg:SAE-any-regret} that the transition from exploration to exploitation is particularly abrupt: for every arm $i$, there exists (with high probability) a round $\tau_i$ at which the condition for arm $i$ to be eliminated is met.
More formally, for a typical execution of SAE given by $\{I_1,\ldots,I_T\}$, we define this ``switching round" as
\begin{align}\label{eq:switchingroundSAE}
    \tau_i := \{t \geq 1: I_t = i\text{ and }  I_{t'} \neq i \;\forall t' > t\} .
\end{align}

\begin{figure}
\begin{minipage}{0.45\textwidth}
\begin{protocol}[H]
\begin{algorithmic}[1]
\caption{SAE reward estimator}\label{proc:SAE-estimator}
\STATE{\textbf{Input:} Sequence of actions $\{\action_1, \ldots, \action_T\}$; scalar $\mu^\ast$.\\[1ex]}
\STATE{Set $\hat{\imath} \in \argmax_{\aidx} \numpulls_{\aidx, T}$}
\FOR{\textbf{each }$\aidx \in [K], i \neq \hat{\imath}$:}
    \STATE{Compute $\tau_\aidx$ according to Eq.~\eqref{eq:switchingroundSAE}} 
\STATE{$\estmean_\aidx \defeq \bestmean - 2\cdot \ci_{\aidx,\tau_{\aidx}}.$}
\ENDFOR
\RETURN{$\estmean_\aidx$ for $\aidx \in [K]$.}
\end{algorithmic}
\end{protocol}
\end{minipage}
\hfill
\begin{minipage}{0.45\textwidth}
    \centering
    \includegraphics[width=\textwidth]{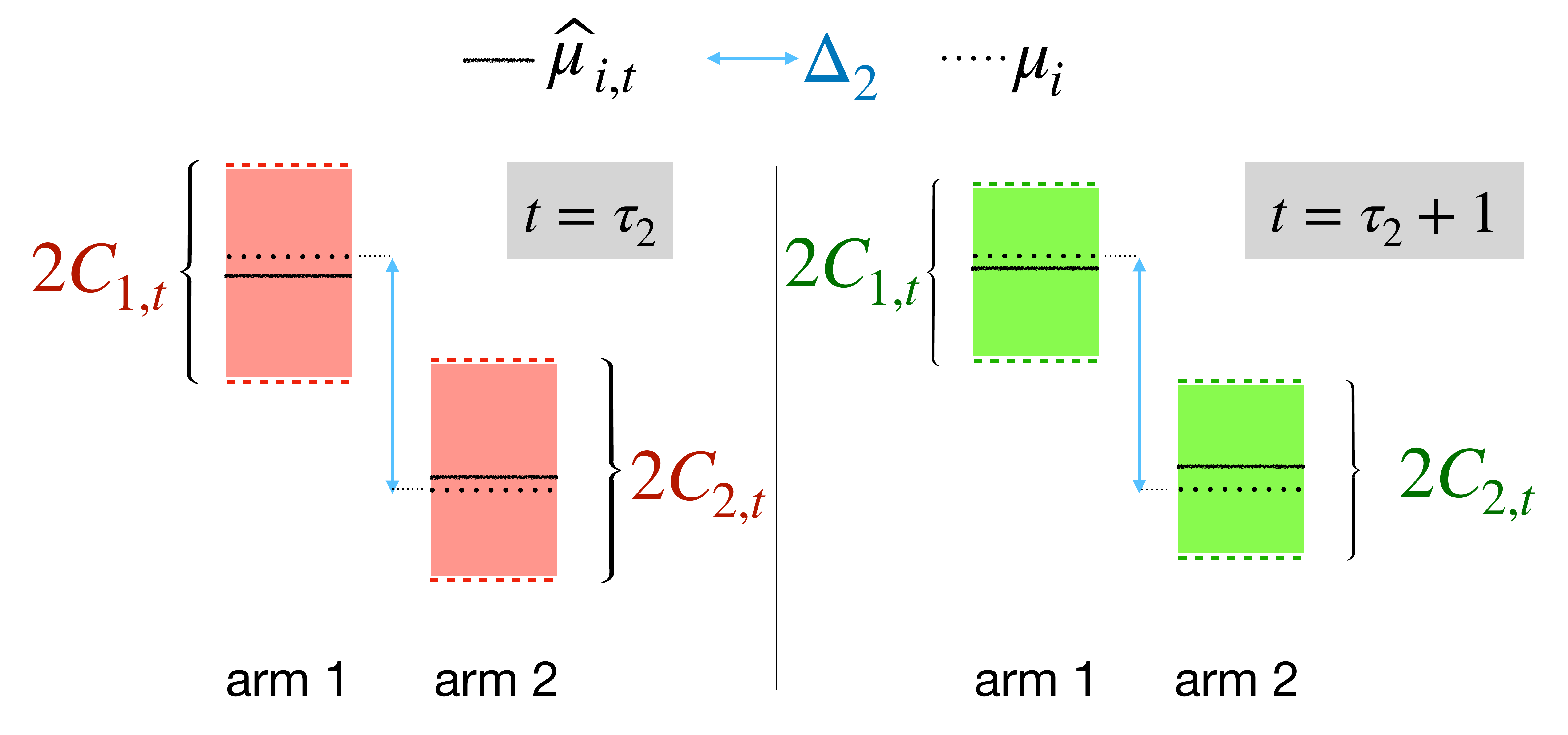}
    \caption{SAE on a $2$-armed bandit instance at the rounds $\tau_2$ and $\tau_2 + 1$. In Procedure~\ref{proc:SAE-estimator}, we exploit the fact that on both left and right $\Delta_2 \approx C_{1, t} + C_{2, t}$.}
    \label{fig:SAE-estimator} 
\end{minipage}
\end{figure}
Procedure~\ref{proc:SAE-estimator} estimates the suboptimality gap of arm $\aidx$ by exactly twice the width of the confidence interval at the switching round $\tau_i$, denoted by $C_{i,\tau_i}$.
Figure~\ref{fig:SAE-estimator} provides three-fold intuition for why this simple estimator is reasonable in the simplest case of $2$ arms (with $i^* = 1$).

First, at round $\tau_2$ arm $2$ is still in play; so the sum of the confidence widths $C_{1,\tau_2 + 1} + C_{2,\tau_2 + 1}$ must \emph{upper bound} the difference in sample means $\samplemean_{1,\tau_2} - \samplemean_{2,\tau_2}$. This is depicted on the left hand side of Figure~\ref{fig:SAE-estimator}.
Second, at round $\tau_2 + 1$ the condition for elimination of arm $2$ must be met; so the sum of the confidence intervals $C_{1,\tau_2 + 1} + C_{2,\tau_2 + 1}$ must \emph{lower bound} the difference in the algorithm's sample means $\samplemean_{1,\tau_2 + 1} - \samplemean_{2,\tau_2 + 1}$.
This is depicted on the right hand side of Figure~\ref{fig:SAE-estimator}.
%
%
Putting these together, we obtain an estimator that is close to the difference in sample means $\samplemean_{1,\tau_2 +1} - \samplemean_{2,\tau_2+1}$ (which is in turn very close to $\samplemean_{1,\tau_2} - \samplemean_{2,\tau_2}$).
Finally, since both arm $1$ and arm $2$ have been active until switching round $\tau_2$, their confidence widths are identical.
This leads to the particularly simple description of the SAE estimator in Procedure~\ref{proc:SAE-estimator}.

\paragraph{UCB reward estimator.}
While the details are significantly more complex for UCB, a similar idea works. 
In this case
suboptimal arms could be pulled throughout the decision-making process, but there will still exist (with high probability) a maximal round at which arm $i$ is pulled \textit{and} the optimal arm is pulled at least once there-after.
Let $\hat{\imath}$ denote the index of the arm that is pulled most often in the demonstration; this is our estimate of the optimal arm.
The switching round of interest is given by
\begin{align}\label{eq:switchingroundUCB}
\tau_i := \max\{ t: I_t = i \text{ and } I_{t'} = \hat{\imath} \text{ for some } t' > t \} .
\end{align}
Then, Procedure~\ref{proc:UCB-estimator} directly estimates the reward of arm $\aidx$ by exactly the difference in confidence widths of arms $i$ and $i^*$ at $\tau_i$.
As illustrated in Figure~\ref{fig:UCB-estimator} for the case of $2$ arms, the confidence widths can be significantly different for the optimal and suboptimal arm at the switching round for the case of UCB.
However, similar intuition as in the case of the SAE estimator continues to hold here; once again, arm $2$ is suboptimal and we work on the high-probability event that $i^* = \hat{\imath} = 1$.
First, at round $\tau_2$ the upper confidence bound of arm $2$ must exceed that of arm $1$; therefore, the difference in confidence widths must upper bound the difference in sample means.
Second, at round $t'$ the upper confidence bound of arm $1$ exceeds that of arm $2$; therefore, the difference in confidence widths must lower bound the difference in sample means.
Putting these together, we again obtain an estimator that is close to the difference in sample means $\samplemean_{1,\tau_2} - \samplemean_{2,\tau_2}$.
\begin{figure}
\begin{minipage}{0.45\textwidth}
\begin{protocol}[H]
\begin{algorithmic}[1]
\caption{UCB reward estimator}\label{proc:UCB-estimator}
\STATE{\textbf{Input:} Sequence of actions $\{\action_1, \ldots, \action_T\}$; scalar $\mu^\ast$.\\[1ex]}
\STATE{Set $\hat{\imath} \in \argmax_{\aidx} \numpulls_{\aidx, T}$}
\FOR{\textbf{each }$\aidx \in [K], i \neq \hat{\imath}$:}
\STATE{Compute $\tau_\aidx$ according to Eq.~\eqref{eq:switchingroundUCB}}
\STATE{$\estmean_\aidx \defeq  \bestmean - ( \ci_{\aidx,\tau_{\aidx}} - \ci_{\hat{\imath},\tau_{\aidx}}).$}
\ENDFOR
\RETURN{$\estmean_\aidx$ for $\aidx \in [K]$.}
\end{algorithmic}
\end{protocol}
\end{minipage}
\hfill
\begin{minipage}{0.45\textwidth}
    \centering
    \includegraphics[width=\textwidth]{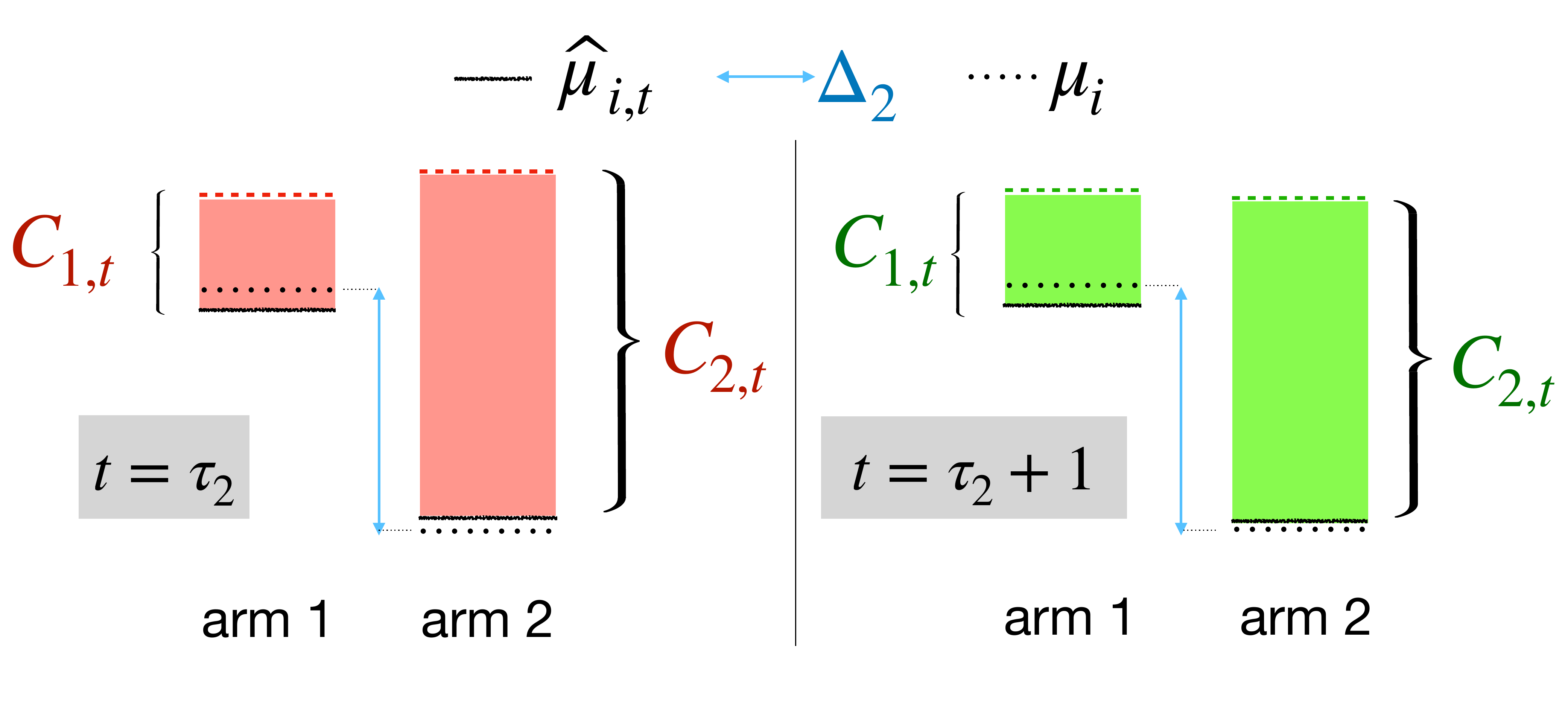}
    \caption{UCB on a $2$-armed bandit instance at the rounds $\tau_2$ and $\tau_2 + 1$. In Procedure~\ref{proc:UCB-estimator}, we exploit the fact that on both left and right $\Delta_2 \approx C_{2, t} - C_{1, t}$.}
    \label{fig:UCB-estimator}
\end{minipage}
\end{figure}

Our main theorem makes the above intuition precise and obtains a unified characterization of the estimation error $|\estmean_i - \armmean_i|$ for each $i \in \aset$ arising from demonstrations of either SAE or UCB.
\begin{theorem}(Proof in Appendix~\ref{app:proof:thm:SAE-reward-estimator} for SAE, Appendix~\ref{app:proof:thm:UCB-reward-estimator} for UCB)\label{thm:SAE-UCB-reward-estimator}
Suppose\footnote{This condition ensures, by Proposition~\ref{prop:SAE-UCB-regret}, that $\hat{\imath} = i^*$ with high probability.} $T \geq 64 \sum_{i \neq i^*} \frac{T^\alpha - 1}{\alpha \Delta_i^2}$, and let $n_{i, T}$ denote the number of times arm $i$ is pulled by either Algorithm~\ref{alg:SAE-any-regret} or~\ref{alg:UCB-any-regret}. Denote the total number of arms as $K$.
There is a universal positive constant $C$ such that for any suboptimal arm $\aidx$, Procedures~\ref{proc:SAE-estimator} and~\ref{proc:UCB-estimator} satisfy
\[\E|\estmean_\aidx - \armmean_\aidx| \leq C \sqrt{\frac{\log (\E[n_{i, T}]\sqrt{K})}{\E[n_{i, T}]} }.
\]
Furthermore, we have $\E[n_{i, T}] \geq c \cdot \frac{T^{\alpha} - 1}{\alpha \Delta_i^2}$ for a second universal constant $c > 0$.
\end{theorem}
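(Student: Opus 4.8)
The plan is to condition on a pair of high-probability events, run a deterministic analysis on their intersection, use the instance-dependent pull bounds to pin down $\E[n_{i,T}]$, and then bound the expectation by splitting on these events. Write $\ci_{i,t}$ for the confidence width used by the demonstrator (so $\ci_{i,t}^2=\tfrac{2(T^\alpha-1)}{\alpha\,n_{i,t}}$ for UCB, with the epoch index replacing $n_{i,t}$ for SAE), and let $M_i$ abbreviate $C_1\tfrac{T^\alpha-1}{\alpha\Delta_i^2}$. First, on the coarse event $\mathcal{E}_0=\{|\samplemean_{i,t}-\mu_i|\le\ci_{i,t}\ \text{for all }i,t\}$ --- for which Hoeffding and a union bound give $\Pr[\mathcal{E}_0^c]=O(KT^{-3})$, since $\tfrac{T^\alpha-1}{\alpha}\ge\log T$ --- the standard forward analysis (Proposition~\ref{prop:SAE-UCB-regret}) yields $n_{i,T}\le M_i$ for every suboptimal $i$, which with the hypothesis $T\ge 64\sum_{i\ne i^*}\tfrac{T^\alpha-1}{\alpha\Delta_i^2}$ forces the total number of suboptimal pulls to be $\le T/4$ and hence $\hat\imath=i^*$; moreover $n_{i,T}\ge 8\tfrac{T^\alpha-1}{\alpha}$ holds unconditionally (stopping to pull arm $i$ requires $2\ci_{i,\tau_i}\le 1$, as all rewards lie in $[0,1]$), so $\ci_{i,\tau_i}\le\tfrac12$ always and the estimation error is $O(1)$ (for UCB, $O(\sqrt{(T^\alpha-1)/\alpha})$) unconditionally. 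Second, for each suboptimal $i$ define the fine event $\mathcal{G}_i$ asking that $|\samplemean_j^{(m)}-\mu_j|\le\lambda_i(m)$ for \emph{all} arms $j$ and all pull counts $m\le M_i$, where $\lambda_i(m)=\sqrt{\tfrac{c_0\log(M_iK/\delta_i)}{2m}}$; a union bound over $\le M_i$ pull counts and $K$ arms makes $\Pr[\mathcal{G}_i^c]\le 2\delta_i$. The decisive point is that we will take $\delta_i$ as small as the \emph{target bound itself} --- $\delta_i\asymp\sqrt{\log(\E[n_{i,T}]\sqrt K)/\E[n_{i,T}]}$ --- which is affordable precisely because then $\log(1/\delta_i)$ is of the same order as $\log M_i\asymp\log(\E[n_{i,T}])$; hence $\lambda_i(m)\asymp\sqrt{\log(\E[n_{i,T}]\sqrt K)/m}$, and this balance is what produces the $\log(\E[n_{i,T}]\sqrt K)$ factor in the statement.

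Next, the deterministic sandwich for SAE, on $\mathcal{E}_0\cap\mathcal{G}_i$. Let $\tau_i$ be the switching round of Eq.~\eqref{eq:switchingroundSAE} and $\tau_i^-$ the round of the previous epoch at which arm $i$ is pulled, so $n_{i,\tau_i}=n_{i,\tau_i^-}+1=n_{i,T}$; since at round $\tau_i$ every active arm has been pulled $n_{i,T}$ or $n_{i,T}-1\le M_i$ times, $\mathcal{G}_i$ applies to all of them. Two facts hold: arm $i$ is eliminated at $\tau_i$, so $\samplemean_{i,\tau_i}\le\samplemean_{\max}(\tau_i)-2\ci_{i,\tau_i}$, and it survived the previous epoch, so $\samplemean_{i,\tau_i^-}>\samplemean_{\max}(\tau_i^-)-2\ci_{i,\tau_i^-}\ge\samplemean_{i^*,\tau_i^-}-2\ci_{i,\tau_i^-}$, using that $i^*$ is never eliminated (immediate, since $\samplemean_{\max}\ge\samplemean_{i^*}$ whenever $i^*$ is active). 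Bounding $\samplemean_{\max}(\tau_i)\le\bestmean+\lambda_i(n_{i,T}-1)$ (every active arm has true mean $\le\bestmean$) and $\samplemean_{i,\tau_i}\ge\mu_i-\lambda_i(n_{i,T})$ turns the first into $2\ci_{i,\tau_i}\le\Delta_i+O(\lambda_i(n_{i,T}))$, and the second into $\Delta_i\le 2\ci_{i,\tau_i^-}+O(\lambda_i(n_{i,T}))$. Finally $\ci_{i,\tau_i^-}-\ci_{i,\tau_i}=O(\ci_{i,\tau_i}/n_{i,T})=O(\lambda_i(n_{i,T}))$ (using $n_{i,T}\ge 8\tfrac{T^\alpha-1}{\alpha}$ to compare these quantities), so $|\estmean_i-\mu_i|=|\Delta_i-2\ci_{i,\tau_i}|=O(\lambda_i(n_{i,T}))$ on $\mathcal{E}_0\cap\mathcal{G}_i$.

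The UCB case uses the same template with $\tau_i$ from Eq.~\eqref{eq:switchingroundUCB}, but with noticeably more work, since the widths $\ci_{i,\tau_i}$ and $\ci_{\hat\imath,\tau_i}$ now differ: the two facts come from the UCB selection rule --- arm $i$ beats $\hat\imath$ at round $\tau_i$, and $\hat\imath$ beats arm $i$ at the first round $t'>\tau_i$ at which $\hat\imath$ is pulled (arm $i$'s and $\hat\imath$'s statistics being unchanged on $[\tau_i,t'-1]$) --- which sandwich $\samplemean_{\hat\imath,\tau_i}-\samplemean_{i,\tau_i}$ around $\ci_{i,\tau_i}-\ci_{\hat\imath,\tau_i}$ up to an $O(1/n_{i,\tau_i})$ off-by-one slack; one additionally has to show that by round $\tau_i$ the optimal arm has been pulled enough times that $\ci_{\hat\imath,\tau_i}$ is of the right order and that $\samplemean_{\hat\imath,\tau_i}$ concentrates (for which $\mathcal{G}_i$ must be complemented by an anytime, iterated-logarithm-type bound for the optimal arm, whose contribution is negligible), and that $n_{i,\tau_i}\asymp n_{i,T}$. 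Converting via concentration as before again gives $|\estmean_i-\mu_i|=O(\lambda_i(n_{i,T}))$. For the ``furthermore'' claim: on $\mathcal{G}_i$, arm $i$ can only be eliminated (SAE) or cease being preferred (UCB) once $2\ci_{i,\cdot}\le\Delta_i+O(\lambda_i(\cdot))$, and because $\lambda_i$ carries only a logarithm while $\ci$ decays like $m^{-1/2}$, taking $c_0$ small enough makes $\lambda_i$ a small fraction of $\ci$, forcing $n_{i,T}\ge c\,\tfrac{T^\alpha-1}{\alpha\Delta_i^2}$ on $\mathcal{E}_0\cap\mathcal{G}_i$; taking expectations and absorbing the negligible complement gives $\E[n_{i,T}]\ge c\,\tfrac{T^\alpha-1}{\alpha\Delta_i^2}$. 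Combined with $n_{i,T}\le M_i\asymp\tfrac{T^\alpha-1}{\alpha\Delta_i^2}$ this yields $n_{i,T}\asymp\E[n_{i,T}]$ on $\mathcal{E}_0\cap\mathcal{G}_i$, so there $\lambda_i(n_{i,T})=O\big(\sqrt{\log(\E[n_{i,T}]\sqrt K)/\E[n_{i,T}]}\big)$.

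Finally, split $\E|\estmean_i-\mu_i|$ on $\mathcal{E}_0\cap\mathcal{G}_i$ and its complement: the former contributes $O\big(\sqrt{\log(\E[n_{i,T}]\sqrt K)/\E[n_{i,T}]}\big)$ by the previous paragraphs, and the latter at most $O(\sqrt{(T^\alpha-1)/\alpha})\cdot(O(KT^{-3})+2\delta_i)$, which is of the same order by the choice of $\delta_i$ (using that the hypothesis forces $K\lesssim T/\log T$ and that $\E[n_{i,T}]\le T$). Adding the two pieces proves the bound. The main obstacle is exactly the phenomenon flagged in the introduction: the switching round $\tau_i$, equivalently the pull count $n_{i,T}$, is data-dependent, so one cannot apply a per-round Hoeffding bound directly, and applying the algorithm's own (deliberately $\log T$-inflated) width would only give an $O(\Delta_i)$, non-decaying error. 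Consistent --- and optimal --- estimation requires the finer uniform bound $\mathcal{G}_i$ over the a-priori pull-count range together with the delicate calibration of $\delta_i$ that trades the size of that range against the failure probability one can charge to the expectation; this calibration, and the UCB-specific control of the optimal arm's confidence width and empirical mean at the data-dependent round $\tau_i$, are the technical heart.
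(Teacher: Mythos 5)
Your overall architecture is the same as the paper's (anytime concentration with the algorithm's own widths to get $\hat{\imath}=i^*$ and $n_{i,T}\lesssim \kappa_i$; a finer uniform concentration over the a-priori pull-count range $\{1,\dots,O(\kappa_i)\}$; a deterministic sandwich from the elimination/selection rule at the switching round and its neighbor; pull-count lower bounds; then split the expectation), and your SAE half essentially reproduces the paper's argument (its Lemmas~\ref{lem:mu1-taui-sae} and~\ref{lem:mui-taui-sae}). But the UCB half has two problems. First, the calibration you call the ``decisive point'' is wrong there: on the bad event the UCB estimation error is only bounded by $O\bigl(\sqrt{(T^\alpha-1)/\alpha}\bigr)\asymp \Delta_i\sqrt{\kappa_i}$, so with your choice $\delta_i\asymp\sqrt{\log(\E[n_{i,T}]\sqrt K)/\E[n_{i,T}]}$ the complement contributes $\asymp \Delta_i\sqrt{\log(\kappa_i\sqrt K)}$, which does not decay with $T$ and exceeds the target by a factor $\Delta_i\sqrt{\kappa_i}\gtrsim\sqrt{\log T}$. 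The fix is simply to take the fine event's failure probability polynomially small in $1/\kappa_i$ (the paper's events $\Ecali_1,\Ecalij_1,\Ecaliistar_4$ fail with probability $\asymp 1/\kappa_i$), which leaves the $\log(\kappa_i\sqrt K)$ factor intact; so no delicate tuning of $\delta_i$ is needed, and this is not where the difficulty lies.

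The genuine gap is the part you only assert: ``one additionally has to show that by round $\tau_i$ the optimal arm has been pulled enough times \dots and that $n_{i,\tau_i}\asymp n_{i,T}$.'' For UCB, $\tau_i$ in Eq.~\eqref{eq:switchingroundUCB} is the last pull of arm $i$ that is followed by a pull of $\hat{\imath}$, so a priori $n_{i,\tau_i}$ can be smaller than $n_{i,T}$, and nothing in $\mathcal{E}_0\cap\mathcal{G}_i$ or in standard regret analysis gives a lower bound on $n_{i^*,\tau_i}$; without such a bound, even an anytime (LIL-type) deviation bound for $\samplemean_{i^*}$ evaluated at the random count $n_{i^*,\tau_i}$ only yields an $O(\Delta_i)$-level, non-decaying error. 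Your ``cease being preferred once $2\ci_{i,\cdot}\le\Delta_i+O(\lambda_i)$'' heuristic gives (at best) a lower bound on $n_{i,T}$, not on $n_{i,\tau_i}$ or $n_{i^*,\tau_i}$. The paper's proof spends its main technical effort exactly here: Lemma~\ref{lem:mui-taui} shows $n_{i,\tau_i}\geq c\kappa_i$ by relating $\tau_i$ to round $3T/4$ and deriving a contradiction from the sufficient condition under which $i^*$ cannot be pulled on $[T/2,3T/4]$, and Lemma~\ref{lem:mu1-taui} with Claim~\ref{clm:n1taui-lb} (a case analysis around the $(\kappa_i/32)$-th pull of arm $i$, using the event $\Ecaliistar_4$) establishes $n_{i^*,\tau_i}\geq c\kappa_i$ --- a bound the authors note does not follow from existing results. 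Until you supply arguments of this kind, your UCB bound is not established; the SAE bound, the $\hat{\imath}=i^*$ step, and the ``furthermore'' claim $\E[n_{i,T}]\gtrsim (T^\alpha-1)/(\alpha\Delta_i^2)$ are fine.
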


Since the map $x \mapsto \log x/x$ is decreasing for large enough $x$, the two parts of the theorem also provide an upper bound on the estimation error purely in terms of the the tuple $(T, \alpha, \Delta_i)$. Nevertheless, we have chosen to state it in terms of the expected number of pulls of arm $i$ so as to bring into sharp focus the effect of exploration on reward estimation. Note that $\E[n_{i, T}]$ measures the degree to which the suboptimal arm $i$ is explored; Theorem~\ref{thm:SAE-UCB-reward-estimator} shows that a larger value of $n_{i, T}$ will lead to a smaller error. 
The precise quantitative relationship is also compelling: indeed, if we had oracle access to the reward samples accrued over the course of the demonstration, simply averaging them and outputting the sample mean would achieve a rate of the order $n_{i, T}^{-1/2}$. The theorem shows that a similar rate is achievable \textit{solely} using observations of the trajectory itself. 

\paragraph{The role of algorithmic hyperparameters.} Our procedures were based on knowing not just the particular type of demonstrator algorithm being employed but also its hyperparameters (since these were used to construct the confidence intervals). It is natural to ask if the latter assumption can be relaxed. We note that even without the knowledge of the constants in the confidence widths, the same reward estimation procedures will still able to estimate the suboptimality gaps up to a scaling constant that is \emph{common} to all arms. In particular, such a guarantee would suffice to argue statements of the form ``the second arm is twice as suboptimal as the third"; such relative comparisons of the arms' rewards are often sufficient in many applications.

\begin{figure*}[t]
  \begin{center}
    \begin{tabular}{cccc}
          \includegraphics[width=0.215\textwidth]{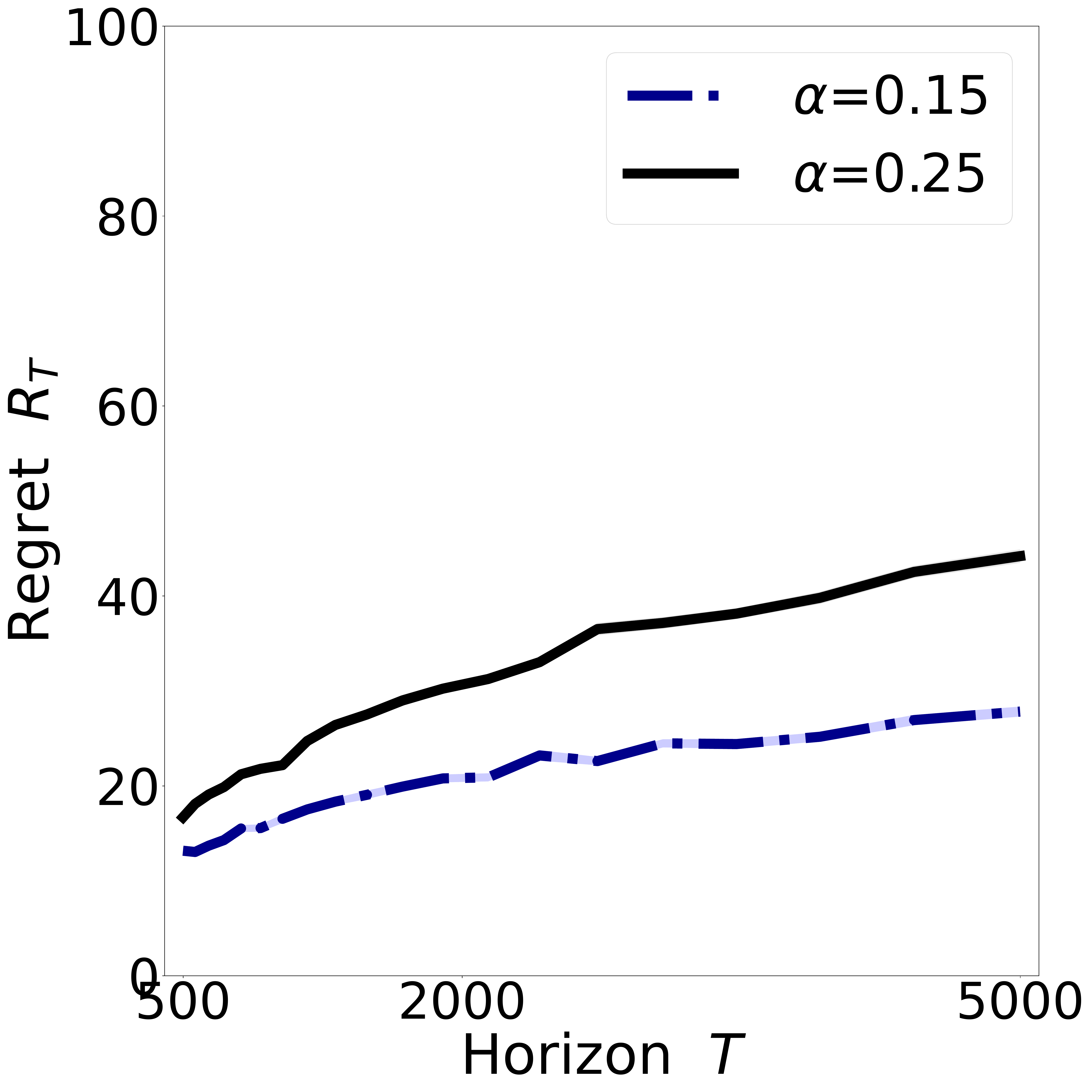} & \includegraphics[width=0.215\textwidth]{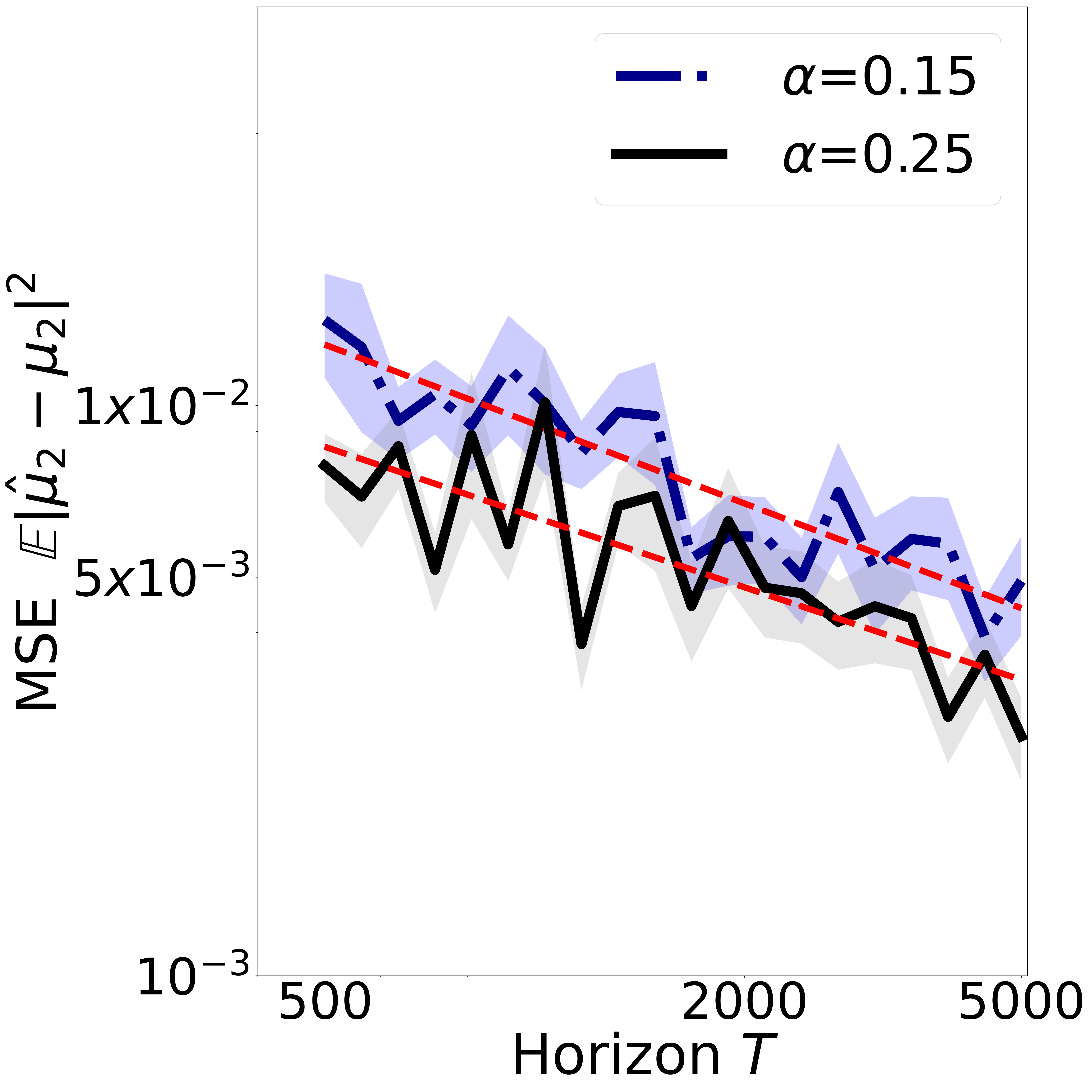} & \includegraphics[width=0.215\textwidth]{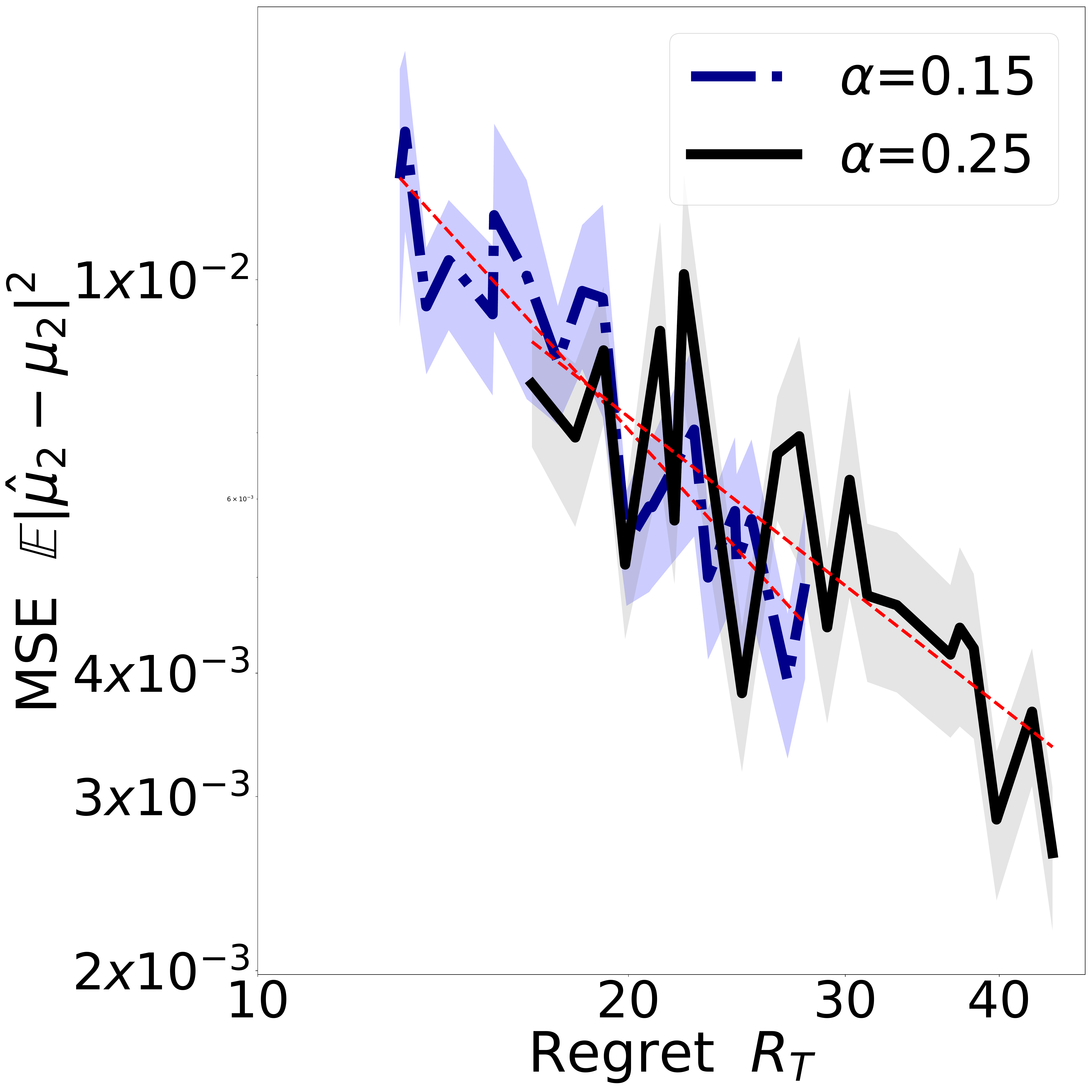} &  \includegraphics[width=0.215\textwidth]{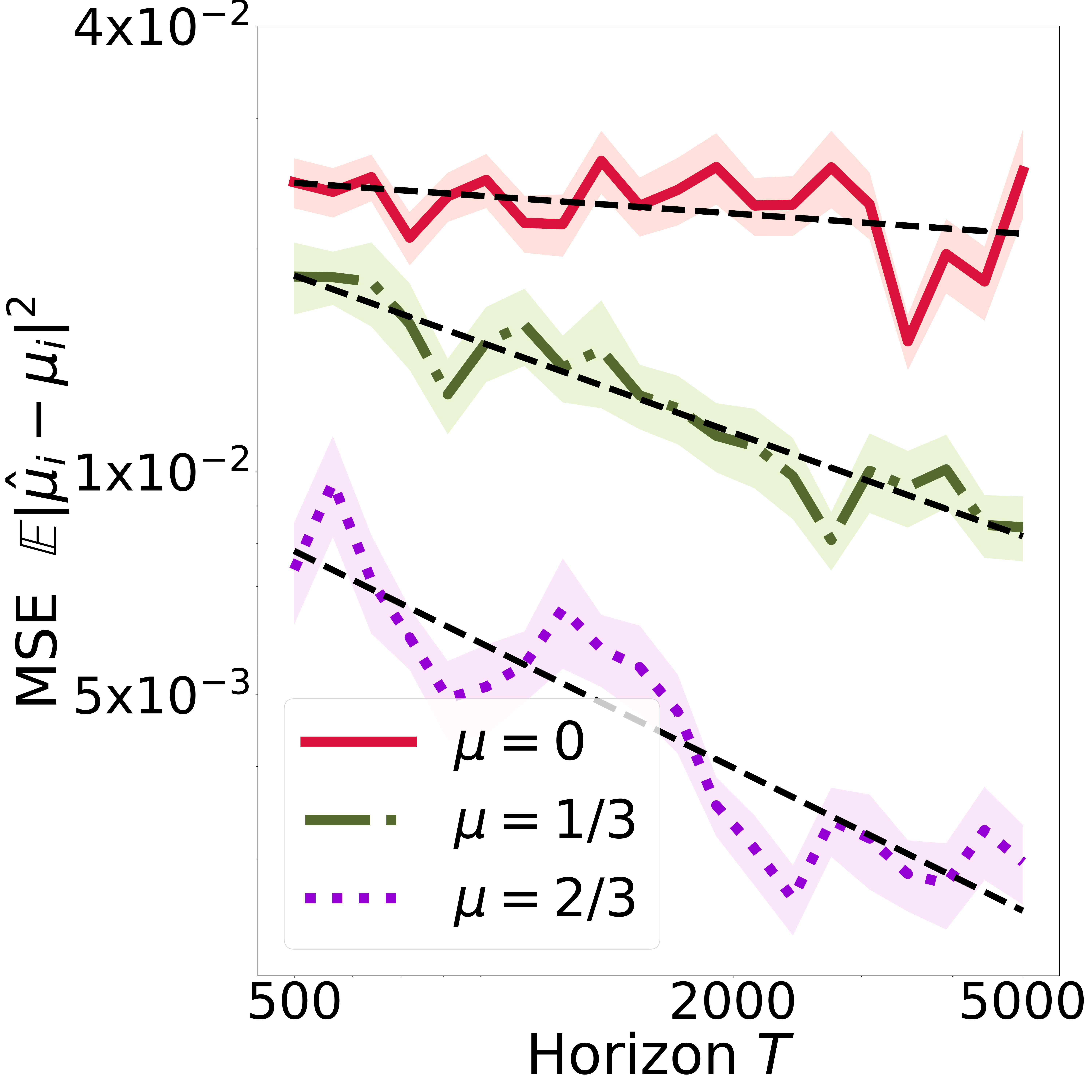} \\
          \; (a) & \;\; (b) & \;\;\; (c) & \;\; (d)
    \end{tabular}
    \caption{Results of $\NUMRUNS$ runs of simulation experiments for the UCB algorithm. Figures (a-c) are for a two-armed bandit instance with $\mu = (1, 1/2)$ and Gaussian rewards with unit variance. Here, individual curves represent two values of $\alpha \in \{0.15, 0.25\}$. Figure (d) is a $4$-armed instance with  $\mu = (1, 2/3, 1/3, 0)$ and Gaussian rewards with variance $1/4$. Here, individual curves represent the three suboptimal arms. Overall, these log-log plots corroborate our principal finding that better reward estimation is achievable from higher regret demonstrations; see the text for a detailed discussion.}
    \label{fig:all_sim_experiments}
  \end{center}
\end{figure*}

\paragraph{Technical novelty.} Let us make a few comments on the technical difficulties involved in proving Theorem~\ref{thm:SAE-UCB-reward-estimator}. Figures~\ref{fig:SAE-estimator} and~\ref{fig:UCB-estimator} suggest that the estimated gap closely tracks the difference in sample means $\samplemean_{\hat{\imath}, \tau_i} - \samplemean_{i, \tau_i}$. The first step is to make this precise: we show that 
in both cases, the overall estimation error is characterized, up to lower order terms, by 
the distance from the sample means to the true means at $\tau_i$.
The second step is to characterizing the sample-mean estimation error, and  is challenging for a number of reasons: (1) the sample means both in UCB and SAE are biased even for a fixed round $t$ due to adaptive sampling~\citep{nie2018adaptively,shin2019sample} (2) the switching round $\tau_i$ is itself random for both UCB and SAE, and (3) in the case of UCB, there is a discrepancy between the quantities $n_{i,\tau_i}$ and $n_{\hat{\imath},\tau_i}$. 
Substantial technical effort in our proofs goes into constructing high-probability lower-bounds on $n_{i,\tau_i}$ and $n_{\hat{\imath},\tau_i}$, both of the order of $\E[n_{i,T}]$.
The lower bound on $n_{\hat{\imath},\tau_i}$ appears to be the first of its kind, and does not follow even from other lower bounds on the total number of pulls of each arm derived in the literature~\citep{syed2010adapting}.
Instead, it requires a fine-grained understanding of the \emph{day-to-day} behavior of UCB.
We 
present a case-by-case analysis of UCB to provide these high-probability lower bounds, which may be of independent interest. 


\subsection{A consequence: reward estimation / regret tradeoffs for two-armed bandits}

A key message of our results is that more exploration in the demonstration is both necessary and sufficient for efficient reward estimation, in an arm-by-arm sense. 
In the special case of a two-armed bandit problem, this tradeoff can be expressed solely in terms of the regret:

\begin{corollary}(Informal) \label{cor:two-armed}
Let $i^* =1$. Procedures~\ref{proc:SAE-estimator} and~\ref{proc:UCB-estimator} achieve, from a demonstration of SAE or UCB with expected regret $\E[R_T]$, the bound
$\E[| \muhat_2 - \mu_2 |] \lesssim \sqrt{\frac{\Delta_2}{\E[R_T]}}.$ Conversely, any reward estimator $\muhat_2$ from a demonstration algorithm $\alg$ having expected regret $\E[R_T]$ must suffer error
$
    \E | \muhat_2 - \mu_2 | \gtrsim \sqrt{\frac{\Delta_2}{\E[R_T]}} \land 1.
$
\end{corollary}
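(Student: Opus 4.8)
The plan is to derive both directions as essentially immediate consequences of the two main results, using the special structure of a two-armed instance where regret is controlled by a single arm. In the two-armed case we have $\E[R_T] = \Delta_2 \, \E[n_{2,T}]$ by Definition~\ref{def:regret}, so that $\E[n_{2,T}] = \E[R_T]/\Delta_2$; this is the key identity that converts statements about the pull count of the suboptimal arm into statements about the regret.

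For the \emph{upper bound}, I would start from Theorem~\ref{thm:SAE-UCB-reward-estimator}, which gives $\E|\muhat_2 - \mu_2| \leq C\sqrt{\log(\E[n_{2,T}]\sqrt{K})/\E[n_{2,T}]}$ with $K=2$. Substituting $\E[n_{2,T}] = \E[R_T]/\Delta_2$ and absorbing the logarithmic factor into the $\lesssim$ notation (which is why the corollary is stated as ``informal''), this yields $\E|\muhat_2 - \mu_2| \lesssim \sqrt{\Delta_2/\E[R_T]}$, as claimed. I would note that the hypothesis $T \geq 64\sum_{i\neq i^*}(T^\alpha-1)/(\alpha\Delta_i^2)$ of the theorem is in force, and that $\E[R_T] \gtrsim (T^\alpha-1)/(\alpha\Delta_2)$ by the ``furthermore'' part of that theorem, so the bound is nonvacuous.

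For the \emph{lower bound}, I would invoke Theorem~\ref{thm:LB} with $i = 2$ (the unique suboptimal arm): there exists an instance $\instance'$ on which any estimator with knowledge of $\mu^*$ suffers $\max_{\widetilde{\instance}} \E|\muhat_2 - \mu_2(\widetilde{\instance})| \geq \tfrac{1}{16}\big(1/\sqrt{\E[n^{\alg}_{2,T}]} \wedge 1\big)$. Again using $\E[n^{\alg}_{2,T}] = \E[R_T]/\Delta_2$ gives the lower bound $\gtrsim \sqrt{\Delta_2/\E[R_T]} \wedge 1$. One subtlety to address: $\instance$ and $\instance'$ may have slightly different gaps $\Delta_2$, so I would be careful to state the bound in terms of the gap of the original instance (or note that the construction in Theorem~\ref{thm:LB} perturbs $\mu_2$ by $O(1/\sqrt{\E[n_{2,T}]})$, so the gaps are equal up to constants and the conclusion is unaffected).

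The main obstacle — really a matter of bookkeeping rather than mathematics — is reconciling the regret of a \emph{particular} demonstrator (needed for the upper bound, where we know the algorithm is SAE/UCB with its specific regret) against the regret of an \emph{arbitrary} low-regret demonstrator (needed for the lower bound, which must hold for all $\alg$). I would handle this by phrasing the upper bound for the specific $\E[R_T]$ achieved by Algorithms~\ref{alg:SAE-any-regret}/\ref{alg:UCB-any-regret} (which ranges over $\Theta(\log T)$ to $\Theta(T^\alpha)$ as $\alpha$ varies), and phrasing the lower bound as a statement that \emph{no} estimator beats $\sqrt{\Delta_2/\E[R_T]}$ for \emph{any} demonstrator at that regret level — so that together they pin down the optimal reward-estimation/regret frontier. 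Since the statement is explicitly labeled informal, I would keep the exposition at this level and defer the precise constant tracking and the handling of the $\wedge 1$ truncation to the formal versions already proved as Theorems~\ref{thm:LB} and~\ref{thm:SAE-UCB-reward-estimator}.
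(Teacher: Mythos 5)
Your plan is correct and is exactly the derivation the paper intends: since the corollary is stated as informal, it follows by substituting the two-armed identity $\E[R_T]=\Delta_2\,\E[n_{2,T}]$ into Theorem~\ref{thm:SAE-UCB-reward-estimator} (absorbing the logarithmic factor) and into Theorem~\ref{thm:LB}, and the paper gives no separate proof beyond this. Your handling of the two-instance maximum and of the particular-versus-arbitrary demonstrator distinction matches the intended reading.
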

The predictions of this corollary and our other results are now verified in numerical experiments. 
\section{Experiments} \label{sec:expts-main}

We now implement the reward estimators in Procedures~\ref{proc:SAE-estimator} and~\ref{proc:UCB-estimator} on a range of synthetic bandit instances and on a physics simulator derived from a real-world application in battery charging~\citep{attia2020closed,grover2018best}. Further experimental results and more detailed explanations of setups in this domain and including a new domain in gene expression data can be found in Appendix~\ref{app:exp}.

\paragraph{Simulated data.}
We simulate a $K=2$ armed bandit instance with Gaussian rewards distribution $X \sim N(\mu_i, \sigma^2) $ for each arm. The arm means $\mu_i$ are bounded in the range $[0, 1]$ with $\sigma^2=1.0$.
Our first set of experiments is based on simulations of Algorithms~\ref{alg:SAE-any-regret} and~\ref{alg:UCB-any-regret} (and the corresponding Procedures~\ref{proc:SAE-estimator} and~\ref{proc:UCB-estimator}). The results with two arms and UCB are illustrated in Figure~\ref{fig:all_sim_experiments}; SAE results are similar (see Appendix~\ref{app:exp}). 

Panel (a) of Figure~\ref{fig:all_sim_experiments} verifies that the regret is sublinear in $T$, with higher values of $\alpha$ incurring larger regret, as predicted by Proposition~\ref{prop:SAE-UCB-regret}. 
In panel (b), we plot the MSE of reward estimation ($\approx$ the square of the quantity in our theorems) from UCB against $T$, and observe that Procedures~\ref{proc:SAE-estimator} and~\ref{proc:UCB-estimator} attain smaller error when the algorithm has higher regret, i.e., for larger values of $\alpha$. We also see different slopes in these plots for different values of $\alpha$ (as predicted by Theorem~\ref{thm:SAE-UCB-reward-estimator}), and this motivates the question of whether a common quantity governs the scaling law across different choices of $\alpha$. Panel (c) confirms that this is indeed the case: the curves collapse onto each other when we plot the MSE against regret, and the slope of the best-fit lines---as predicted by Corollary~\ref{cor:two-armed}---are very close to~$-1$. 

In the $K$-armed case with $K = 4$, panel (d) demonstrate the variation of estimation rates across arms, where arms having large gaps (or lower values of $\mu$) are harder to estimate than those having small gaps. Once again, this corroborates the result of Theorem~\ref{thm:SAE-UCB-reward-estimator}, where we saw that the MSE must depend near-linearly on the gap of the arm since arms with larger gaps are pulled less often.

\paragraph{Application: Battery charging.}

In many scientific domains, we are interested in studying the performance landscape of a set of configurations.
For example, in battery charging, there are several electric current protocols for charging an electric battery~\citep{attia2020closed}.
Depending on the chosen protocol and a specified temperature regime, a battery undergoes a different range of chemical reactions that eventually determine its final lifetime.
Understanding relationships between charging protocols and induced battery lifetimes for different temperature regimes is crucial to designing the future generation of batteries that operate at a favorable point on this tradeoff. 

\begin{figure}[t]
\centering
    \begin{tabular}{cc} 
    \includegraphics[width=0.3\textwidth]{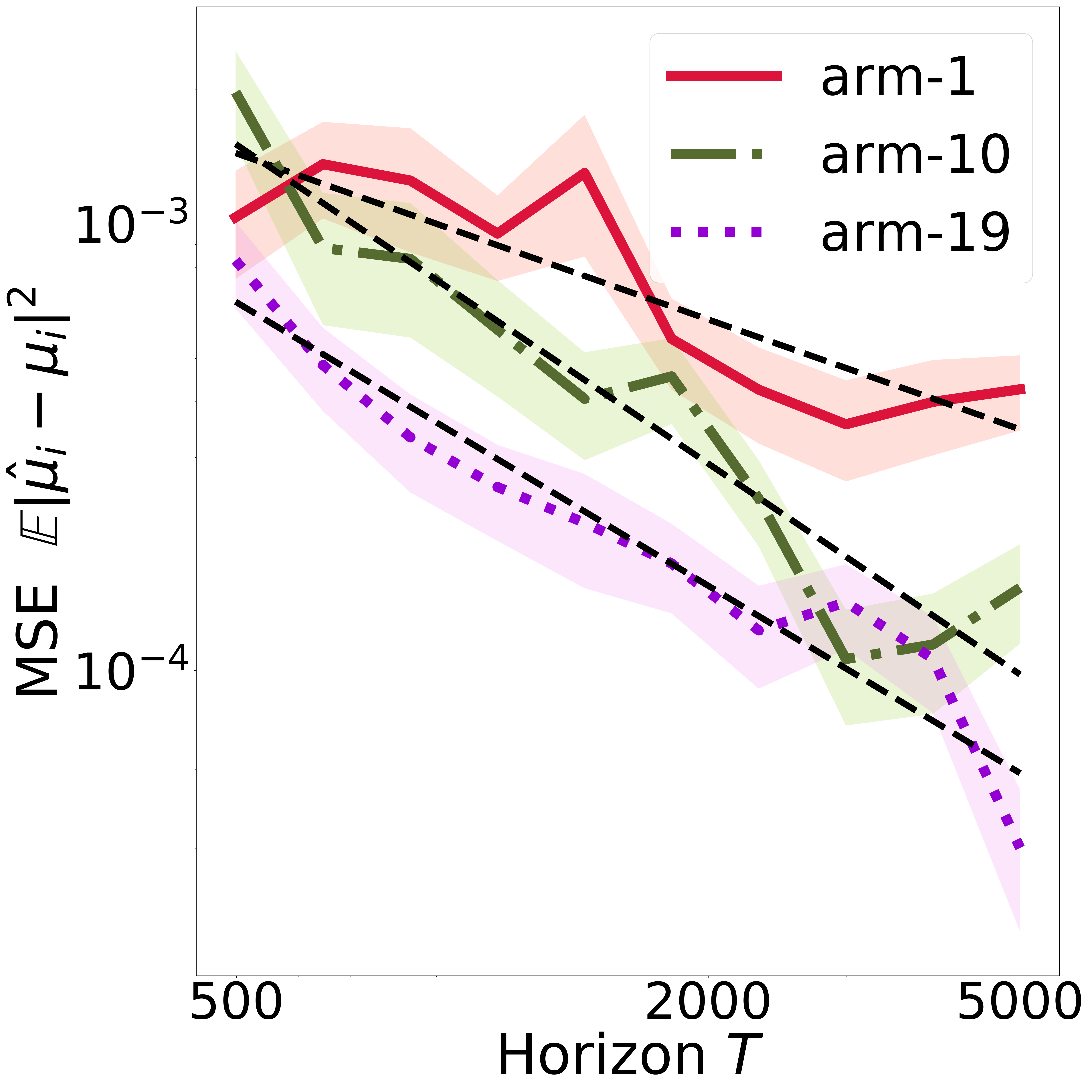} 
    \includegraphics[width=0.3\textwidth]{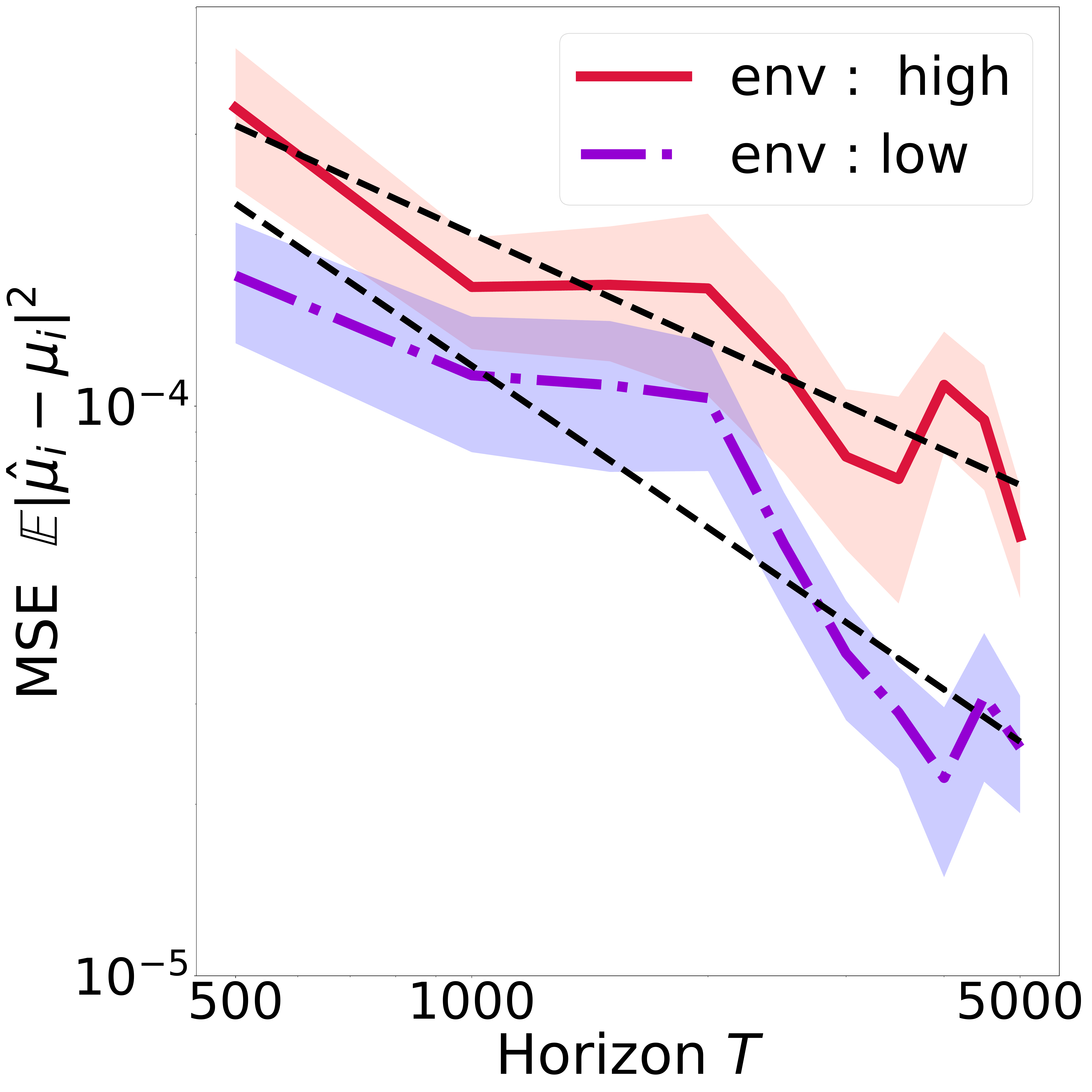} 
    \end{tabular}
    \caption{Results from 250 runs of estimating (normalized) battery lifetimes from a UCB experiment design procedure (a variance-adjusted version of Algorithm~\ref{alg:UCB-any-regret} with $\alpha = 0.25$).
    (a) Estimation error for a random subset of $3$ arms in the ``high" regime when algorithm is run on a $20$-armed instance. 
    (b) Error of estimating arm 12 in both ``low" and ``high" regimes with $4$ protocols. }
    \label{fig:battery}
\end{figure}

Our data at hand often consists of the results of experiments that were designed to search for lifetime maximizing configurations, and we would like to estimate the landscape of lifetimes from this data. We can cast this problem as one of reward estimation from an exploring demonstration. In particular, we map every temperature regime to a bandit instance where each charging protocol is an arm and the arm's reward is given by it's expected lifetime.
Given a demonstration of sequential experiments (i.e., arm pulls), 
our goal is to infer the lifetimes of \textit{all} charging protocols.

We consider $K$ distinct charging protocols from~\citet{attia2020closed} in two temperature regimes: \textit{low} and \textit{high}.
These two operating regimes exhibit different ranges of expected battery lifetime: low in [901, 962] and high in \mbox{[573, 1208]}. We obtain lifetime distributions for each protocol by fitting a Gaussian to a mix of real-world experimental data and physical simulations~\citep{attia2020closed}, and perform our experiments on this semi-synthetic data using the UCB algorithm as a representative experimental design approach. The reward means are normalized to lie in the range $[0, 1]$.


Figure~\ref{fig:battery}(a), plotted in the high temperature regime for $K = 20$ (see Appendix~\ref{app:exp} for other regimes), shows that the estimate for each charging protocol improves as the length of the trajectory $T$ increases. Lifetime estimation is thus possible even in cases where the number of protocols is moderately large. 
Next, we consider the problem of evaluating a particular charging protocol across temperature regimes with  $K = 4$. In Figure \ref{fig:battery}(b), we plot the estimation error for a representative arm having similar lifetime in both temperature regimes. Here, the behavior in panels (d) of Figure~\ref{fig:all_sim_experiments} is observed again: since the arm-gap in the low temperature regime is lower than in the high temperature regime, the error of Procedure~\ref{proc:UCB-estimator} is  correspondingly reduced.
\section{Discussion} \label{sec:conc}

We introduced and studied the inverse bandit problem of estimating rewards from observing a low-regret demonstrator. We provided information-theoretic lower bounds and simple, optimal reward estimation procedures. Our results quantify a tradeoff between exploration and reward estimation, and are corroborated by extensive  synthetic and semi-synthetic experiments. While this work takes a first step towards theoretically optimal reward estimation from an exploring demonstration, many open questions remain. It is interesting to study other demonstrator algorithms, e.g., randomized algorithms, in which the reward estimation comes with new challenges. Tackling these challenges is crucial to deploying this paradigm in scenarios where humans are known to randomize their behavior~\citep{daw2006cortical, schulz2015learning, speekenbrink2015uncertainty}.
Another interesting direction is to extend our insights to more expressive settings like contextual bandits, tabular RL, and continuous control.




\subsection*{Acknowledgments}

We thank Krishna Acharya and Jim James for their careful reading of a draft of this paper, and for making several important suggestions.
We thank Kwang-Sung Jun for sharing the gene expression data that were used for a subset of the experimental results presented in this paper.
AG, VM, and AP were supported by research fellowships from the Simons Institute for the Theory of Computing when part of this work was performed. WG acknowledges support from a Google PhD Fellowship; AP acknowledges support from the National Science Foundation grant CCF-2107455.

\bibliographystyle{abbrvnat}
\bibliography{ref_arxiv}

\clearpage
\appendix
\begin{center}

  {\bf{\LARGE{Appendix}}}
\end{center}

In the following appendices, we collect proofs of all main results, and also present some additional numerical experiments. Throughout our proofs, we suppose that $T$ is greater than some absolute constant. We will use $c,C, c_1, C_1, \ldots$ to denote universal positive constants that may change from line to line.
We also define the shorthand notation
\begin{align}\label{eq:n0}
    \nn \defeq \nicefrac{4(T^{\alpha} - 1)}{\alpha \Delta_\aidx^2},
\end{align}
which will appear in multiple proofs and simplifies our exposition.

The appendices are organized as follows. 
Appendix~\ref{app:sec:proof-LB} provides the proof of Theorem~\ref{thm:LB}, our information-theoretic lower bound on reward estimation from a single demonstration of any algorithm.
Appendix~\ref{app:prelimlemmas} collects preliminary lemmas for general bandit algorithms that are used as building blocks in all subsequent proofs.
Appendix~\ref{app:prop-regret} provides, for completeness, proofs of high-probability regret bounds of SAE and UCB implemented with our inflated confidence widths.
Appendix~\ref{app:proof:thm:SAE-reward-estimator} provides the proof of Theorem~\ref{thm:SAE-UCB-reward-estimator}, our upper bound on reward estimation error, from a demonstration of the SAE algorithm, and Appendix~\ref{app:proof:thm:UCB-reward-estimator} provides the corresponding proof for the UCB case.
Finally, Appendix~\ref{app:exp} presents additional experimental details and results.

\section{Proof of Theorem~\ref{thm:LB}} \label{app:sec:proof-LB}

The proof of Theorem~\ref{thm:LB} establishes a natural link to information-theoretic lower bounds on best-arm identification.
Denote the $K$-arm bandit instance by $\instance = \{\Bern(\mu_1), \ldots, \Bern(\mu_K)\}$, 
and suppose without loss of generality that the arms of $\instance$ are indexed with decreasing expected rewards, i.e. $\armmean^\ast = \armmean_1 > \armmean_2 \geq \cdots \geq \armmean_K$. 
(Note that the demonstrator's algorithm $\alg$ does not know this indexing.)
Recall that $\nalg_{i, T}$ denotes the number of times arm $i$ is pulled by the demonstrator's algorithm $\alg$. Further, for any $t$, let $\cF_t(\alg)$ be the sigma algebra of the sequence of actions and random reward samples generated by the algorithm $\alg$, i.e. $\cF_t(\cA) \defeq \sigma(\{\action_1, r_{\action_1}, \action_2, r_{\action_2}, \ldots, \action_t, r_{\action_t}\})$ where $r_{\action_t} \sim \Bern(\armmean_{\action_t})$ denotes a random reward sample, and $\mathbb{F}(\cA) = \{\cF_t(\cA)\}_{t\geq 1}$ is a filtration. 

Corresponding to some suboptimal arm $i \neq 1$, we 
construct another bandit instance $\instance' = \{\Bern(\mu'_1), \ldots, \Bern(\mu'_K))$ with $\mu'_j$ defined as follows. Let $\mu'_j = \mu_j$ for each $j \neq i$, and set 
\begin{align*}
\mu'_i = 
    \begin{cases}
    \mu_i + \epsilon \quad \text{ if } \mu_i \leq 1/2, \\
    \mu_i - \epsilon \quad \text{ otherwise.}
    \end{cases}
\end{align*}
for some scalar $\epsilon \in (0,\nicefrac{1}{4})$ that we will subsequently specify. Because $\armmean_i \in [0,1]$, we have $\armmean'_i \in [\nicefrac{1}{4}, \nicefrac{3}{4}]$ for all $i \in \aset$.

We now reduce the reward estimation problem to one of binary testing via the classic Le-Cam approach. 
Suppose one of instance $\instance$ or $\instance'$ is chosen uniformly at random, and 
we observe sequence $\infoset_T=\{\action_1, \action_2, \cdots, \action_T\}$ generated by algorithm $\alg$.
Let $\infoset_T^0$ denote this random sequence under the bandit instance $\instance$, and denote by $\infoset_T^1$ the random sequence observed under the bandit instance $\instance'$. We denote the distributions of $\infoset_T^0$ and $\infoset_T^1$ by $\nu_T^0$ and $\nu_T^1$, respectively.
We use $\E_0[\cdot]$ to denote expectations under the bandit instance $\instance$, and $\E_1[\cdot]$ to denote expectations 
under the bandit instance $\instance'$. Analogously, we use $\Prob_0(\cdot)$ to denote $\E_0[\Ind(\cdot)]$, and $\Prob_1(\cdot)$ to denote $\E_1[\Ind(\cdot)]$. 


Now suppose the reward estimation procedure has knowledge of $\mu_1 = \mu'_1 = \mu^*$, and must estimate the sequence of reward means $\{\mu_i\}_{i \in \aset}$. Since the error of estimation is lower bounded by the error of testing between the instances $\instance$ and $\instance'$, we have


\begin{align}\label{eq:lb-est-test}
\begin{split}
	\max\{\E_0[|\estmean_{\aidx} - \armmean_{\aidx}|], \E_1[|\estmean_{\aidx} - \armmean_{\aidx}|]\} 
	&\geq \frac{\epsilon}{2}\left(1-\|\nu_T^0 - \nu_T^1\|_{\tv}\right) \\
	&\geq \frac{\epsilon}{2}\left(1-\sup_{\cE\in \cF_T(\alg)}\left|\Prob_0(\cE)-\Prob_1(\cE)\right|\right),
\end{split}
\end{align}
where the last step follows from the definition of the total variation (TV) distance and the fact that the action sequence is in the filtration.

We now apply~\citet[Lemma 1]{kaufmann2016complexity} 
to obtain
\begin{align*}
	\sup_{\cE\in \cF_T(\alg)}|\Prob(\cE)-\Prob_1(\cE)|\leq \sqrt{\frac{\E[\numpulls_{\aidx,T}] \cdot \kl(\Bern(\mu_i),\Bern(\mu'_i))}{2}}
\end{align*}
where $\kl(\cdot,\cdot)$ denotes the Kullback-Leibler (KL) divergence between two distributions.
Then, we have
\begin{align*}
\kl(\Bern(\mu_i),\Bern(\mu'_i)) &= (\armmean'_\aidx + \epsilon)\log\left(\frac{\armmean'_\aidx + \epsilon}{\armmean'_\aidx}\right) + (1-\armmean'_\aidx - \epsilon)\log\left(\frac{1-\armmean'_\aidx - \epsilon}{1-\armmean'_\aidx}\right)\\
& \leq \left(\frac{1}{\armmean'_\aidx} + \frac{1}{1-\armmean'_\aidx}\right) \cdot \epsilon^2 \leq \frac{16}{3}\epsilon^2 
\end{align*}
where the first inequality follows from applying  $\log(1 + x) \leq x$, and the second inequality follows from the fact that $\armmean'_i \in [1/4,3/4]$.


Therefore, we have
\[
\sup_{\cE\in \mathcal{F}_T(\ACal)}|\Prob(\cE)-\Prob_1(\cE)| \leq \sqrt{ \frac{8}{3} \cdot \epsilon^2 \cdot \E_0[\numpulls^\alg_{\aidx,T}]  }.
\]
Combining the above with Eq~\eqref{eq:lb-est-test}, we have
\[
	\max\{\E_0[|\estmean_{\aidx} - \armmean_{\aidx}|], \E_1[|\estmean_{\aidx} - \armmean_{\aidx}|]\}  \geq \frac{\epsilon}{2} \left(1-\epsilon  \sqrt{\frac{8}{3}\E[\numpulls^\alg_{\aidx,T}]}  \right) \geq \frac{\epsilon}{2} \left(1-\epsilon  \sqrt{\frac{8}{3}(\E[\numpulls^\alg_{\aidx,T}] \lor 3/2)}  \right)
\]

Picking $\epsilon = \nicefrac{\sqrt{3}}{\{4\sqrt{2(\E[\numpulls^\alg_{\aidx,T}] \lor 3/2)}\}} < \nicefrac{1}{4}$ to maximize the right hand side of the above equation, we have
\[
	\max\{\E_0[|\estmean_{\aidx} - \armmean_{\aidx}|], \E_1[|\estmean_{\aidx} - \armmean_{\aidx}|]\} \geq  \frac{\sqrt{3}}{16\sqrt{2}} \cdot \left(\frac{1}{\sqrt{\E[\nalg_{\aidx, T}]}} \land \frac{1}{\sqrt{3/2}}\right) \geq \frac{1}{16} \cdot \left(\frac{1}{\sqrt{\E[\nalg_{\aidx, T}]}} \land 1 \right).
\]
This completes the proof.
\qed

\begin{remark}
Note from the proof that an identical lower bound applies even if the procedure has access to the random reward samples themselves, in addition to the demonstrator's action sequence.
\end{remark}

\section{Preliminary lemmas for general bandit algorithms}\label{app:prelimlemmas}

We first present a convenient interpretation of the multi-armed bandit instance using the notion of ``reward tapes"~\citep[Chapter 1]{slivkins2019introduction}.
We consider a reward tape of length $T$ for each arm $\aidx \in \cA$, each cell of which contains a random reward sample from that arm. 
In particular, cell $j$ on the tape corresponding to arm $i$ contains the reward sample $X_{\aidx,j} \sim \armdist_\aidx$ (recall that $\armdist_\aidx$ denotes the reward distribution of arm $\aidx$). 
Each time arm $\aidx$ is pulled, we move one cell forward on its reward tape, and obtain a reward from the new cell. 
Note that $n_{\aidx, T}$ simply denotes the number of cells we have gone through on the reward tape of arm $\aidx$ by round $T$, and we trivially have $n_{\aidx,T} \leq T$.
Corresponding to the $n^{th}$ cell of the reward tape, we define confidence width $C(n) \defeq \sqrt{\frac{2(T^{\alpha} - 1)}{\alpha n}}$.

The reward tape construction applies to a generic adaptive sampling algorithm (including both the SAE and UCB algorithms), and simplifies the construction of certain critical events concerning the concentration of sample means of arms around their true means. 
We start by stating and proving a basic lemma, which essentially follows from Hoeffding's inequality.

\begin{lemma}\label{lem:UCB-TL1} 
Denote by $\samplemean_\aidx(\numpulls)$ the sample mean of arm $i$ obtained by moving $\numpulls$ cells along the reward tape.
Then, for any $n \geq 1$, we have
\[
|\samplemean_\aidx(\numpulls) - \armmean_{\aidx}| <  \sqrt{\frac{\log(2/\delta)}{2\numpulls}},
\]
with probability at least $1 - \delta$. 
\end{lemma}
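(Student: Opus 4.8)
This lemma is a direct application of Hoeffding's inequality, and the only conceptual point is to exploit the reward-tape construction to sidestep any adaptivity. Concretely, the first step is to recall that the cells $X_{i,1}, X_{i,2}, \ldots, X_{i,T}$ on the reward tape of arm $i$ are i.i.d.\ draws from $\armdist_i$, each supported on $[0,1]$ with mean $\armmean_i$, and — crucially — that this family of random variables is fixed in advance, independently of the algorithm's (possibly adaptive) choices. Consequently the quantity $\samplemean_i(\numpulls) = \frac{1}{\numpulls}\sum_{j=1}^{\numpulls} X_{i,j}$ is, for a \emph{deterministic} index $\numpulls$, simply the empirical mean of $\numpulls$ i.i.d.\ bounded random variables, with no stopping-time or optional-sampling subtleties to worry about (those would only arise if $\numpulls$ were replaced by the random count $\numpulls_{i,T}$, which is not the case here).

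Second, I would invoke Hoeffding's inequality for i.i.d.\ random variables taking values in $[0,1]$: for every $\varepsilon > 0$,
\[
\Prob\!\left( |\samplemean_i(\numpulls) - \armmean_i| \geq \varepsilon \right) \leq 2\exp\!\left(-2\numpulls \varepsilon^2\right).
\]
Third, setting the right-hand side equal to $\delta$ and solving gives $\varepsilon = \sqrt{\log(2/\delta)/(2\numpulls)}$; substituting back and passing to the complement yields
\[
\Prob\!\left( |\samplemean_i(\numpulls) - \armmean_i| < \sqrt{\tfrac{\log(2/\delta)}{2\numpulls}} \right) \geq 1 - \delta,
\]
which is exactly the claimed bound (the strict inequality is harmless, since Hoeffding bounds the probability of the closed event $\{\,\cdot\, \geq \varepsilon\}$).

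\textbf{Main obstacle.} There is essentially no obstacle — the result is routine. The one thing worth stating carefully (and the reason the reward-tape language was set up) is the independence/non-adaptivity observation in the first step: it is what licenses the plain i.i.d.\ Hoeffding bound here, and it is the foundation on which the later, more delicate high-probability events (involving the random pull counts and the random switching round $\tau_i$) will be built via union bounds over $\numpulls$.
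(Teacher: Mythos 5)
Your proposal is correct and follows essentially the same route as the paper: interpret $\samplemean_i(\numpulls)$ via the reward-tape construction as the mean of $\numpulls$ i.i.d.\ $[0,1]$-valued samples from $\armdist_i$, apply Hoeffding's inequality, and solve $2e^{-2\numpulls\varepsilon^2} = \delta$ for $\varepsilon$. Your explicit remark that the tape is fixed in advance (so no adaptivity issues arise for a deterministic index $\numpulls$) is a point the paper leaves implicit but is exactly the intended role of the construction.
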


\begin{proof}
By construction of the reward tape, the $j$-th pull of arm $\aidx$ generates the random reward $X_{\aidx,j} \sim \armdist_i$. This random variable is bounded in the range $[0,1]$ and has expectation $\E[X_{\aidx,j}] = \armmean_{\aidx}$. The sample mean is given by $\samplemean_\aidx(n) = \frac{1}{n} \sum_{j=1}^n X_{\aidx,j}$.
Applying Hoeffding's inequality yields
\[
\Prob \left(\left|\samplemean_{\aidx}(n) - \armmean_{\aidx} \right| \geq \epsilon \right) \leq 2e^{-2\epsilon^2\numpulls}.
\]
Setting $\epsilon =   \sqrt{\frac{\log(2/\delta)}{2\numpulls}}$, we obtain
\[
\Prob \left(|\samplemean_{\aidx}(n) - \armmean_{\aidx}| <  \sqrt{\frac{\log(2/\delta)}{2\numpulls}} \right) \geq 1 -\delta,
\]
which completes the proof.
\end{proof}

\noindent The following series of events will be used as building blocks in all of our proofs.
\begin{definition}\label{def:concentrationevents}
We define the following events that ensure concentration of the sample means of arms obtained along the reward tape around their true means.
\begin{enumerate}
    \item ``Anytime" concentration events:
    \begin{align}\label{eq:eventi0}
    \Ecali_0 \defeq \{|\samplemean_i(n) - \armmean_i| \leq C(n) \text{ for all } n = 1,\ldots, T\},
    \end{align}
    corresponding to each suboptimal arm $i \in [K]$.
    These events will be used to prove sub-linear regret guarantees for the SAE and UCB algorithms. 
    \item ``Small-sample" concentration events
    \begin{align}\label{eq:eventi1}
    \Ecali_1 \defeq \{\sqrt{n}|\samplemean_i(n) - \armmean_i| \leq \sqrt{\log (8 \nn)} \text{ for all } n = 1,\ldots, 8 \kappa_i \},
    \end{align}
    corresponding to each suboptimal arm $i \in [K] \setminus i^*$. 
    These events will be used to provide an eventual guarantee on estimation error of rewards of suboptimal arms. With a slight abuse of notation, we also define the event
    \begin{align}\label{eq:eventistar1}
    \Ecalij_1 \defeq \{\sqrt{n}|\samplemean_{j}(n) - \armmean_{j}| \leq \sqrt{\log (8 \nn\sqrt{K})} \text{ for all } n = 1,\ldots, 8 \kappa_i \},
    \end{align}
    where $j$ is the index of an arm that remains active during the first $8\kappa_i$ rounds.
    \item Tighter concentration events
    \begin{subequations}
    \begin{align}
        \Ecali_2 &\defeq \left\{|\samplemean_i(n) - \armmean_i| \leq \sqrt{\frac{3}{4}} C(n) \text{ for all } n = 1,\ldots, T \right\} \text{ and } \label{eq:eventi2} \\
         \Ecali_3 &\defeq \left\{ |\samplemean_i(n) - \armmean_i| \leq \frac{C(n)}{\sqrt{2}} \text{ for all } n = 1,\ldots, \frac{\nn}{32} \right\}, \label{eq:eventi3}
    \end{align}
    \end{subequations}
    corresponding to each arm $i \in [K]$.
    These events will be used to ensure that suboptimal arms are pulled sufficiently often to guarantee low error in estimation of their rewards.
    \item ``Large-sample" concentration events
    \begin{align}\label{eq:eventi4}
    \Ecaliistar_4 \defeq \begin{cases}
    |\samplemean_{i^*}(n) - \armmean_{i^*}| \leq \sqrt{\frac{2 \log \nn}{c \nn}} \text{ for all } n \in \{c\kappa_i, \ldots, \nn^2\} \text{ and } \\
    |\samplemean_{i^*}(n) - \armmean_{i^*}| \leq \sqrt{\frac{\log T}{\nn^2}} \text{ for all } n \in \{\nn^2 + 1, \ldots, T\},
    \end{cases}
    \end{align}
    defined for the optimal arm $i^*$ with reference to a suboptimal arm $i \in [K] \setminus i^*$.
    These events will be used in the case of the UCB algorithm to ensure high-probability lower bounds on the random variable $n_{i^*, \tau_i}$.
\end{enumerate}
\end{definition}
\noindent The following lemma shows that each of these events occurs with high probability.

\begin{lemma}\label{lem:UCB-TL2}
For each $i \in [K]$, the following results hold:
\begin{itemize}
    \item Event $\Ecali_0$ occurs with probability at least $1 - \nicefrac{2}{T^3}$.
    \item Event $\Ecali_1$ occurs with probability at least $1 - \nicefrac{1}{4\nn}$.
    \item Event $\Ecalij_1$ occurs with probability at least $1 - \nicefrac{1}{4\nn K}$.
    \item Event $\Ecali_2$ holds with probability at least $1 - \nicefrac{2}{T^2}$.
    \item Event $\Ecali_3$ holds with probability at least $1 - \nicefrac{1}{16\nn}$.
    \item Event $\Ecaliistar_4$ holds with probability at least $1 - \nicefrac{2}{T} - \nicefrac{c}{\nn}$.
\end{itemize}
\end{lemma}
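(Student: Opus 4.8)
The plan is to derive each of the six bounds from Lemma~\ref{lem:UCB-TL1} (Hoeffding concentration along the reward tape) followed by a union bound over the relevant range of tape positions $n$. The only structural fact used throughout is that the confidence widths are calibrated so that $(T^\alpha-1)/\alpha \ge \log T$ — this holds because $x \mapsto (T^x-1)/x$ is increasing on $(0,\infty)$ with limit $\log T$ as $x \to 0^+$ — together with the identity $\nn \Delta_i^2 = 4(T^\alpha-1)/\alpha$ coming from the definition of $\nn$. These make each per-$n$ deviation probability polynomially small in $T$ or in $\nn$, small enough to survive the union bound. Crucially, because we argue on the fixed reward tape, $\samplemean_i(n)$ is literally the average of $n$ i.i.d.\ bounded samples, so Lemma~\ref{lem:UCB-TL1} applies verbatim with no adaptive-stopping correction; this is the point of introducing the reward-tape formalism.

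For the anytime events: to bound $\Prob[(\Ecali_0)^c]$, apply Lemma~\ref{lem:UCB-TL1} with $\delta$ chosen so that $\sqrt{\log(2/\delta)/(2n)} = C(n) = \sqrt{2(T^\alpha-1)/(\alpha n)}$, which forces $\log(2/\delta) = 4(T^\alpha-1)/\alpha \ge 4\log T$, hence $\delta \le 2T^{-4}$ uniformly in $n$; a union bound over $n = 1,\dots,T$ then gives failure probability at most $2T^{-3}$. The event $\Ecali_2$ is identical after replacing the target width by $\sqrt{3/4}\,C(n)$, which yields $\log(2/\delta) = 3(T^\alpha-1)/\alpha \ge 3\log T$, so $\delta \le 2T^{-3}$ and failure at most $2T^{-2}$ after union-bounding over $n = 1,\dots,T$. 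For the small-sample events I would instead match $\sqrt{\log(2/\delta)/(2n)}$ to $\sqrt{\log(8\nn)/n}$ for $\Ecali_1$ (resp.\ $\sqrt{\log(8\nn\sqrt{K})/n}$ for $\Ecalij_1$), giving $2/\delta = (8\nn)^2$ (resp.\ $(8\nn\sqrt{K})^2$); a union bound over the $8\nn$ values of $n$ produces failure $8\nn/(32\nn^2) = 1/(4\nn)$ (resp.\ $8\nn/(32\nn^2 K) = 1/(4\nn K)$).

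The two events with $\nn$-dependent thresholds need slightly more care. For $\Ecali_3$, matching $\sqrt{\log(2/\delta)/(2n)}$ to $C(n)/\sqrt{2} = \sqrt{(T^\alpha-1)/(\alpha n)}$ gives $\log(2/\delta) = 2(T^\alpha-1)/\alpha = \nn\Delta_i^2/2 \ge 2\log T$, so $\delta = 2\exp(-\nn\Delta_i^2/2) \le 2T^{-2}$; union-bounding over the $\nn/32$ positions and using that $\nn \lesssim T$ in the operative regime (e.g.\ $16\nn \le T$ under the hypothesis of Theorem~\ref{thm:SAE-UCB-reward-estimator}) bounds the failure by $(\nn/16)\cdot 2T^{-2} \le 1/(16\nn)$. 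For the large-sample event $\Ecaliistar_4$ I would split into the two ranges. On $n \in \{c\nn,\dots,\nn^2\}$, since the target width $\sqrt{2\log\nn/(c\nn)}$ does not depend on $n$, Hoeffding gives per-$n$ failure $2e^{-2\epsilon^2 n} \le 2e^{-2\epsilon^2 c\nn} = 2\nn^{-4}$ with $\epsilon = \sqrt{2\log\nn/(c\nn)}$, and a union bound over at most $\nn^2$ terms leaves $O(\nn^{-2}) = O(1/\nn)$. On $n \in \{\nn^2+1,\dots,T\}$, taking $\epsilon = \sqrt{\log T/\nn^2}$ gives per-$n$ failure $2e^{-2\epsilon^2 n} \le 2e^{-2\epsilon^2 \nn^2} = 2T^{-2}$, and a union bound over at most $T$ terms leaves $2/T$; summing the two contributions yields the claimed $2/T + c/\nn$ (after adjusting the constant $c$).

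The main obstacle here is bookkeeping rather than any single deep idea: one must check, for each event, that the calibrated $\delta$ is small enough \emph{after} multiplying by the number of terms in the union bound, which repeatedly reduces to the two inequalities $(T^\alpha-1)/\alpha \ge \log T$ and $\nn\Delta_i^2 = 4(T^\alpha-1)/\alpha$, and — for $\Ecali_3$ and $\Ecaliistar_4$ — that one correctly tracks the interplay between $\nn$ and $T$ (the thresholds are phrased in terms of $\nn$ while the union bounds range up to $\nn^2$ or $T$), using $\nn \lesssim T$ in the regime where these lemmas are invoked. The reward-tape viewpoint is precisely what removes the one genuinely subtle point — adaptively stopped sample means — so no martingale machinery is required.
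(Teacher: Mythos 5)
Your proposal is correct and matches the paper's proof essentially step for step: Hoeffding on the reward tape (Lemma~\ref{lem:UCB-TL1}) with $\delta$ calibrated to each confidence width via $(T^\alpha-1)/\alpha \geq \log T$, followed by union bounds over the stated ranges of $n$, including the same two-range split for $\Ecaliistar_4$. The only cosmetic differences are in $\Ecali_3$, where the paper relaxes the width to $\sqrt{\log\nn/n}$ and gets $\delta = 2/\nn^2$ directly while you calibrate to $C(n)/\sqrt{2}$ and invoke $\nn \lesssim T$ (with a harmless slip: $\nn/32$ positions times $2T^{-2}$ gives $\nn/(16T^2)$, not $(\nn/16)\cdot 2T^{-2}$) — both routes rely on the same regime assumptions and yield the claimed bounds.
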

\begin{proof}
The proof of Lemma~\ref{lem:UCB-TL2} proceeds by repeatedly applying the basic using the basic Lemma~\ref{lem:UCB-TL1} for different choices of $\delta$ and union bounding over varying ranges of $n$.
We prove each claim separately.
\noindent \textbf{Proof for event $\Ecali_0$:} 
For each $i \in [K]$ and a fixed $n \geq 1$, we have
\begin{align*}
    \Prob \left(|\samplemean_{\aidx}(n) - \armmean_{\aidx}| \geq  C(n) \right) &\leq \Prob \left(|\samplemean_{\aidx}(n) - \armmean_{\aidx}| \geq  \sqrt{\frac{2 \log T}{n}} \right) \\
    &\leq \frac{2}{T^4} ,
\end{align*}
where the first inequality follows because $C(n) \geq \sqrt{\nicefrac{2 \log T}{n}}$, and the second inequality follows by applying Lemma~\ref{lem:UCB-TL1} with the choice $\delta = \nicefrac{2}{T^4}$.
Taking a union bound over $n = 1,\ldots, T$ yields 
\begin{align*}
    \Prob\left(|\samplemean_{\aidx}(n) - \armmean_{\aidx}| \geq  C(n) \text{ for some } n = 1,\ldots, T \right) \leq \frac{2}{T^3} .
\end{align*}
This shows that the event $\Ecali_0$ holds with probability at least $1 - \nicefrac{2}{T^3}$, completing the proof.
\newline
\newline
\noindent \textbf{Proof for event $\Ecali_1$:}
For each $i \in [K]$ and each $n = 1,\ldots, 8\nn$, we apply Lemma~\ref{lem:UCB-TL1} with $\delta = \nicefrac{2}{64\nn^2}$. 
Then, we take a union bound over all $n = 1,\ldots, 8\nn$ to obtain
\begin{align*}
    \Prob \left(|\samplemean_i(n) - \armmean_i| > \sqrt{\frac{\log (8 \nn)}{n}} \text{ for some } n = 1,\ldots, 8\nn \right) &\leq 8 \nn \cdot \frac{2}{64 \nn^2} = \frac{1}{4\nn} .
\end{align*}
This completes the proof. 
\newline
\newline

\noindent \textbf{Proof for event $\Ecalij_1$:} Applying Lemma~\ref{lem:UCB-TL1} with the choice $\delta = \nicefrac{2}{64 \nn^2 K}$ yields
\begin{align*}
    \Prob \left(|\samplemean_{i^*}(n) - \armmean_{i^*}| \geq  \sqrt{\frac{\log 8\nn \sqrt{K}}{n}} \right) &\leq \frac{2}{64\nn^2K},
\end{align*}
for each fixed $n$, and taking a union bound over $n$ in the desired range completes the proof.
\newline
\newline

\noindent \textbf{Proof for event $\Ecali_2$:}
For each $i \in [K]$ and a fixed $n \geq 1$, we have
\begin{align*}
 \Prob \left(|\samplemean_{\aidx}(n) - \armmean_{\aidx}| \geq  \sqrt{\frac{3}{4}} C(n) \right) &\leq \Prob \left(|\samplemean_{\aidx}(n) - \armmean_{\aidx}| \geq  \sqrt{\frac{3 \log T}{2n}} \right) \\
    &\leq \frac{2}{T^3} ,
\end{align*}
where the first inequality follows because $C(n) \geq \sqrt{\nicefrac{2 \log T}{n}}$, and the second inequality follows by applying Lemma~\ref{lem:UCB-TL1} with the choice $\delta = \nicefrac{2}{T^3}$.
Taking a union bound over $n = 1,\ldots, T$ yields 
\begin{align*}
    \Prob\left(|\samplemean_{\aidx}(n) - \armmean_{\aidx}| \geq  C(n) \text{ for some } n = 1,\ldots, T \right) \leq \frac{2}{T^2} .
\end{align*}
This shows that the event $\Ecali_2$ holds with probability at least $1 - \nicefrac{2}{T^2}$, completing the proof.
\newline
\newline
\noindent \textbf{Proof for event $\Ecali_3$:} 
For each $i \in [K]$ and a fixed $n \in \{1,\ldots,\nicefrac{\nn}{32}\}$, we have
\begin{align*}
     \Prob \left(|\samplemean_{\aidx}(n) - \armmean_{\aidx}| \geq  \frac{C(n)}{\sqrt{2}} \right) &\leq \Prob \left(|\samplemean_{\aidx}(n) - \armmean_{\aidx}| \geq  \sqrt{\frac{\log \nn}{n}} \right) \\
    &\leq \frac{2}{\nn^2} ,
\end{align*}
where the first inequality follows because $\nicefrac{C(n)}{\sqrt{2}} \geq \sqrt{\nicefrac{\log \nn}{n}}$ for the specified range of $n$, and the second inequality follows by applying Lemma~\ref{lem:UCB-TL1} with the choice $\delta = \nicefrac{2}{\nn^2}$.
Taking a union bound over the specified range of $n$ completes the proof.
\newline
\newline
\noindent \textbf{Proof for event $\Ecaliistar_4$:}
First, consider the case where $c \nn \leq n \leq \nn^2$.
In this case, we have
\begin{align*}
    \Prob \left(|\samplemean_{i^*}(n) - \armmean_{i^*}| \geq  \sqrt{\frac{2 \log \nn}{c \nn}} \right) &\leq \Prob \left(|\samplemean_{i^*}(n) - \armmean_{i^*}| \geq  \sqrt{\frac{2 \log \nn}{n}} \right) \\
    &\leq \frac{2}{\nn^4} ,
\end{align*}
where the first inequality follows because  $n \geq c\nn$, and the second inequality follows by applying Lemma~\ref{lem:UCB-TL1} with the choice $\delta = \nicefrac{2}{\nn^4}$.
Second, consider the case where $\nn^2 < n \leq T$.
In this case, we have
\begin{align*}
    \Prob \left(|\samplemean_{i^*}(n) - \armmean_{i^*}| \geq  \sqrt{\frac{\log T}{\nn^2}} \right) &\leq \Prob \left(|\samplemean_{i^*}(n) - \armmean_{i^*}| \geq  \sqrt{\frac{\log T}{n}} \right) \\
    &\leq \frac{2}{T^2} ,
\end{align*}
where the first inequality follows because we are in the case $n > \nn^2$, and the second inequality follows by applying Lemma~\ref{lem:UCB-TL1} with the choice $\delta = \nicefrac{2}{T^2}$.
Taking a union bound over $n = c \nn, \ldots, T$ completes the proof.
\end{proof}

We will work on combinations of these events to prove Theorem~\ref{thm:SAE-UCB-reward-estimator} for the case of the SAE algorithm (Appendix~\ref{app:proof:thm:SAE-reward-estimator}) and the case of the UCB algorithm (Appendix~\ref{app:proof:thm:UCB-reward-estimator}).
\section{Sub-linear regret guarantees for UCB and SAE} \label{app:prop-regret}

For completeness, we provide a proof for Proposition~\ref{prop:SAE-UCB-regret}, which bounds the regret of the UCB and SAE algorithms. 

\begin{proposition}\label{prop:SAE-UCB-regret}
Recall that $\gap_i = \bestmean - \mu_i$. For any $T >\numarms$, Algorithm~\ref{alg:SAE-any-regret} and Algorithm~\ref{alg:UCB-any-regret} both incur regret
$
\Regret_T \leq \sum_{\aidx \neq i^*} \frac{32(T^\alpha - 1)}{\alpha \gap_\aidx}
$
with probability at least $1 - \frac{4\numarms}{T^3}$. 
\end{proposition}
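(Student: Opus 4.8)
The plan is to condition on the good event $\bigcap_{i \neq i^*} \Ecali_0$ from Definition~\ref{def:concentrationevents}, which by Lemma~\ref{lem:UCB-TL2} and a union bound over the (at most $K$) suboptimal arms holds with probability at least $1 - \nicefrac{2K}{T^3}$; I will absorb an extra factor into the stated $\nicefrac{4K}{T^3}$ to leave room for the boundary-case union bound below. On this event, every sample mean $\samplemean_i(n)$ stays within $C(n) = \sqrt{2(T^\alpha-1)/(\alpha n)}$ of $\armmean_i$ for all $n \le T$ and all arms, so the confidence intervals are valid simultaneously throughout the run. The regret decomposition $\Regret_T = \sum_{i \neq i^*}\gap_i n_{i,T}$ from Definition~\ref{def:regret} then reduces everything to bounding $n_{i,T}$ for each suboptimal arm.

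For \textbf{UCB} (Algorithm~\ref{alg:UCB-any-regret}), I would argue in the standard optimism-based way: suppose arm $i \neq i^*$ is pulled at round $t$, so its index $\samplemean_{i,t-1} + C_{i,t-1}$ is maximal, hence at least $\samplemean_{i^*,t-1} + C_{i^*,t-1} \ge \armmean_{i^*}$ by validity of the confidence bound for the optimal arm. Combined with $\samplemean_{i,t-1} \le \armmean_i + C_{i, t-1}$ this yields $\armmean_i + 2C_{i,t-1} \ge \armmean_{i^*}$, i.e. $2C(n_{i,t-1}) \ge \gap_i$. Solving $2\sqrt{2(T^\alpha-1)/(\alpha n)} \ge \gap_i$ gives $n_{i,t-1} \le 8(T^\alpha-1)/(\alpha \gap_i^2)$, so arm $i$ can be pulled at most $8(T^\alpha-1)/(\alpha\gap_i^2) + 1$ times; multiplying by $\gap_i$ and summing gives $\Regret_T \le \sum_{i\neq i^*} \big(8(T^\alpha-1)/(\alpha\gap_i) + \gap_i\big)$, which is at most $\sum_{i \neq i^*} 32(T^\alpha-1)/(\alpha\gap_i)$ since $\gap_i \le 1$ and $T^\alpha - 1 \ge \alpha \log T$ is bounded below (using $T > K$ and the convention at $\alpha = 0$) — the slack constant $32$ versus $8$ comfortably covers the $+\gap_i$ term and the $\alpha \to 0$ limiting case.

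For \textbf{SAE} (Algorithm~\ref{alg:SAE-any-regret}), the argument is similar but cleaner: on $\Ecali_0$, arm $i$ is never eliminated while $2C_{i,t_r} > \gap_i/\,(\text{const})$ — more precisely, once $4 C_{i,t_r} < \gap_i$ the true separation $\gap_i$ exceeds the total interval width, so the elimination test $\samplemean_{i,t} \le \samplemean_{\max}(t) - 2C_{i,t}$ must trigger (using that the surviving maximal arm has sample mean within $C$ of some $\armmean_j \le \armmean_{i^*}$, actually $=\armmean_{i^*}$ since $i^*$ is never eliminated on its own good event); conversely $i^*$ itself survives to the end. This bounds the epoch at which $i$ is dropped, hence $n_{i,T} \le O\big((T^\alpha-1)/(\alpha\gap_i^2)\big)$ with the same constant bookkeeping, plus the $O(T^\alpha)$ regret from the final commit phase when the surviving arm is $i^*$ (which contributes zero regret). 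Summing over arms and folding constants gives the claimed bound.

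The \textbf{main obstacle} is the constant bookkeeping around the $\alpha \to 0$ limit and the requirement $T > K$: one must verify that the crude bound $n_{i,T} \le 8(T^\alpha-1)/(\alpha\gap_i^2) + 1$ is genuinely dominated by $32(T^\alpha-1)/(\alpha\gap_i^2)$, i.e. that $(T^\alpha-1)/\alpha \ge$ some absolute constant, which holds because $(T^\alpha-1)/\alpha \to \log T$ and $T$ exceeds an absolute constant by the blanket assumption in the appendix. A secondary subtlety is making the "$i^*$ is never eliminated / never stops being the argmax" claim rigorous, which again follows directly from $\Ecalistar_0$ (the anytime concentration event for the optimal arm), so no new ideas are needed beyond Lemma~\ref{lem:UCB-TL2}.
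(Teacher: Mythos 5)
Your proposal is correct and takes essentially the same route as the paper: condition on the anytime concentration events $\Ecali_0$ (note the intersection must include the optimal arm $i^*$, since both your UCB optimism step and your SAE elimination step rely on the validity of $\samplemean_{i^*}$'s confidence bound, though this does not change the $1-\nicefrac{2K}{T^3}\ge 1-\nicefrac{4K}{T^3}$ count), bound $n_{i,T}$ by a constant multiple of $(T^\alpha-1)/(\alpha\Delta_i^2)$ for each suboptimal arm, and plug into the pseudo-regret decomposition. The paper packages the pull-count bound as Lemma~\ref{lem:saw-forward-numpulls-ub} (SAE, with the observation that $i^*$ is never eliminated) and Lemma~\ref{lem:UCB-L1} (UCB, showing the index of $i^*$ dominates once $n_{i,t}\ge 8\nn$), which is the same optimism/elimination argument and constant bookkeeping you describe.
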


For clarity, we prove it separately for the SAE and UCB algorithms in Sections~\ref{sec:prop-regret-sae} and~\ref{sec:prop-regret-ucb}, respectively. These proofs follow from straightforward modifications to classical results~\citep{even2006action, lattimore2020bandit}, and readers familiar with regret analysis are advised to skip to Sections~\ref{app:proof:thm:SAE-reward-estimator} and~\ref{app:proof:thm:UCB-reward-estimator} for novel analyses of our reward estimation procedures.

\subsection{Proof of Proposition~\ref{prop:SAE-UCB-regret} with SAE algorithm}\label{sec:prop-regret-sae}

We begin with a useful lemma whose proof is provided at the end of the subsection (see Section~\ref{sec:proof:lem:saw-forward-numpulls-ub}). Recall the definition of the scalar $\kappa_i$ from Equation~\eqref{eq:n0}.

\begin{lemma}\label{lem:saw-forward-numpulls-ub}
For the SAE algorithm, we have
\[
\numpulls_{\aidx, T} \leq 8 \nn,
\]
simultaneously for all suboptimal arms $i$ with probability at least $1 - \nicefrac{2K}{T^3}$.
Furthermore, on the same event, the optimal arm $i^*$ is never eliminated.
\end{lemma}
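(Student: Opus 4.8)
The plan is to run the standard ``clean event'' analysis of successive elimination. Define $\mathcal{G} := \bigcap_{i \in [K]} \mathcal{E}_0^{(i)}$, the intersection of the anytime concentration events from Definition~\ref{def:concentrationevents}. By Lemma~\ref{lem:UCB-TL2} each $\mathcal{E}_0^{(i)}$ fails with probability at most $\nicefrac{2}{T^3}$, so a union bound over the $K$ arms gives $\Prob(\mathcal{G}) \geq 1 - \nicefrac{2K}{T^3}$. It then suffices to show both conclusions hold \emph{deterministically} on $\mathcal{G}$; since $\numpulls_{i,T} \le T$ always, only the first $T$ cells of each reward tape are ever consulted, so the range $n = 1,\dots,T$ in $\mathcal{E}_0^{(i)}$ is enough. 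The one structural fact about SAE I will use repeatedly is that an arm active at the start of epoch $t_r$ has been pulled exactly $t_r$ times when the elimination check of that epoch runs, so its width there is $C(t_r) = \sqrt{\nicefrac{2(T^\alpha - 1)}{\alpha t_r}}$, i.e.\ the reward-tape width $C(n)$ at $n = t_r$.

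\textbf{Step 1 (the optimal arm survives).} Fix an epoch $t_r$ in which $i^*$ is still active and let $j$ be any other active arm in that epoch. On $\mathcal{E}_0^{(i^*)}$ we have $\samplemean_{i^*}(t_r) \ge \bestmean - C(t_r)$, while on $\mathcal{E}_0^{(j)}$ we have $\samplemean_{j}(t_r) \le \mu_j + C(t_r) \le \bestmean + C(t_r)$; subtracting gives $\samplemean_{\max}(t_r) - \samplemean_{i^*}(t_r) \le 2C(t_r)$. Hence the elimination condition of Line~\ref{alg:step:sae-check} in Algorithm~\ref{alg:SAE-any-regret}, namely $\samplemean_{i^*}(t_r) \le \samplemean_{\max}(t_r) - 2C(t_r)$, is never met for $i^*$, and induction over epochs shows $i^*$ stays active as long as $|\cS(t_r)| > 1$.

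\textbf{Step 2 (suboptimal arms are eliminated quickly).} Fix a suboptimal arm $i$, still active in epoch $t_r$. By Step 1 arm $i^*$ is also active there, so on $\mathcal{E}_0^{(i^*)}$ and $\mathcal{E}_0^{(i)}$ respectively, $\samplemean_{\max}(t_r) \ge \samplemean_{i^*}(t_r) \ge \bestmean - C(t_r) = \mu_i + \Delta_i - C(t_r)$ and $\samplemean_i(t_r) \le \mu_i + C(t_r)$, whence $\samplemean_{\max}(t_r) - \samplemean_i(t_r) \ge \Delta_i - 2C(t_r)$. If in addition $C(t_r) \le \Delta_i/4$, the right-hand side is at least $2C(t_r)$ and arm $i$ is eliminated at the end of epoch $t_r$. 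Since $C(t_r)$ is decreasing in $t_r$ and $C(t_r) \le \Delta_i/4 \iff t_r \ge \nicefrac{32(T^\alpha - 1)}{\alpha \Delta_i^2} = 8\nn$ (using the definition of $\nn$ in \eqref{eq:n0}), arm $i$ cannot survive past epoch $8\nn$, giving $\numpulls_{i,T} \le 8\nn$ up to the harmless rounding of $8\nn$ to the next integer. All of this occurs on the single event $\mathcal{G}$, so the bound on $\numpulls_{i,T}$ holds for all suboptimal arms simultaneously, and $i^*$ is never eliminated, with probability at least $1 - \nicefrac{2K}{T^3}$.

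\textbf{Where the work is.} The argument is routine; the only point requiring care is the ordering of the two steps --- the fast-elimination estimate of Step 2 uses $\samplemean_{\max}(t_r) \ge \samplemean_{i^*}(t_r)$, which is legitimate only because Step 1 first certifies that $i^*$ is in the active set at \emph{every} epoch, so the two claims must be proved together. A secondary subtlety, handled precisely by using the anytime (rather than fixed-$n$) form of $\mathcal{E}_0^{(i)}$, is that the concentration bounds are being invoked at the data-dependent cell indices $t_r$.
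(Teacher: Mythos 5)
Your proof is correct and follows essentially the same route as the paper: work on the event $\bigcap_{i=1}^K \Ecali_0$ (union bound giving probability $1 - \nicefrac{2K}{T^3}$), argue that the elimination test can never fire against $i^*$ at any epoch, and argue that once $C(t_r) \le \Delta_i/4$, i.e.\ $t_r \ge 8\kappa_i$, a still-active suboptimal arm must satisfy the elimination condition, using the epoch index as the reward-tape index exactly as the paper does. The only cosmetic point is that in your Step 1 the displayed bound $\samplemean_{\max}(t_r) - \samplemean_{i^*}(t_r) \le 2C(t_r)$ should be noted to be strict whenever the maximum is attained by some $j \neq i^*$ (since $\mu_j < \mu^*$ by uniqueness of the best arm), which is what rules out the boundary case of the test ``$\samplemean_{i^*,t} \le \samplemean_{\max}(t) - 2C_{i^*,t}$''; this is precisely the strictness the paper extracts in its Equation~\eqref{eq:notelimrewardtape}.
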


With this lemma in hand, the proof of Proposition~\ref{prop:SAE-UCB-regret} for the SAE algorithm follows immediately.

\begin{proof}[Proof of Proposition~\ref{prop:SAE-UCB-regret},  SAE]
By the definition of pseudo-regret, we have
\begin{align*}
\Regret_T & = T\bestmean - \sum_{t=1}^T \reward_{t, \action_t} \\
&= \sum_{\aidx \in \aset}\gap_\aidx \numpulls_{\aidx, T}\\
&\leq \sum_{\aidx \in \aset} 8\nn \gap_\aidx \defeq \frac{32(T^\alpha-1)}{\alpha \Delta_\aidx},
\end{align*}
where the final inequality holds with probability at least $1 - \nicefrac{2K}{T^3}$ by applying Lemma~\ref{lem:saw-forward-numpulls-ub}.
This completes the proof.
\end{proof}

\subsubsection{Proof of Lemma~\ref{lem:saw-forward-numpulls-ub}}\label{sec:proof:lem:saw-forward-numpulls-ub}

Throughout this proof, we work on the event $\cap_{i=1}^K \Ecali_0$, which we showed in Lemma~\ref{lem:UCB-TL2} holds with probability at least $1 - \nicefrac{2K}{T^3}$.
Recall the definition of $\nn$ from Equation~\eqref{eq:n0}.
It is easy to verify that $C(8\nn) \leq \nicefrac{\Delta_i}{4}$.
Consequently, we have
\begin{align}\label{eq:armiSAE}
    \samplemean_i(8\nn) + C(8\nn) \leq \armmean_i + 2C(8\nn) \leq \armmean_i + \frac{\Delta_i}{2} .
\end{align}

Similarly, for the optimal arm $i^*$ we have
\begin{align}\label{eq:arm1SAE}
    \samplemean_{i^*}(8\nn) - C(8\nn) \geq \armmean_{i^*} - 2C(8\nn) \geq \armmean_{i^*} - \frac{\Delta_i}{2} .
\end{align}

On the other hand, we have $ \samplemean_i(n) - C(n) \leq \armmean_i \text{ and } \samplemean_{i^*}(n) + C(n) \geq \armmean_{i^*}$ for every $n = 1,\ldots,T$ and every $i \in [K]$. 
Because $\armmean_{i^*} > \armmean_i$, this yields
\begin{align}\label{eq:notelimrewardtape}
    2C(n) \geq \armmean_{i^*} - \samplemean_{i^*}(n) + \samplemean_i(n) - \armmean_i > \samplemean_i(n) - \samplemean_{i^*}(n),
\end{align}
for every $n = 1,\ldots,T$ and every $i \in [K] \setminus i^*$.
Equation~\eqref{eq:notelimrewardtape} guarantees that arm $i^*$ remains active throughout, as claimed.

To complete the proof, we show that each arm $i \neq i^*$ is eliminated by the time we arrive at epoch $8\nn$. Denote by $\tend(\tepoch)$ the (random) last round of epoch $\tepoch$.
If arm $i$ has already been eliminated in an epoch preceding epoch $8\nn$, we are done. Otherwise, since arm $i^*$ is always active, combining Equations~\eqref{eq:armiSAE} and~\eqref{eq:arm1SAE} gives us
\begin{align*}
    2C_{i,\tend(8\nn)} &\leq \armmean_i + \frac{\Delta_i}{2} - \armmean_{i^*} + \frac{\Delta_i}{2} + \samplemean_{i^*,\tend(8\nn)} - \samplemean_{i,\tend(8\nn)} \\
    &= \samplemean_{i^*,\tend(8\nn)} - \samplemean_{i,\tend(8\nn)} \\
    &\leq \samplemean_{\max}(\tend(8\nn)) - \samplemean_{i,\tend(8\nn)} .
\end{align*}

In summary, the condition for arm $i$ to be eliminated is met by epoch at most $8\nn$, directly implying that $n_{i,T} \leq 8\nn$.
This completes the proof.
\qed

\subsection{Proof of Proposition~\ref{prop:SAE-UCB-regret} for UCB}\label{sec:prop-regret-ucb}

The structure of this proof is identical to the SAE case. Recall the definition of the scalar $\nn$ from Equation~\eqref{eq:n0}.





\begin{lemma}\label{lem:UCB-L1}
For the UCB algorithm, we have
\[
\numpulls_{\aidx, T} \leq 8 \nn
\]
for a suboptimal arm $i$ with probability at least $1 - \nicefrac{4}{T^3}$.
\end{lemma}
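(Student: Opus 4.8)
The plan is to reproduce the classical instance-dependent UCB analysis, but phrased through the reward-tape events of Definition~\ref{def:concentrationevents}. I would work on the intersection of the two anytime concentration events $\Ecali_0$ and $\Ecalistar_0$ (the latter being $\Ecali_0$ with $i$ replaced by the optimal arm $i^*$). By Lemma~\ref{lem:UCB-TL2} each occurs with probability at least $1 - \nicefrac{2}{T^3}$, so by a union bound $\Ecali_0 \cap \Ecalistar_0$ holds with probability at least $1 - \nicefrac{4}{T^3}$; the rest of the argument is deterministic given this event.

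The core step is to bound $n_{i,t-1}$ at any round $t > K$ (i.e.\ after the initialization round in which $C_{i,0} = \infty$ forces one pull of each arm) at which Algorithm~\ref{alg:UCB-any-regret} pulls arm $i$. The selection rule gives
\[
\samplemean_i(n_{i,t-1}) + C(n_{i,t-1}) \;\geq\; \samplemean_{i^*}(n_{i^*,t-1}) + C(n_{i^*,t-1}).
\]
On $\Ecalistar_0$ the right-hand side is at least $\armmean_{i^*}$, and on $\Ecali_0$ the left-hand side is at most $\armmean_i + 2C(n_{i,t-1})$, so $2C(n_{i,t-1}) \geq \armmean_{i^*} - \armmean_i = \Delta_i$. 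Substituting $C(n) = \sqrt{\nicefrac{2(T^\alpha - 1)}{\alpha n}}$ and solving for $n_{i,t-1}$ yields $n_{i,t-1} \leq \nicefrac{8(T^\alpha - 1)}{\alpha \Delta_i^2} = 2\kappa_i$, recalling the definition of $\kappa_i$ in Equation~\eqref{eq:n0}.

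Finally, since every post-initialization pull of arm $i$ happens while its prior pull count is at most $2\kappa_i$, the count can never exceed $2\kappa_i + 1$; adding the single initialization pull and using $\kappa_i \geq 1$ (which holds since $\nicefrac{(T^\alpha-1)}{\alpha} \geq \log T \geq 1$ and $\Delta_i \leq 1$) gives $n_{i,T} \leq 2\kappa_i + 2 \leq 8\kappa_i$ on the event above, completing the proof. I do not expect a genuine obstacle here: this is the textbook UCB argument, and the factor-$8$ slack in the statement comfortably absorbs the initialization round and any off-by-one effects. The only point requiring mild care is bookkeeping the confidence width at index $n_{i,t-1}$ (the count \emph{before} the pull) rather than $n_{i,t}$, and ensuring the two-sided event $\Ecali_0$ is invoked for arm $i$ while only the lower-tail half of $\Ecalistar_0$ is needed for $i^*$.
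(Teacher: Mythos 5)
Your proposal is correct and follows essentially the same argument as the paper: both work on the event $\Ecali_0 \cap \Ecalistar_0$ (probability at least $1 - \nicefrac{4}{T^3}$ by Lemma~\ref{lem:UCB-TL2}) and use the anytime concentration of both arms together with the UCB selection rule; you phrase it as a per-pull bound $2C(n_{i,t-1}) \geq \Delta_i$ while the paper states the contrapositive (once $n \geq 8\nn$, arm $i^*$'s upper confidence bound dominates arm $i$'s forever after). Your bookkeeping even yields the slightly tighter $n_{i,T} \leq 2\nn + 2$, which comfortably implies the stated $8\nn$.
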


As with the SAE case, the proof of Proposition~\ref{prop:SAE-UCB-regret} follows immediately from this lemma.
The steps are exactly identical and we omit them for brevity.
We conclude this section by proving Lemma~\ref{lem:UCB-L1}.




\subsubsection{Proof of Lemma~\ref{lem:UCB-L1}}\label{app:proof:ucb-L1}

Throughout this proof, we work on the event $ \Ecali_0 \cap \Ecalistar_0$, which we showed in Lemma~\ref{lem:UCB-TL2} holds with probability at least $1 - \nicefrac{4}{T^3}$.
Since $C(n) \leq \nicefrac{\Delta_i}{4}$ for all $n \geq 8\nn$, we have
\begin{align}\label{eq:armiUCB}
    \samplemean_i(n) + C(n) \leq \armmean_i + 2C(n) \leq \armmean_i + \frac{\Delta_i}{2} = \armmean_{i^*} - \frac{\Delta_i}{2}
\end{align}
for all $n \geq 8\nn$.

On the other hand, for the optimal arm $i^*$ we have
\begin{align}\label{eq:arm1UCB}
    \samplemean_{i^*}(n) + C(n) \geq \armmean_{i^*} 
\end{align}
for all $n = 1,\ldots,T$.
Denote by $\tend(8\nn)$ the (random) earliest round after which arm $i$ was pulled for the $8\nn$-th time.
If no such round exists, then we are done.
Otherwise, combining Equations~\eqref{eq:armiUCB} and~\eqref{eq:arm1UCB} gives us
\begin{align*}
    \samplemean_{i^*,t} + C_{i^*,t} > \samplemean_{i,t} + C_{i,t} 
\end{align*}
for all $t \geq \tend(8\nn)$.
Thus, we have shown that the upper-confidence bound of arm $i^*$ dominates the upper confidence bound of arm $i$ for all rounds $t \geq \tend(8\nn)$, implying that arm $i$ is never pulled thereafter.
This directly gives us $n_{i,T} = n_{i,\tend(8\nn)} \leq 8\nn$, which completes the proof for each suboptimal arm $i$.
\qed


\section{Proof of Theorem~\ref{thm:SAE-UCB-reward-estimator} for SAE}\label{app:proof:thm:SAE-reward-estimator}

In this section, we provide the proof of Theorem~\ref{thm:SAE-UCB-reward-estimator} for the case of the SAE algorithm.
Recall that we need to bound the estimation error $|\estmean_\aidx - \armmean_\aidx|$, and recall the notation $\nn \defeq \nicefrac{4(T^\alpha - 1)}{\alpha \Delta_i^2}$ from Equation~\eqref{eq:n0}. 
This proof will follow as a series of deterministic statements working on the high-probability event
\begin{align}\label{eq:eventc}
    \EcalSAE \defeq \bigcap_{i=1}^K \left( \Ecali_0 \cap \Ecali_2  \right) \cap \Ecali_1 \cap \left(\cap_{j \in [K]}\Ecalij_1\right).
\end{align}
Lemma~\ref{lem:UCB-TL2} together with an application of the union bound ensures that the event $\EcalSAE$ holds with probability at least $1 - \nicefrac{2K}{T^3} - \nicefrac{1}{2\nn} - \nicefrac{2K}{T^2}$.

First, we claim that on the event $\EcalSAE$ and under our assumption that $T \geq 32 \sum_{i \neq i^*} \nn$, we have $\hat{\imath} = i^*$.
In order to see this, note that on the event $\cap_{i=1}^K \Ecali_0$ we may apply the statement of Lemma~\ref{lem:saw-forward-numpulls-ub} to conclude that $\numpulls_{\aidx, T} \leq 8\nn$ simultaneously for
all suboptimal arms $\aidx \neq i^*$.
Consequently, we have
\[
n_{i^*, T} = T - \sum_{i \neq i^*} n_{i, T} \geq 8 \sum_{i \neq i^*} \kappa_i > \max_{i \neq i^*} n_{i, T}.
\]

Next, we consider the estimation error $|\estmean_\aidx - \armmean_\aidx|$ for any suboptimal arm $\aidx$. Recall that $\tau_\aidx$ is the round at which suboptimal arm $\aidx \in \cA$ is eliminated (Equation~\eqref{eq:switchingroundSAE}).
Since we identified the optimal arm, i.e. $\hat{\imath} = i^*$, we have $\tau_i < T$.
Further, since arm $\aidx$ is eliminated at round $\tau_\aidx$, we have
\begin{align}\label{eq:armieliminated}
    2\ci_{\aidx, \tau_{\aidx}} \leq \samplemean_{\max }(\tau_\aidx) - \samplemean_{\aidx, \tau_\aidx}.
\end{align}

On the other hand, denote by $\tau'_i$ the \emph{penultimate} round on which arm $i$ is pulled. Since arm $i$ is still active during this round, we have  
\begin{align}\label{eq:armiactive}
      2\ci_{\aidx, \tau'_{\aidx}} > \samplemean_{\max }(\tau'_\aidx) - \samplemean_{\aidx, \tau'_\aidx}.
\end{align}
%

Note that $\numpulls_{i,\tau'_\aidx} = \numpulls_{i,\tau_\aidx}-1 = \numpulls_{i,T} - 1$.
Therefore, we have
\begin{align*}
    2\ci_{\aidx, \tau_\aidx}  = 2\sqrt{\frac{T^\alpha-1}{\alpha \numpulls_{\aidx, T}}} &\geq  2\sqrt{\frac{T^\alpha-1}{\alpha (\numpulls_{\aidx, T}-1)}} - 4\sqrt{\frac{T^\alpha-1}{\alpha}} \cdot (\numpulls_{\aidx, T} - 1)^{-\nicefrac{3}{2}}\\
    &= 2\ci_{i,\tau_i'} -  4\sqrt{\frac{T^\alpha-1}{\alpha}} \cdot (\numpulls_{\aidx, T} - 1)^{-\nicefrac{3}{2}} \\
    &> \samplemean_{\max}(\tau'_{\aidx}) - \samplemean_{\aidx, \tau'_{\aidx}} -  4\sqrt{\frac{T^\alpha-1}{\alpha}} \cdot (\numpulls_{\aidx, T} - 1)^{-\nicefrac{3}{2}}.
\end{align*}
Above, the first inequality follows from the fact that $\frac{1}{\sqrt{x}} - \frac{1}{\sqrt{x+1}} \leq 2 x^{-\nicefrac{3}{2}}$ for any $x \geq 1$, and the second inequality is a direct substitution of Equation~\eqref{eq:armiactive}.

Furthermore, we obtain
\[
\samplemean_{\max}(\tau'_{\aidx}) - \samplemean_{\aidx, \tau'_{\aidx}} \geq \samplemean_{\max}(\tau_{\aidx}) - \samplemean_{\aidx, \tau_{\aidx}} - \frac{2}{\numpulls_{\aidx, T}},
\]
as a consequence of the rewards being bounded between $[0, 1]$. Therefore, we have 
\begin{equation}\label{eq:est-sae-useful-1}
    (\samplemean_{\max}(\tau_{\aidx}) - \samplemean_{\aidx, \tau_{\aidx}})  - 2\ci_{\aidx, \tau_\aidx} \leq  \frac{2}{\numpulls_{\aidx, T}} + 4\sqrt{\frac{T^\alpha-1}{\alpha}} \cdot (\numpulls_{\aidx, T} - 1)^{-\nicefrac{3}{2}}.
\end{equation}
Proceeding now to the error term of interest, we have
\begin{align}\label{eq:est-sae-nice-1}
\begin{split}
    |\estmean_\aidx - \armmean_\aidx| &= |2C_{\aidx,\tau_\aidx} - (\bestmean - \armmean_\aidx)| \\
&\leq |2C_{\aidx,\tau_\aidx}  - (\samplemean_{\max}(\tau_\aidx) - \samplemean_{\aidx,\tau_\aidx})| + |\samplemean_{\max}(\tau_\aidx) - \samplemean_{\aidx,\tau_\aidx} - (\bestmean - \armmean_\aidx) |\\
&\leq \frac{2}{\numpulls_{\aidx, T}} + 4\sqrt{\frac{T^\alpha-1}{\alpha}} \cdot (\numpulls_{\aidx, T} - 1)^{-\nicefrac{3}{2}} +|\samplemean_{\max}(\tau_\aidx) - \bestmean| + |\samplemean_{\aidx,\tau_\aidx} - \armmean_\aidx| \\
&\leq \frac{2}{\numpulls_{\aidx, T}} + 2\sqrt{\nn} \cdot (\numpulls_{\aidx, T} - 1)^{-\nicefrac{3}{2}} +|\samplemean_{\max}(\tau_\aidx) - \bestmean| + |\samplemean_{\aidx,\tau_\aidx} - \armmean_\aidx|,
\end{split}
\end{align} 
where the first inequality follows from triangle inequality and rearranging terms, and the second inequality follows from Equation~\eqref{eq:est-sae-useful-1} and noting that $|2C_{\aidx,\tau_\aidx}  - (\samplemean_{\max}(\tau_\aidx) - \samplemean_{\aidx,\tau_\aidx})| = (\samplemean_{\max}(\tau_\aidx) - \samplemean_{\aidx,\tau_\aidx}) - 2C_{\aidx,\tau_\aidx}$ as a consequence of Equation~\eqref{eq:armieliminated}.

It remains to bound the sample-mean deviations $|\samplemean_{\max}(\tau_\aidx) - \bestmean|$ and $|\samplemean_{\aidx,\tau_\aidx} - \armmean_\aidx|$.
Towards that end, we require two technical lemmas, stated below and proved at the end of this section.
%
The first lemma bounds the deviation of the sample mean for each arm.

\begin{lemma} \label{lem:mu1-taui-sae}
Fix a suboptimal arm $i$. On the event $\EcalSAE$, there are universal positive constants $C, C_1, C_2$ such that for any arm $j \in [K]$ that remains active at round $\tau_i$, we have
\begin{equation*}
|\samplemean_{j,\tau_\aidx} - \armmean_j| < C \sqrt{\frac{\log (\nn\sqrt{K})}{\nn}}.
\end{equation*}
\end{lemma}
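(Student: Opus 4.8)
The plan is to run the whole argument on the high-probability event $\EcalSAE$ of Equation~\eqref{eq:eventc}. On this event we already know, from the main proof, that $\hat{\imath}=i^*$ and $\tau_\aidx<T$; hence arm $i$ is genuinely eliminated --- at the end of the epoch in which it is pulled for the last time, call its length $n:=\numpulls_{i,\tau_\aidx}$ --- and is active throughout that epoch, so $\numpulls_{i,\tau_\aidx}=\numpulls_{i,T}\le 8\nn$ by Lemma~\ref{lem:saw-forward-numpulls-ub}. Since SAE pulls every active arm exactly once per epoch in round-robin, any arm $j$ that is still active at round $\tau_\aidx$ has by then been pulled either $n$ or $n-1$ times; this already yields $\numpulls_{j,\tau_\aidx}\le 8\nn$ and reduces the task to a matching lower bound, $\numpulls_{i,\tau_\aidx}\ge c\,\nn$ for a universal $c>0$.

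That lower bound is the crux. Because arm $i$ is eliminated at round $\tau_\aidx$, the elimination rule forces $\samplemean_i(n)\le\samplemean_{\max}(n)-2C(n)$, where $C(n)=\sqrt{2(T^\alpha-1)/(\alpha n)}$ and $\samplemean_{\max}(n)$ is the largest sample mean among the arms active in that epoch. I would control both sides using the tightened concentration events $\mathcal{E}_2^{(k)}$, $k\in[K]$, bundled into $\EcalSAE$: they give $\samplemean_i(n)\ge\armmean_i-\sqrt{3/4}\,C(n)$, and --- since $i^*$ is never eliminated (Lemma~\ref{lem:saw-forward-numpulls-ub}), so every active arm has mean at most $\bestmean$ --- also $\samplemean_{\max}(n)\le\bestmean+\sqrt{3/4}\,C(n)$. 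Substituting into the elimination inequality collapses it to $\gap_i\ge(2-\sqrt3)\,C(n)$, and since $C(n)$ is explicit this rearranges to $n\ge\tfrac{(2-\sqrt3)^2}{2}\,\nn\ge\nn/32$. Combined with the first paragraph, $\nn/64\le\numpulls_{j,\tau_\aidx}\le 8\nn$ for every arm $j$ active at $\tau_\aidx$, as soon as $\nn$ exceeds an absolute constant --- and if it does not, the asserted bound is vacuous since $|\samplemean_{j,\tau_\aidx}-\armmean_j|\le 1$.

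To finish, I would plug these pull counts into the small-sample events: apply $\Ecali_1$ when $j=i$ and $\Ecalij_1$ when $j\ne i$ (both contained in $\EcalSAE$ and stated for all pull counts in $\{1,\dots,8\nn\}$). Evaluated at $\numpulls_{j,\tau_\aidx}\in[\nn/64,\,8\nn]$ this gives $|\samplemean_{j,\tau_\aidx}-\armmean_j|=|\samplemean_j(\numpulls_{j,\tau_\aidx})-\armmean_j|\le\sqrt{\log(8\nn\sqrt K)/\numpulls_{j,\tau_\aidx}}\le 8\sqrt2\,\sqrt{\log(8\nn\sqrt K)/\nn}$, and absorbing the constant $\log 8$ into the logarithm (using $\nn\sqrt K\ge e$) produces $|\samplemean_{j,\tau_\aidx}-\armmean_j|\le C\sqrt{\log(\nn\sqrt K)/\nn}$ for a universal $C$, exactly the claim.

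The step I expect to be the real obstacle is the lower bound $\numpulls_{i,\tau_\aidx}\gtrsim\nn$. It is precisely where the \emph{tightened} deviation level $\sqrt{3/4}\,C(n)$ of the events $\mathcal{E}_2^{(k)}$ is essential: since $\sqrt{3/4}<1$, the spurious gap in sample means that pure noise can manufacture between two active arms is at most $2\sqrt{3/4}\,C(n)<2C(n)$, so a residual margin $(2-\sqrt3)\,C(n)$ must be supplied by the true gap $\gap_i$ before the elimination test can fire, which forces $n$ to be of order $\nn$. One must also be a little careful that $\samplemean_{\max}(n)$ is genuinely a maximum over currently active arms (so their means are at most $\bestmean$) and that $i^*$ is among them; the remainder is routine bookkeeping with the round-robin schedule and with the index ranges over which the concentration events are declared.
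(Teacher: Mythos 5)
Your proof is correct and follows essentially the same route as the paper: bound the pull counts at round $\tau_i$ between $c\,\nn$ and $8\nn$ and then invoke the small-sample concentration events $\Ecali_1$ and $\Ecalij_1$ at $n_{j,\tau_i}$. The only difference is organizational — you re-derive the lower bound $n_{i,\tau_i}\gtrsim \nn$ inline, in direct rather than contrapositive form (with a marginally better constant), whereas the paper delegates it to Lemma~\ref{lem:mui-taui-sae}, which rests on the same $\Ecali_2$ events and the same $2-\sqrt{3}$ margin.
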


The second lemma provides a high-probability lower bound on $\numpulls_{\aidx, \tau_i}$, which is significantly more intricate than the typical lower bound on $\E[\numpulls_{\aidx,\tau_i}]$.
\begin{lemma} \label{lem:mui-taui-sae}
There is a universal constant $c > 0$ such that on the event $\EcalSAE$, we have $n_{i,T} \geq c \nn$.
\end{lemma}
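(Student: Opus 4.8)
The plan is to show that arm $i$ cannot be eliminated until it has been pulled on the order of $\kappa_i$ times, because until then the confidence width still exceeds a constant fraction of the gap $\Delta_i$ and, on the concentration event, the elimination rule of Algorithm~\ref{alg:SAE-any-regret} cannot fire. Throughout I work on the event $\EcalSAE$, which contains the intersections $\bigcap_{i=1}^K\Ecali_0$ and $\bigcap_{i=1}^K\Ecali_2$. By Lemma~\ref{lem:saw-forward-numpulls-ub}, established on $\bigcap_{i=1}^K\Ecali_0 \subseteq \EcalSAE$, the optimal arm $i^*$ is never eliminated; and, as argued earlier in this proof, arm $i$ \emph{is} eliminated, at some round $\tau_i < T$. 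Since every active arm is pulled exactly once per epoch in Algorithm~\ref{alg:SAE-any-regret} and arm $i$ is never pulled after $\tau_i$, the number $s := n_{i,T} = n_{i,\tau_i}$ equals the epoch index at which arm $i$ is eliminated, and in reward-tape notation $C_{i,\tau_i} = C(s) = \sqrt{2(T^\alpha-1)/(\alpha s)}$.

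Next I convert the elimination event into a lower bound on $s$. At epoch $s$ the elimination rule fires for arm $i$, so $\samplemean_i(s) \leq \samplemean_{\max}(s) - 2C(s)$, where $\samplemean_{\max}(s)$ is the largest sample mean among the arms active at that epoch. On $\bigcap_{i=1}^K\Ecali_2$, taking $n = s \leq T$, every arm $j$ satisfies $|\samplemean_j(s) - \armmean_j| \leq \sqrt{3/4}\,C(s)$; since $\armmean_j \leq \armmean_{i^*}$ for every active arm $j$, this yields $\samplemean_{\max}(s) \leq \armmean_{i^*} + \sqrt{3/4}\,C(s)$, and it also yields $\samplemean_i(s) \geq \armmean_i - \sqrt{3/4}\,C(s)$. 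Substituting both bounds into the elimination inequality gives $\armmean_i - \sqrt{3/4}\,C(s) \leq \armmean_{i^*} + \sqrt{3/4}\,C(s) - 2C(s)$, i.e. $\Delta_i = \armmean_{i^*} - \armmean_i \geq (2 - 2\sqrt{3/4})\,C(s) = (2-\sqrt{3})\,C(s)$. Squaring and using $C(s)^2 = 2(T^\alpha-1)/(\alpha s)$ gives $s \geq 2(2-\sqrt{3})^2(T^\alpha - 1)/(\alpha\Delta_i^2) = \tfrac{(2-\sqrt{3})^2}{2}\,\kappa_i$, so the lemma holds with the universal constant $c = (2-\sqrt{3})^2/2 = (7-4\sqrt{3})/2 > 0$. (Integrality of $s$ is irrelevant, as we only lower-bound it.)

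The one step that genuinely needs care — and the reason the \emph{tighter} event $\Ecali_2$ (with the factor $\sqrt{3/4}$), rather than merely the anytime event $\Ecali_0$ (with factor $1$), appears in $\EcalSAE$ — is the slack in these concentration bounds: with factor $1$ throughout, the chained inequality would collapse to the vacuous $\Delta_i \geq 0$. It is precisely the positive gap $1 - \sqrt{3/4}$ that yields a nontrivial lower bound on the survival time $s$. A minor secondary subtlety is the convention that $\samplemean_{\max}(s)$ is a maximum over the currently \emph{active} arms, so $\armmean_j \leq \armmean_{i^*}$ is invoked only for $j$ in the active set; since $i^*$ is itself active on $\EcalSAE$ this costs nothing, and it automatically covers the case where $\samplemean_{\max}(s)$ is attained by a suboptimal arm other than $i$.
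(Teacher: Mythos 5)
Your proof is correct and is essentially the paper's own argument run in the contrapositive direction: the paper shows, on the same event $\Ecali_2$ and via the same $2-\sqrt{3}$ slack between the elimination threshold $2C(n)$ and the concentration width $\sqrt{3}\,C(n)$, that the elimination rule cannot fire while $n \lesssim \kappa_i$, whereas you evaluate the rule at the epoch $s$ where it does fire and invert $\Delta_i \geq (2-\sqrt{3})C(s)$ to get $s \gtrsim \kappa_i$ (with a marginally better constant). Your additional premise that arm $i$ is in fact eliminated before round $T$ rests on the surrounding context ($\hat{\imath}=i^*$, Lemma~\ref{lem:saw-forward-numpulls-ub}, and the theorem's assumption on $T$), which is exactly the same implicit reliance on a long enough horizon that the paper's version makes when it concludes that arm $i$ is pulled for that many epochs, so this is not a gap.
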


Having stated these technical lemmas, we now use them to complete the proof of Theorem~\ref{thm:SAE-UCB-reward-estimator} for SAE. 
First, Lemma~\ref{lem:mu1-taui-sae} applied to the case $j = i$ directly gives us
\begin{align*}
    |\samplemean_{i,\tau_\aidx} - \armmean_i| < C \sqrt{\frac{\log (\nn\sqrt{K})}{\nn}}.
\end{align*}
Next, we again use Lemma~\ref{lem:mu1-taui-sae} to bound $|\samplemean_{\max}(\tau_\aidx) - \bestmean|$.
We denote by $i_{\max} \in [K]$ the arm index such that $\samplemean_{i_{\max}} = \samplemean_{\max}(\tau_\aidx)$.
On one hand, we have
\begin{align*}
    \samplemean_{\max}(\tau_\aidx) - \bestmean &\leq \samplemean_{i_{\max}} - \armmean_{i_{\max}} \\
    &< C \sqrt{\frac{\log (\nn\sqrt{K})}{\nn}}
\end{align*}
where the last step follows from Lemma~\ref{lem:mu1-taui-sae}.
On the other hand, we have
\begin{align*}
    \bestmean - \samplemean_{\max}(\tau_\aidx) &\leq \bestmean - \samplemean_{i^*} \\
    &< C \sqrt{\frac{\log (\nn\sqrt{K})}{\nn}}
\end{align*}
where, again, the last step follows from Lemma~\ref{lem:mu1-taui-sae}.

Proceeding from equation~\eqref{eq:est-sae-nice-1} and applying the inequalities established above, we have 
we have
\begin{align*}
    |\estmean_\aidx - \armmean_\aidx| &\leq \frac{C}{\nn} + C' \sqrt{\frac{\log (\nn\sqrt{K})}{\nn}}
\end{align*}
on the event $\EcalSAE$.
Taking an expectation to include the complement of $\EcalSAE$, we have
\begin{align*}
    \E |\estmean_\aidx - \armmean_\aidx| \leq C \sqrt{\frac{\log (\nn\sqrt{K})}{\nn}} + \frac{C_1}{\nn} + \frac{C_2 K}{T^2} \leq C'' \sqrt{\frac{\log (\nn\sqrt{K})}{\nn}}.
\end{align*}
We have used that $T^2/K \gtrsim \sqrt{\kappa_i}$ in stating the second inequality.

To complete the proof of the theorem, note that the proof of Proposition~\ref{prop:SAE-UCB-regret} (see Lemma~\ref{lem:saw-forward-numpulls-ub}) yields
$\E[n_{i,T}] \leq c' \nn$ for some positive constant $c' > 0$.
Since the map $x \mapsto \log x / x$ is decreasing, we obtain
\begin{align*}
    \E |\estmean_\aidx - \armmean_\aidx| &\leq C'' \sqrt{\frac{\log (\E[n_{i,T}\sqrt{K})]}{\E[n_{i,T}]}}
\end{align*}
for some adjusted constant $C''$.
This completes the proof of the first part of the theorem. The second part follows directly from Lemma~\ref{lem:mui-taui-sae}.
%
\qed

\subsection{Proof of Lemma~\ref{lem:mu1-taui-sae}}
As detailed at the beginning of Appendix~\ref{app:proof:thm:SAE-reward-estimator}, working on the event $\EcalSAE$ guarantees that $n_{i,T} \leq 8\nn$ for each suboptimal arm $i$ and that arm $i^*$ remains active throughout.
In addition, because event $\EcalSAE$ holds, we have that $\Ecali_1 \cap (\cup_{j \in \cS_i} \Ecalij_1) $ (defined in Equations~\eqref{eq:eventi1} and~\eqref{eq:eventistar1}, and $\cS_i$ denotes the set of arms that remain active in the first $8\nn$ rounds) holds.
This gives us
\begin{align}\label{eq:event1sup}
   \sup_{1 \leq \numpulls \leq 8\nn} \sqrt{\numpulls} |\samplemean_{k}(\numpulls) - \armmean_k| \leq \sqrt{\log (8\nn\sqrt{K})}
\end{align}
where $k$ can denote either the optimal arm $i^*$ or any suboptimal arm $j$ that remains active until round $\tau_i$.
Finally, note by the definition of $\tau_i$ that for all arms $j \in [K]$ that are active, we have $n_{j,\tau_i} = n_{i,\tau_i} = n_{i,T} \leq 8\nn$, which ensures that $n_{j,\tau_i}$ lies in the required range $\{1,\ldots,8\nn\}$ to apply Equation~\eqref{eq:event1sup}.
Moreover, by Lemma~\ref{lem:mui-taui-sae} (which also holds on event $\EcalSAE$) we have $n_{i,\tau_i} \geq c \nn$ for some constant $c > 0$.
Combining this with Equation~\eqref{eq:event1sup} yields
\[
|\samplemean_{j,\tau_\aidx} - \armmean_j| < \sqrt{\frac{\log (8\nn\sqrt{K})}{c\nn}}
\]
for all active arms $j \in [K]$.
This completes the proof.
\qed
%
%
%

\subsection{Proof of Lemma~\ref{lem:mui-taui-sae}}

Because the event $\EcalSAE$ holds, we have that $\cap_{i=1}^K \Ecali_2$ (defined in Equation~\eqref{eq:eventi2}) holds, which guarantees that \begin{align}\label{eq:rewardtapeevent}
    |\samplemean_i(n) - \armmean_i| \leq \sqrt{\frac{3}{4}} C(n) \text{ for all } n = 1,\ldots, T \text{ and all } i \in [K] .
\end{align}
For only this proof, we let $c \defeq (2 - \sqrt{3})^2$ for brevity.
It is easy to verify that $\Delta_i = c \cdot C\left(\frac{c^2 \nn}{2}\right)$.
By the triangle inequality, we have
\begin{align}\label{eq:triangleinequality}
\begin{split}
    |\samplemean_{i'}(n) - \samplemean_{i}(n)| &\leq |\samplemean_{i'}(n)-\armmean_{i'}| + \gap_i + |\samplemean_i(n)-\armmean_i| \\
    &\leq 2C(n)
\end{split}
\end{align}
for all $n \leq \frac{c^2 \nn }{2}$ and every $i \neq i'$. In other words, provided the number of pulls of arm $i$ does not exceed $\frac{c^2 \nn }{2}$, the condition for elimination is not met. Arm $i$ thus stays active for at least $\frac{c^2 \nn}{2}$ epochs, establishing the desired lemma.
\qed

\section{Proof of Theorem~\ref{thm:SAE-UCB-reward-estimator} for UCB}\label{app:proof:thm:UCB-reward-estimator}

In this section, we provide the proof of Theorem~\ref{thm:SAE-UCB-reward-estimator} for the more complex case of the UCB algorithm. The structure of the proof resembles the SAE case, but the steps themselves are significantly more involved.
Recall that we need to bound the estimation error $|\estmean_\aidx - \armmean_\aidx|$, and recall the notation $\nn \defeq \nicefrac{4(T^\alpha - 1)}{\alpha \Delta_i^2}$.
As before, the proof will follow as a series of deterministic statements working on the high-probability event
\begin{align}\label{eq:eventc-ucb}
    \EcalUCB \defeq \left( \bigcap_{i=1}^K \Ecali_0 \right) \cap \Ecali_1 \cap \Ecaliistar_1 \cap \Ecali_2 \cap \Ecali_3 \cap \Ecalistar_3 \cap \Ecaliistar_4
\end{align}
From Lemma~\ref{lem:UCB-TL2} and the union bound, we have that the event $\EcalUCB$ holds with probability at least $1 - \nicefrac{C_1}{\nn} - \nicefrac{C_2}{T}$ for universal constants $C_1,C_2 > 0$.

First, we note that on the event $\EcalUCB$ and under our assumption of $T \geq 32 \sum_{i \neq i^*} \nn$, we have $\hat{\imath} = i^*$ via an argument that is identical to the SAE case (provided at the beginning of Appendix~\ref{app:proof:thm:SAE-reward-estimator}).
%
%
For any suboptimal arm $\aidx$, let $\tauhigh_\aidx$ denote the first round after $\tau_\aidx$ in which the best arm is pulled, noting that such a round always exists by the definition of $\tau_i$.

Since arm $\aidx$ is pulled at round $\tau_\aidx$ and arm $i^*$ is pulled at round $\tauhigh_\aidx$, the respective  upper confidence relations yield the bounds
\begin{align*}
    \samplemean_{\aidx,\tau_i} + C_{\aidx,\tau_\aidx} - \left(\samplemean_{i^*,\tau_i} + C_{i^*, \tau_i} \right) &\geq 0 \quad \text{ and }\\
    \samplemean_{\aidx,\tauhigh_i} + C_{\aidx,\tauhigh_i} - \left(\samplemean_{i^*,\tauhigh_i} + C_{i^*,\tauhigh_i} \right) &\leq 0.
\end{align*}
Combining the above two equations, rearranging terms, and applying the triangle inequality, we obtain
\begin{align}\label{eq:ucb-ci-gap-error}
\begin{split}
|C_{\aidx, \tau_i} - C_{i^*, \tau_i} - (\samplemean_{i^*, \tau_i} - \samplemean_{\aidx, \tau_i})|
    &\leq |C_{\aidx, \tau_i} - C_{\aidx, \tauhigh_i}| + |C_{i^*, \tau_i} - C_{i^*, \tauhigh_i}| + |\samplemean_{i^*, \tau_i} - \samplemean_{i^*, \tauhigh_i}| + |\samplemean_{\aidx, \tau_i} - \samplemean_{\aidx, \tauhigh_i}| \\
    &= |C_{\aidx, \tau_i} - C_{\aidx, \tauhigh_i}| + |C_{i^*, \tau_i} - C_{i^*, \tauhigh_i}| + |\samplemean_{\aidx, \tau_i} - \samplemean_{\aidx, \tauhigh_i}|,
\end{split}
\end{align}
where the last equality follows because arm $i^*$ is not pulled between round $\tau_i$ and $\tauhigh_i$. Thus, we have $|\samplemean_{i^*, \tau_i} - \samplemean_{i^*, \tauhigh_i}| = 0$.
Proceeding now to the error term of interest, we have
\begin{align*}
    |\estmean_\aidx - \armmean_\aidx| &= |C_{\aidx,\tau_\aidx} - C_{i^*,\tau_\aidx} - (\bestmean - \armmean_\aidx)| \\
&\leq |C_{\aidx,\tau_\aidx} - C_{i^*,\tau_\aidx} - (\samplemean_{i^*,\tau_\aidx} - \samplemean_{\aidx,\tau_\aidx})| + |\samplemean_{i^*,\tau_\aidx} - \bestmean| + |\samplemean_{\aidx,\tau_\aidx} - \armmean_\aidx|\\
&\leq  |C_{\aidx,\tau_\aidx} - C_{\aidx,\tauhigh_i}| + |C_{i^*,\tau_\aidx} - C_{i^*,\tauhigh_i}| + |\samplemean_{\aidx, \tau_\aidx} - \samplemean_{\aidx, \tauhigh_\aidx}| \\ & \quad +|\samplemean_{i^*,\tau_\aidx} - \bestmean| + |\samplemean_{\aidx,\tau_\aidx} - \armmean_\aidx|,
\end{align*} 
where the second inequality follows from equation~\eqref{eq:ucb-ci-gap-error}.
We bound each of the above terms separately.
First, because arm $i^*$ is not pulled between rounds $\tau_\aidx$ and $\tauhigh_\aidx$, we have $n_{i^*,\tau_\aidx} = n_{i^*,\tauhigh_\aidx}$, and consequently, its confidence interval stays the same, with
\begin{align}\label{eq:ucb-bound-change-c1}
     |C_{i^*,\tau_\aidx} - C_{i^*,\tauhigh_\aidx}|=0.
\end{align}
On the other hand, the confidence interval of arm $i$ changes, but not by much. We have
\begin{align}\label{eq:ucb-bound-change-ci}
\begin{split}
     |C_{\aidx,\tau_\aidx} - C_{\aidx,\tauhigh_\aidx}|&=\sqrt{\frac{2(T^\alpha-1)}{\alpha}}\left(\frac{1}{\sqrt{n_{\aidx,\tau_\aidx}}} - \frac{1}{\sqrt{n_{\aidx,\tauhigh_\aidx}}} \right) \\
     &= \sqrt{\frac{2(T^\alpha-1)}{\alpha}}\left(\frac{1}{\sqrt{n_{\aidx,\tau_\aidx}}} - \frac{1}{\sqrt{n_{\aidx,\tau_\aidx}+1}} \right)\\
    &\leq \sqrt{\frac{2(T^\alpha-1)}{\alpha}} \cdot n^{-3/2}_{\aidx, \tau_\aidx}.
\end{split}
\end{align}
where the inequality uses the fact that $\sqrt{j + 1} - \sqrt{j} \leq j^{-1/2}$ for any integer $j \geq 1$.
Next, using our reward tape notation, note that
\begin{align}\label{eq:bound-change-samplemean}
\begin{split}
    |\samplemean_{\aidx, \tauhigh_i} - \samplemean_{\aidx, \tau_\aidx}| &\defeq \left|\frac{\sum_{j=1}^{n_{\aidx, \tauhigh_i}} X_{\aidx,j}}{n_{\aidx, \tauhigh_i}} - \frac{\sum_{j=1}^{n_{\aidx, \tau_\aidx}} X_{\aidx,j}}{n_{\aidx, \tau_\aidx}}\right| \\
    &= \left|\frac{\samplemean_{\aidx, \tau_\aidx} \cdot \numpulls_{\aidx, \tau_\aidx} + X_{n_{i,\tauhigh_i}}}{\numpulls_{\aidx, \tau_\aidx}+1} - \samplemean_{\aidx, \tau_\aidx}\right|
    \\&=\left|\frac{X_{n_{i,\tauhigh_i}} -\samplemean_{\aidx, \tau_\aidx}}{n_{\aidx, \tau_\aidx}+1}\right|\leq \frac{1}{\numpulls_{\aidx, \tau_\aidx}},
    \end{split}
\end{align}
where the last inequality follows from the fact that both $X_{\aidx,j}$ and $\samplemean_{\aidx, \tau}$ are bounded between $0$ and~$1$.

It remains to bound the sample-mean deviations $|\samplemean_{i^*,\tau_\aidx} - \bestmean|$ and $|\samplemean_{\aidx,\tau_\aidx} - \armmean_\aidx|$. Towards that end, we require three technical lemmas, stated below and proved at the end of this section. 
The first lemma bounds the deviation of the sample mean for arm $i$ in terms of the number of times it has been pulled.

\begin{lemma}\label{app:lem:ucb-sup}
On the event $\EcalUCB$, we have
\begin{align}\label{eq:hoeffding-cell}
    \sup_{1 \leq t \leq T} \sqrt{\numpulls_{\aidx, t}}|\samplemean_{\aidx, t} - \armmean_{\aidx}| < \sqrt{\log 8\nn}
\end{align}
for any suboptimal arm $i$.
\end{lemma}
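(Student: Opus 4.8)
The plan is to derive this uniform bound as an essentially immediate consequence of the ``small-sample'' concentration event $\Ecali_1$ from Equation~\eqref{eq:eventi1}, once we know that arm $i$ is never pulled more than $8\nn$ times. First I would recall the reward-tape bookkeeping from Appendix~\ref{app:prelimlemmas}: at any round $t$, the running sample mean of arm $i$ is exactly $\samplemean_{\aidx,t} = \samplemean_\aidx(\numpulls_{\aidx,t})$, the average of the first $\numpulls_{\aidx,t}$ cells of arm $i$'s tape. So the quantity to be controlled is $\sqrt{\numpulls_{\aidx,t}}\,|\samplemean_\aidx(\numpulls_{\aidx,t}) - \armmean_\aidx|$, and for rounds $t$ at which arm $i$ has not yet been pulled this is zero by convention; since in UCB every arm has infinite initial confidence width and hence is pulled once within the first $K$ rounds, the supremum is really over rounds with $\numpulls_{\aidx,t} \geq 1$.

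Next I would invoke the pull-count upper bound: on $\EcalUCB$ (Equation~\eqref{eq:eventc-ucb}) the anytime-concentration events $\Ecali_0$ and $\Ecalistar_0$ are both in force, so the proof of Lemma~\ref{lem:UCB-L1} applies verbatim and gives $\numpulls_{\aidx,T} \leq 8\nn$. Since $\numpulls_{\aidx,t}$ is nondecreasing in $t$, this means $1 \leq \numpulls_{\aidx,t} \leq 8\nn$ for every round $t$ contributing to the supremum. Finally, still on $\EcalUCB$, the event $\Ecali_1$ holds, which asserts that $\sqrt{n}\,|\samplemean_\aidx(n) - \armmean_\aidx| < \sqrt{\log(8\nn)}$ simultaneously for all $n \in \{1,\ldots,8\nn\}$ --- and this strict inequality with exactly this constant is what Lemma~\ref{lem:UCB-TL1} yields with the choice $\delta = \nicefrac{2}{64\nn^2}$, using $\log(32\nn^2) \leq 2\log(8\nn)$. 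Instantiating at $n = \numpulls_{\aidx,t}$, which lies in the valid range $\{1,\ldots,8\nn\}$ by the previous step, and then taking the supremum over $1 \leq t \leq T$ completes the argument.

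The ``main obstacle'' here is really just careful bookkeeping rather than a substantive difficulty: the only two things to get right are (i) translating the round index $t$ into the tape index $\numpulls_{\aidx,t}$ and verifying it never leaves $\{1,\ldots,8\nn\}$, which rides on the regret-style bound $\numpulls_{\aidx,T}\le 8\nn$, and (ii) checking that the constant baked into $\Ecali_1$ is already tight enough to produce the stated $\sqrt{\log 8\nn}$. The genuinely hard parts of the UCB analysis lie in the companion lemmas that lower bound $\numpulls_{\aidx,\tau_\aidx}$ and $\numpulls_{i^*,\tau_\aidx}$ by a constant multiple of $\nn$; by contrast, this lemma is a short, self-contained concentration statement that those later arguments take as an input.
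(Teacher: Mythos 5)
Your proposal is correct and follows essentially the same route as the paper: on $\EcalUCB$ the anytime events give $\numpulls_{\aidx,T} \leq 8\nn$ via Lemma~\ref{lem:UCB-L1}, and then the small-sample event $\Ecali_1$ applied at $n = \numpulls_{\aidx,t} \in \{1,\ldots,8\nn\}$ yields the stated bound uniformly over $t$. Your aside re-deriving the constant in $\Ecali_1$ has a small slip ($2/\delta = 64\nn^2$, not $32\nn^2$), but it is immaterial since that constant is already built into the definition of $\Ecali_1$ and certified by Lemma~\ref{lem:UCB-TL2}.
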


\noindent The second lemma provides a high-probability lower bound on $\numpulls_{\aidx, \tau_i}$, which is significantly more intricate than the typical lower bound on $\E[\numpulls_{\aidx,\tau_i}]$.
\begin{lemma} \label{lem:mui-taui}
On the event $\EcalUCB$, there is an absolute constant $c > 0$ such that we have $n_{i,\tau_i} \geq c \nn$ for any suboptimal arm $i$.
\end{lemma}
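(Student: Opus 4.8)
\textbf{Proof proposal for Lemma~\ref{lem:mui-taui}.}
The plan is to compare the UCB index of arm $i$ at the round $\tau_i$ where it is last pulled with the UCB index of arm $i^*$ at the first subsequent round at which $i^*$ is pulled --- a ``time-reversed'' analogue of the argument used for the upper bound $\numpulls_{i,T}\le 8\nn$ in Lemma~\ref{lem:UCB-L1}. Throughout I would work on $\EcalUCB$, on which $\hat{\imath}=i^*$ (established at the start of Appendix~\ref{app:proof:thm:UCB-reward-estimator}) and $\numpulls_{i,T}\le 8\nn$, so in particular $\numpulls_{i,\tau_i}\in\{1,\dots,8\nn\}\subseteq\{1,\dots,T\}$. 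Let $\tauhigh_i$ be the first round after $\tau_i$ at which arm $i^*$ is pulled; such a round exists by the definition of $\tau_i$. By maximality of $\tau_i$, arm $i$ is not pulled during $(\tau_i,\tauhigh_i)$, and by minimality of $\tauhigh_i$, arm $i^*$ is not pulled during $(\tau_i,\tauhigh_i)$; hence the four quantities $\numpulls_{i,\tauhigh_i-1}$, $\samplemean_{i,\tauhigh_i-1}$, $\numpulls_{i^*,\tauhigh_i-1}$, $\samplemean_{i^*,\tauhigh_i-1}$ coincide with $\numpulls_{i,\tau_i}$, $\samplemean_{i,\tau_i}$, $\numpulls_{i^*,\tau_i}$, $\samplemean_{i^*,\tau_i}$ respectively.

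Because arm $i^*$ is chosen at round $\tauhigh_i$, its index there is at least that of arm $i$; after substituting the four frozen identities this reads
\[
\samplemean_{i^*,\tau_i}+C(\numpulls_{i^*,\tau_i})\;\ge\;\samplemean_{i,\tau_i}+C(\numpulls_{i,\tau_i}).
\]
I would lower-bound the right-hand side and upper-bound the left-hand side. For the right-hand side, the tighter concentration event $\Ecali_2$ gives $\samplemean_{i,\tau_i}\ge\armmean_i-\sqrt{3/4}\,C(\numpulls_{i,\tau_i})$, so the right-hand side is at least $\armmean_i+(1-\sqrt{3/4})\,C(\numpulls_{i,\tau_i})$. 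For the left-hand side, I would first invoke the companion high-probability lower bound $\numpulls_{i^*,\tau_i}\ge c'\nn$ (the third technical lemma of this section, proved separately). With its constant chosen large enough, $\numpulls_{i^*,\tau_i}$ lies in the range where the large-sample event $\Ecaliistar_4$ is informative, which yields $\samplemean_{i^*,\tau_i}\le\armmean_{i^*}+\delta_T\Delta_i$ for a quantity $\delta_T\to 0$, and also $C(\numpulls_{i^*,\tau_i})\le C(c'\nn)=\Delta_i/\sqrt{2c'}$ (using $\nn=4(T^\alpha-1)/(\alpha\Delta_i^2)$). Plugging these in and using $\armmean_{i^*}=\armmean_i+\Delta_i$ gives $(1-\sqrt{3/4})\,C(\numpulls_{i,\tau_i})\le\Delta_i\,(1+\delta_T+1/\sqrt{2c'})$, i.e.\ $C(\numpulls_{i,\tau_i})\le C_1\Delta_i$ for an absolute constant $C_1$ (for $T$ above an instance-dependent threshold that makes $\delta_T\le 1$). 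Since $C(n)=\sqrt{2(T^\alpha-1)/(\alpha n)}$ is decreasing, inverting gives $\numpulls_{i,\tau_i}\ge\nn/(2C_1^2)=:c\nn$, the claimed bound.

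\textbf{Main obstacle.} The crux is supplying the input $\numpulls_{i^*,\tau_i}=\Omega(\nn)$ --- equivalently, that by the time arm $i$'s UCB index permanently falls below arm $i^*$'s, the optimal arm has already been sampled $\Omega(\nn)$ times. This does \emph{not} follow from the classical UCB regret analysis, which controls only the total pull counts $\numpulls_{\cdot,T}$, nor from existing per-arm pull-count lower bounds; it requires a fine-grained, case-by-case examination of the day-to-day dynamics of UCB, using the concentration events $\Ecalistar_0$, $\Ecalistar_3$, $\Ecaliistar_4$ to track arm $i^*$'s index along the way and distinguishing according to how large $\numpulls_{i^*,t}$ already is relative to $\nn$. (An SAE-style self-contained argument --- bounding, for every reward-tape index $n\le\Theta(\nn)$, the sample-mean difference between arm $i$ and an arbitrary other arm by $2C(n)$, as in Lemma~\ref{lem:mui-taui-sae} --- does not go through here, because under UCB the active arms do not share a common pull count, so some control on $\numpulls_{i^*,\tau_i}$ is needed regardless.) Given that lemma, the remaining work is routine bookkeeping: tracking the lower-order term $\delta_T\Delta_i$ from $\Ecaliistar_4$, and checking which of the two ranges $\{c_4\kappa_i,\dots,\nn^2\}$ or $\{\nn^2+1,\dots,T\}$ in its definition the count $\numpulls_{i^*,\tau_i}$ falls into so as to apply the correct bound.
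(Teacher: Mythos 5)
There is a genuine gap, and it is a circularity at exactly the point you flag as the ``main obstacle.'' Your argument hinges on importing the bound $\numpulls_{i^*,\tau_i}\ge c'\nn$ from ``the third technical lemma of this section, proved separately.'' But in the paper that statement is Claim~\ref{clm:n1taui-lb} (inside the proof of Lemma~\ref{lem:mu1-taui}), and its proof \emph{uses} Lemma~\ref{lem:mui-taui}: it invokes $\numpulls_{i,\tau_i}\ge\nicefrac{\nn}{32}$ both through Equation~\eqref{eq:event3c} and through the assertion that arm $i$ must be pulled at least once more after its $(\nicefrac{\nn}{32}-1)$-th pull. So the input you rely on is downstream of the lemma you are proving, and your time-reversed index comparison at $\tauhigh_i$ cannot close on its own: as you yourself note, without control of $\numpulls_{i^*,\tau_i}$ the term $C(\numpulls_{i^*,\tau_i})$ on the left-hand side cannot be forced down to order $\Delta_i$, and no independent proof of $\numpulls_{i^*,\tau_i}=\Omega(\nn)$ is supplied (the paper explicitly points out that this lower bound does not follow from existing pull-count results). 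Deferring the crux to a lemma that in fact depends on the present one leaves the proof incomplete.

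The paper avoids this by proving Lemma~\ref{lem:mui-taui} first, via a self-contained argument that never needs a lower bound on $\numpulls_{i^*,\tau_i}$: it replaces the random round $\tau_i$ by a deterministic time window. Concretely, on $\EcalUCB$ one has $\numpulls_{i,\tau_i}\ge\numpulls_{i,3T/4}$, because the last pull of arm $i$ before round $\nicefrac{3T}{4}$ is necessarily followed by a pull of $i^*$ (the per-arm upper bounds of Lemma~\ref{lem:UCB-L1} together with $T\gtrsim\sum_{j\neq i^*}\kappa_j$ force $i^*$ to be pulled in $[\nicefrac{3T}{4},T]$). It then establishes the index-domination condition~\eqref{eq:armipicked}: if arm $i^*$'s tape has run at least $8\nn$ cells while arm $i$'s has run at most $\nicefrac{\nn}{32}$ cells, arm $i$'s index exceeds arm $i^*$'s, so $i^*$ cannot be selected. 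Since $\numpulls_{i^*,t}\ge 8\nn$ for all $t\in[\nicefrac{T}{2},\nicefrac{3T}{4}]$ (again just from the deterministic horizon and Lemma~\ref{lem:UCB-L1}), while $i^*$ must be pulled at least once in that interval, assuming $\numpulls_{i,3T/4}<\nicefrac{\nn}{32}$ yields a contradiction. This pigeonhole-plus-contradiction route is what breaks the cycle: Lemma~\ref{lem:mui-taui} comes first, and only then is your desired bound $\numpulls_{i^*,\tau_i}\ge c\nn$ derived from it. If you want to rescue your write-up, you would need to either adopt this window argument or give an independent proof of $\numpulls_{i^*,\tau_i}=\Omega(\nn)$, which is precisely the hard part.
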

\noindent Our third and final lemma bounds the deviation of the sample mean for arm $i^*$.
\begin{lemma} \label{lem:mu1-taui}
On the event $\EcalUCB$, there is an absolute constant $C > 0$ such that
\begin{equation}\label{eq:cell-1}
|\samplemean_{i^*,\tau_\aidx} - \bestmean| < C \sqrt{\frac{\log \nn}{\nn}}.
\end{equation}
\end{lemma}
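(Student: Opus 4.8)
The plan is to bound $|\samplemean_{i^*,\tau_i}-\bestmean|$ by reducing it to a high-probability \emph{lower} bound on $n_{i^*,\tau_i}$, the number of pulls of the optimal arm up to the switching round, and then feeding that bound into the large-sample concentration event $\Ecaliistar_4$. Using the reward-tape identification $\samplemean_{i^*,\tau_i}=\samplemean_{i^*}(n_{i^*,\tau_i})$, I would argue in regimes determined by the magnitude of $n_{i^*,\tau_i}$. If $n_{i^*,\tau_i}>\nn^2$, the second line of $\Ecaliistar_4$ already gives $|\samplemean_{i^*}(n_{i^*,\tau_i})-\bestmean|\le\sqrt{\log T/\nn^2}$, which is $O(\sqrt{\log\nn/\nn})$ because $\nn\gtrsim T^\alpha$ forces $\log T\le\nn\log\nn$ once $T$ is past an absolute constant. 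If instead $c\nn\le n_{i^*,\tau_i}\le\nn^2$, where $c$ is the universal constant appearing in the definition of $\Ecaliistar_4$, its first line gives $|\samplemean_{i^*}(n_{i^*,\tau_i})-\bestmean|\le\sqrt{2\log\nn/(c\nn)}=O(\sqrt{\log\nn/\nn})$. So the only thing left to rule out is $n_{i^*,\tau_i}<c\nn$; establishing $n_{i^*,\tau_i}\ge c\nn$ on the event $\EcalUCB$ is the heart of the matter.

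To prove $n_{i^*,\tau_i}\ge c\nn$ I would argue by contradiction, exploiting the day-to-day behavior of UCB at round $\tau_i$. Since arm $i$ is pulled at $\tau_i$ and arm $i^*$ is not, the index of arm $i$ dominates that of the optimal arm: $\samplemean_i(n_{i,\tau_i}-1)+C(n_{i,\tau_i}-1)\ge\samplemean_{i^*}(n_{i^*,\tau_i})+C(n_{i^*,\tau_i})$. On one side, Lemma~\ref{lem:mui-taui} gives $n_{i,\tau_i}\ge c_0\nn$ for a universal constant $c_0>0$, so (as $C(\cdot)$ is decreasing and $C(\nn)=\Delta_i/\sqrt{2}$) the width $C(n_{i,\tau_i}-1)$ is at most a multiple of $\Delta_i$ depending only on $c_0$; combined with $\Ecali_0$ this caps arm $i$'s index at $\armmean_i+O(\Delta_i)=\bestmean+O(\Delta_i)$, with an implicit constant not depending on $c$. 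On the other side, if $n_{i^*,\tau_i}<c\nn\le\nn/32$ then the tighter event $\Ecalistar_3$ applies to arm $i^*$ and yields $\samplemean_{i^*}(n_{i^*,\tau_i})\ge\bestmean-C(n_{i^*,\tau_i})/\sqrt{2}$, so arm $i^*$'s index is at least $\bestmean+(1-\tfrac{1}{\sqrt{2}})C(n_{i^*,\tau_i})\ge\bestmean+(1-\tfrac{1}{\sqrt{2}})\Delta_i/\sqrt{2c}$, using $C(c\nn)=\Delta_i/\sqrt{2c}$ and monotonicity. Because this lower bound grows without bound as $c\to0$ while the cap on arm $i$'s index does not depend on $c$, choosing $c$ a small enough universal constant (also $\le1/32$, so that $\Ecalistar_3$ is licensed) makes arm $i^*$'s index strictly exceed arm $i$'s index, contradicting that $i$ was pulled at $\tau_i$. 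Hence $n_{i^*,\tau_i}\ge c\nn$.

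The main obstacle is exactly this lower bound on $n_{i^*,\tau_i}$: standard regret analysis only yields \emph{upper} bounds on pull counts (and bounds on their expectations), whereas here we need a high-probability \emph{lower} bound on how often the \emph{optimal} arm has been played by a random round, which does not follow from such results and instead requires the fine-grained case analysis of UCB above. Two delicate points deserve care: (i) the universal constant $c$ must be taken small enough to overcome the (possibly tiny) constant $c_0$ from Lemma~\ref{lem:mui-taui}, and the constant in the definition of $\Ecaliistar_4$ must be chosen to match it — this is what forces the case split on $n_{i^*,\tau_i}$ relative to $\nn/32$ and $\nn^2$, so that at least one of $\Ecalistar_3$ or the two lines of $\Ecaliistar_4$ always applies; and (ii) bookkeeping of the off-by-one between $n_{i,\tau_i}$ and $n_{i,\tau_i}-1$ and of the passage from $\log T$ to $\log\nn$. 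Once Lemma~\ref{lem:mu1-taui} is in hand, it combines with Lemmas~\ref{app:lem:ucb-sup} and~\ref{lem:mui-taui} and the deterministic chain of inequalities already derived in Appendix~\ref{app:proof:thm:UCB-reward-estimator} to complete the proof of Theorem~\ref{thm:SAE-UCB-reward-estimator} for the UCB algorithm.
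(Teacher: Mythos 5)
Your proposal is correct and follows the same skeleton as the paper's proof: reduce to a high-probability lower bound $n_{i^*,\tau_i}\geq c\nn$ (the paper's Claim~\ref{clm:n1taui-lb}), then conclude via the two regimes of $\Ecaliistar_4$ exactly as you describe. The one genuine divergence is in how the lower bound is established. You set up the contradiction at the switching round $\tau_i$ itself, capping arm $i$'s index by $\armmean_i+2C(n_{i,\tau_i}-1)\lesssim\bestmean+\Delta_i/\sqrt{c_0}$ using the anytime event $\Ecali_0$ together with Lemma~\ref{lem:mui-taui}, and lower-bounding arm $i^*$'s index via $\Ecalistar_3$ when $n_{i^*,\tau_i}<c\nn\leq\nn/32$. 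The paper instead localizes the comparison at the round $\gamma_i$ of the $(\nn/32)$-th pull of arm $i$, so that arm $i$'s pull count is pinned at exactly $\nn/32-1$ and the event $\Ecali_3$ sandwiches its width between $4\Delta_i$ and $4.2\Delta_i$; this yields the explicit constant $c=1/968$, after which monotonicity ($\tau_i\geq\gamma_i$, guaranteed by Lemma~\ref{lem:mui-taui}) transfers the bound to $n_{i^*,\tau_i}$. Your variant is equally valid and slightly leaner (no need for $\gamma_i$, $\underline{\gamma}_i$), at the cost of a less explicit constant, and you correctly flag that the constant in the definition of $\Ecaliistar_4$ must then be fixed to match it. One small slip: the passage $\sqrt{\log T/\nn^2}\lesssim\sqrt{\log\nn/\nn}$ should be justified by $\nn\geq 4\log T$ (which follows from $(T^\alpha-1)/\alpha\geq\log T$ and $\Delta_i\leq 1$, as the paper argues), not by ``$\nn\gtrsim T^\alpha$'', which for small $\alpha$ does not by itself dominate $\log T$; this is a one-line fix.
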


\noindent Having stated these technical lemmas, let us now complete the proof of Theorem~\ref{thm:SAE-UCB-reward-estimator} for UCB, operating throughout on the event $\EcalUCB$.
Applying Lemma~\ref{app:lem:ucb-sup} yields
\begin{align*}
    \sup_{1 \leq t \leq T} \sqrt{\numpulls_{\aidx, t}}|\samplemean_{\aidx, t} - \armmean_{\aidx}| < \sqrt{\log 8 \nn},
\end{align*}
from which we obtain
\begin{equation}\label{eq:cell-i}
|\samplemean_{\aidx,\tau_\aidx} - \armmean_\aidx| < \sqrt{\frac{\log \nn}{\numpulls_{\aidx, \tau_i}}}.
\end{equation}
In addition, Lemma~\ref{lem:mu1-taui} yields the bound
\begin{align} \label{eq:mu1-samplemean}
    |\samplemean_{i^*,\tau_\aidx} - \armmean_{i^*}| < C \sqrt{\frac{\log \nn}{\nn}}.
\end{align}
Putting together equations~\eqref{eq:ucb-bound-change-c1},~\eqref{eq:ucb-bound-change-ci},~\eqref{eq:bound-change-samplemean},~\eqref{eq:cell-i} and~\eqref{eq:mu1-samplemean}, the following sequence of bounds holds (where constants change from line-to-line, but are always absolute):
\begin{align*}
|\estmean_\aidx - \armmean_\aidx|  &\leq \frac{1}{\numpulls_{\aidx, \tau_\aidx}} +  \sqrt{\frac{2(T^\alpha - 1)}{\alpha}} \cdot n^{-3/2}_{\aidx, \tau_\aidx} + \sqrt{\frac{\log \nn}{\numpulls_{\aidx, \tau_i}}} + C \sqrt{\frac{\log \nn}{\nn}} \\
&\leq \frac{1}{\numpulls_{\aidx, \tau_\aidx}} +  C \sqrt{\nn} \cdot n^{-3/2}_{\aidx, \tau_\aidx} + \sqrt{\frac{\log \nn}{\numpulls_{\aidx, \tau_i}}} + C \sqrt{\frac{\log \nn}{\nn}}.
\end{align*}
Here, the second inequality holds by definition of $\nn$.
Finally, Lemma~\ref{lem:mui-taui} provides a lower bound on $\numpulls_{\aidx, \tau_i}$, which gives us
\begin{align*}
    |\estmean_\aidx - \armmean_\aidx|  &\leq C \sqrt{\frac{\log \nn}{\nn}}
\end{align*}
on the event $\EcalUCB$.
Recall that the event $\EcalUCB$ holds with probability greater than $1 - \nicefrac{C_1}{\nn} - \nicefrac{C_2}{T}$ for absolute constants $C_1,C_2$. Substituting the value of $\nn$ and reasoning exactly as in the SAE case about the complementary event, we obtain the desired upper bound on the expected error.
\qed


\subsection{Proof of Lemma~\ref{app:lem:ucb-sup}}
As detailed at the beginning of Appendix~\ref{app:proof:thm:UCB-reward-estimator}, working on the event $\EcalUCB$ guarantees that $n_{i,T} \leq 8\nn$ for each suboptimal arm $i$ (see Lemma~\ref{lem:UCB-L1}).
Moreover, since event $\EcalUCB$ holds, we have that event $\Ecali_1$ (defined in Equation~\eqref{eq:eventi1}) holds, which guarantees that
\begin{align*}
    \sqrt{\numpulls} \cdot |\samplemean_i(\numpulls) - \armmean_i| &\leq \sqrt{\log 8\nn}  \quad \text{ for all } n = 1,\ldots, 8\nn .
\end{align*}
Thus, it follows directly that
\begin{align*}
   \sup_{1 \leq t \leq T} \sqrt{\numpulls_{\aidx, t}}|\samplemean_{\aidx, t} - \armmean_{\aidx}| \leq \sup_{1 \leq \numpulls \leq 8\nn} \sqrt{\numpulls} |\samplemean_{\aidx}(\numpulls) - \armmean| \leq \sqrt{\log 8 \nn},
\end{align*}
which completes the proof.
\qed

\subsection{Proof of Lemma~\ref{lem:mui-taui}}

Since event $\EcalUCB$ holds, we have that events $\cap_{i=1}^K \Ecali_0$ (defined in Equation~\eqref{eq:eventi0}) and $\Ecali_2$ (defined in Equation~\eqref{eq:eventi2}) hold.
Our first observation is that we have $n_{i,\tau_i} \geq n_{i,3T/4}$ on the event $\EcalUCB$.
To see this, we define $\widetilde{\tau}_i$ to be the \textit{last} time that arm $i$ was pulled before round $3T/4$, and make a series of observations:
\begin{enumerate}
    \item \emph{$\widetilde{\tau}_i$ always exists and is well-defined}, since an examination of Algorithm~\ref{alg:UCB-any-regret} reveals that all arms will be pulled at least once in the first $K$ rounds, including any suboptimal arm $i$.
    \item On the event $\cap_{i=1}^K \Ecali_0$, we have $n_{i,T} \leq 8 \nn$ for all $i \neq i^*$. Since $T \geq 32 \sum_{i \neq i^*} \kappa_i$, this implies that the optimal arm $i^*$ will be pulled at least $3T/4$ times, and hence, at least once between rounds $\nicefrac{3T}{4}$ and $T$. Thus, \emph{the optimal arm $i^*$ is pulled at least once after $\widetilde{\tau}_i$}.
    \item By definition, $n_{i,\widetilde{\tau}_i} = n_{i,3T/4}$.
\end{enumerate}
The first two observations (italicized) are also satisfied for the round $\tau_i$, except that it is the maximal round for which these observations hold.
Thus, we have $n_{i,\tau_i} \geq n_{i,\widetilde{\tau}_i} = n_{i,3T/4}$, implying that it suffices to lower bound $n_{i,3T/4}$.

%
Consider the reward tapes for arms $i^*$ and $\aidx$, indexed by $\numpulls = 1,\ldots,\nicefrac{3T}{4}$.
Recall that the confidence interval in reward-tape notation is given by $C(\numpulls) \defeq \sqrt{\frac{2(T^{\alpha} - 1)}{\alpha \numpulls}}$.
First, note that for $n \geq 8 \nn$ we have $C(\numpulls) \leq \nicefrac{\Delta_\aidx}{4}$.
Thus, we have
\begin{align}\label{eq:arm1lcb}
    \samplemean_{i^*}(\numpulls) + C(\numpulls) \leq \armmean_{i^*} + 2C(n) \leq \armmean_{i^*} + \frac{\Delta_\aidx}{2} \text { for all } \numpulls \geq 8 \nn 
\end{align}
where the first inequality holds on event $\Ecaliistar_0$.
On the other hand, event $\Ecali_2$ gives us
\begin{align}\label{eq:armiucb}
    \samplemean_\aidx(\numpulls) \geq \armmean_\aidx - \sqrt{\frac{3}{4}} \cdot C(\numpulls) \text { for all } \numpulls \geq 1 .
\end{align}
Finally, Lemma~\ref{lem:UCB-L1} guarantees (see also its proof) that under event $\Ecali_0 \cap \Ecalistar_0$, we have $n_{i,\tau_i} \leq 8 \nn$.

We now use these statements to prove the lemma.
We consider indices $\numpulls,\numpulls'$ for reward tapes corresponding to arms $i^*$ and $\aidx$ respectively.
Provided that $\numpulls \geq 8\nn$ and $\numpulls' \leq \nicefrac{\nn}{32}$, we obtain
\begin{align*}
    \samplemean_\aidx(\numpulls') + C(\numpulls') &> \armmean_\aidx - \sqrt{\frac{3}{4}} \cdot C(\numpulls') + C(\numpulls') \\
    &= \armmean_\aidx + C(\numpulls') \left(1 - \sqrt{\frac{3}{4}}\right) \\
    &\geq \armmean_{i^*} + \frac{\Delta_\aidx}{2}\\
    &\geq \samplemean_{i^*}(\numpulls) + C(\numpulls) ,
\end{align*}
where the first and third inequality follow from Equations~\eqref{eq:armiucb} and~\eqref{eq:arm1lcb} respectively, and the second inequality follows from the constraint on $\numpulls'$.
Ultimately, we obtain
\begin{align}\label{eq:armipicked}
    \samplemean_\aidx(\numpulls') + C_\aidx(\numpulls') \geq \samplemean_{i^*}(\numpulls) + C_{i^*}(\numpulls) \text{ as long as } \numpulls \geq 8\nn \text{ and } \numpulls' \leq \frac{\nn}{32} .
\end{align}

In essence, Equation~\eqref{eq:armipicked} describes a sufficient condition for arm $i^*$ \textit{not} to be picked, i.e. the reward tape for arm $i^*$ has been run for greater than $8\nn$ cells and the reward tape for arm $\aidx$ has been run for at most $\nicefrac{\nn}{32}$ steps.
At a high level, our proof strategy is as follows: on the event $\cap_{i=1}^K \Ecali_0$, arm $i^*$ has to be picked sufficiently often at ``regular intervals".
For this to be possible, arm $\aidx$ needs to be pulled a minimal number of times to ensure that the condition in Equation~\eqref{eq:armipicked}
is \textit{not} satisfied.

We expand on this proof intuition below.
Consider the round $\nicefrac{T}{2}$ and note 
from Lemma~\ref{lem:UCB-L1} that the optimal arm $i^*$ needs to be pulled at least once between rounds $\nicefrac{T}{2}$ and $\nicefrac{3T}{4}$.
This requires 
\begin{align}\label{eq:arm1pulledonce}
    \samplemean_{i^*}(\numpulls_{i^*, t}) + C_{i^*,t} \geq \samplemean_\aidx(\numpulls_{\aidx,t}) + C_{i,t} \text{ for some } t \in \left[\frac{T}{2},\frac{3T}{4}\right] .
\end{align}
We now split the proof into two cases.

\noindent \underline{Case $n_{i, T/2} \geq \nicefrac{\kappa_i}{32}$:} In this case, we have $\numpulls_{\aidx,3T/4} \geq \numpulls_{\aidx,T/2} \geq \nicefrac{\nn}{32}$ and we are done.

\noindent \underline{Case $n_{i, T/2} < \nicefrac{\kappa_i}{32}$:}
We provide a proof-by-contradiction for this case.
Suppose that $n_{i,3T/4} < \nicefrac{\nn}{32}$.
By Lemma~\ref{lem:UCB-L1}, arm $i^*$ has to be pulled at least $8\nn$ times within the horizon $\nicefrac{T}{2}$, i.e. we have $\numpulls_{i^*,t} \geq 8\nn$ for all $t \in [\nicefrac{T}{2},\nicefrac{3T}{4}]$.
Thus, if we had $n_{i,3T/4} < \nicefrac{\nn}{32}$, the condition in Equation~\eqref{eq:armipicked} would be satisfied for all $t \in [\nicefrac{T}{2},\nicefrac{3T}{4}]$, implying that arm $i^*$ can \textit{never} be picked in this interval.
This contradicts our statement that arm $i^*$ has to be pulled at least once in this interval, and shows that we require $n_{i, 3T/4} \geq \nicefrac{\nn}{32}$ in this case.
This completes the proof.
\qed

\subsection{Proof of Lemma~\ref{lem:mu1-taui}}
Since event $\EcalUCB$ holds, we have that the event $\Ecaliistar_4$ (defined in Equation~\eqref{eq:eventi4}) holds.
We begin with the following claim, which we return to prove momentarily.

\begin{claim}\label{clm:n1taui-lb}
Under the event $\EcalUCB$, there exists a universal positive constant $c$ such that
\begin{align} \label{eq:n1taui-lb}
    \numpulls_{i^*, \tau_\aidx} \geq c \nn.
\end{align}
\end{claim}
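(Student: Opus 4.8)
The plan is to extract the bound on $\numpulls_{i^*,\tau_i}$ from the single inequality that expresses the fact that at the switching round $\tau_i$ the algorithm preferred arm $i$ to arm $i^*$, and then to remove a circular dependence by a short case analysis. Throughout I would work on the event $\EcalUCB$, on which (as already noted) $\hat{\imath}=i^*$, on which $\numpulls_{i,\tau_i}\le\numpulls_{i,T}\le 8\nn$ by Lemma~\ref{lem:UCB-L1}, and on which $\numpulls_{i,\tau_i}\ge c_1\nn$ for a universal $c_1>0$ by Lemma~\ref{lem:mui-taui}.

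First I would record the selection inequality. Since arm $i$ (and not arm $i^*$) is pulled at round $\tau_i$, we have $\numpulls_{i^*,\tau_i-1}=\numpulls_{i^*,\tau_i}$ and $\numpulls_{i,\tau_i-1}=\numpulls_{i,\tau_i}-1$, so the rule of Algorithm~\ref{alg:UCB-any-regret} gives $\samplemean_i(\numpulls_{i,\tau_i}-1)+C(\numpulls_{i,\tau_i}-1)\ge \samplemean_{i^*}(\numpulls_{i^*,\tau_i})+C(\numpulls_{i^*,\tau_i})$. On $\Ecali_1$ (applicable since $\numpulls_{i,\tau_i}-1\in\{1,\dots,8\nn\}$) and using $\numpulls_{i,\tau_i}\ge c_1\nn$ together with the identity $C(\nn)=\Delta_i/\sqrt2$ and $\sqrt{\log\nn/\nn}=\littleO(\Delta_i)$, the left-hand side is at most $\armmean_{i^*}+C_0\Delta_i$ for a universal $C_0$ once $T$ is large. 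Hence $\samplemean_{i^*}(\numpulls_{i^*,\tau_i})+C(\numpulls_{i^*,\tau_i})\le \armmean_{i^*}+C_0\Delta_i$.

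Next comes the case split. If $\numpulls_{i^*,\tau_i}\ge \nn/32$ we are done (taking $c\le 1/32$). If instead $\numpulls_{i^*,\tau_i}<\nn/32$, then $n=\numpulls_{i^*,\tau_i}$ lies in exactly the range where the tighter concentration event $\Ecalistar_3$ holds, so $\samplemean_{i^*}(\numpulls_{i^*,\tau_i})\ge \armmean_{i^*}-C(\numpulls_{i^*,\tau_i})/\sqrt2$. Substituting into the previous display and rearranging yields $(1-1/\sqrt2)\,C(\numpulls_{i^*,\tau_i})\le C_0\Delta_i$, i.e.\ $C(\numpulls_{i^*,\tau_i})\le C_1\Delta_i$ for a universal $C_1$; solving the identity $C(\numpulls_{i^*,\tau_i})=\sqrt{2(T^\alpha-1)/(\alpha\,\numpulls_{i^*,\tau_i})}$ for $\numpulls_{i^*,\tau_i}$ then gives $\numpulls_{i^*,\tau_i}\ge \nn/(2C_1^2)$. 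Combining the two cases proves the claim with $c=(2C_1^2)^{-1}$; this constant can be taken to coincide with the one used in defining $\Ecaliistar_4$, so that Lemma~\ref{lem:mu1-taui} may subsequently apply that event at $n=\numpulls_{i^*,\tau_i}$.

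The main obstacle is precisely the circularity in the last step: a useful lower bound on $\numpulls_{i^*,\tau_i}$ needs an $\littleO(\Delta_i)$-accurate estimate of $\armmean_{i^*}$ from $\numpulls_{i^*,\tau_i}$ samples, yet the only concentration inequalities strong enough for that (such as $\Ecaliistar_4$) become valid only once $\numpulls_{i^*,\tau_i}$ is already known to be $\gtrsim\nn$. The case split breaks this: when $\numpulls_{i^*,\tau_i}$ is small it automatically lands in the regime covered by $\Ecalistar_3$, whose confidence width carries a multiplicative constant strictly below $1$, and that strict gap $1-1/\sqrt2>0$ is exactly what converts ``$\samplemean_{i^*}$ is within $C(\cdot)/\sqrt2$ of $\armmean_{i^*}$'' into a genuine lower bound on $\numpulls_{i^*,\tau_i}$. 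The remaining steps are routine: verifying the ranges of validity of $\Ecali_1$ and $\Ecalistar_3$, and checking that the various $\littleO(\Delta_i)$ terms are negligible under the paper's standing assumption that $T$ is large.
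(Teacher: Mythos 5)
Your proof is correct, and it reaches the claim by a somewhat different route than the paper's. Both arguments share the same core mechanism: the tighter concentration event $\Ecalistar_3$, whose factor $1/\sqrt{2}<1$ is exactly what breaks the circularity when $\numpulls_{i^*,\cdot}$ is small, combined with the lower bound $\numpulls_{i,\tau_i}\ge \nn/32$ from Lemma~\ref{lem:mui-taui} and the fact that arm $i$ wins the UCB comparison against arm $i^*$ at a round where $i$ is pulled. The difference is where the comparison is anchored and how the bound is extracted. The paper anchors at $\gamma_i$, the round of the $(\nn/32)$-th pull of arm $i$ (which exists and satisfies $\gamma_i\le\tau_i$ by Lemma~\ref{lem:mui-taui}), controls arm $i$'s index there via $\Ecali_3$ at the deterministic count $\nn/32-1$, and argues by contradiction with an explicit numerical constant $c=1/968$. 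You anchor directly at $\tau_i$, control arm $i$'s index at the random count $\numpulls_{i,\tau_i}-1$ via $\Ecali_1$ (legitimate, since on $\EcalUCB$ this count lies between $1$ and $8\nn$ by Lemmas~\ref{lem:mui-taui} and~\ref{lem:UCB-L1}), and then invert $C(\cdot)$ to read off $\numpulls_{i^*,\tau_i}\gtrsim\nn$ constructively rather than by contradiction; the case split $\numpulls_{i^*,\tau_i}\ge\nn/32$ versus $\numpulls_{i^*,\tau_i}<\nn/32$ keeps you inside the range of validity of $\Ecalistar_3$, and the derived bound $\nn/(2C_1^2)<\nn/32$ is consistent with the case hypothesis, so there is no contradiction to resolve. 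Your version avoids introducing $\gamma_i$ and the step $\numpulls_{i^*,\tau_i}\ge\numpulls_{i^*,\gamma_i}$, at the cost of slightly looser (but still universal) constants, and as you correctly note the resulting constant must be the one used in defining $\Ecaliistar_4$ so that Lemma~\ref{lem:mu1-taui} can subsequently invoke that event—exactly as the paper does with its own constant. The only routine details worth making explicit are that $\numpulls_{i,\tau_i}-1\ge 1$ (so the selection inequality involves finite confidence widths), which holds since $\nn/32\ge 2$ for $T$ large, and that the $\sqrt{\log(8\nn)/\nn}$ term is $O(\Delta_i)$ (rather than $\littleO(\Delta_i)$) under the standing assumption $T\ge 16\sum_{j\ne i^*}\kappa_j$; this suffices for your bound $C_0\Delta_i$.
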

%
\noindent Taking this claim as given, we split the proof of the lemma into two cases:

\noindent \underline{Case 1:} $c \nn \leq \numpulls_{i^*, \tau_\aidx} \leq \nn^2$. 
Here, the first case under the event $\Ecaliistar_4$ directly yields
\begin{align*}
    |\samplemean_{i^*}(n) - \armmean_{i^*}| \leq \sqrt{\frac{2 \log \nn}{c \nn}}.
\end{align*}

\noindent \underline{Case 2:} $\numpulls_{i^*,\tau_\aidx} > \nn^2$. 
Here, the second case under the event $\Ecaliistar_4$ directly yields
\begin{align*}
    |\samplemean_{i^*}(n) - \armmean_{i^*}| \leq \sqrt{\frac{\log T}{\nn^2}}.
\end{align*}
To complete the proof for this case, note that 
\[
\nn = \frac{4 (T^\alpha - 1)}{\alpha \Delta_\aidx^2} \geq \frac{4 \log T}{\Delta_\aidx^2} \geq 4 \log T,
\]
which gives us $|\samplemean_{i^*}(\numpulls_{i^*, \tau_\aidx}) - \armmean_{i^*}| \leq  \sqrt{\nicefrac{1}{4\nn}}$.
It only remains to establish Claim~\ref{clm:n1taui-lb}, which we do below.

\paragraph{Proof of Claim~\ref{clm:n1taui-lb}:}

Since event $\EcalUCB$ holds, we have that events $\Ecali_3$ and $\Ecalistar_3$ (defined in Equation~\eqref{eq:eventi3}) hold and the statement of Lemma~\ref{lem:mui-taui} holds. Then, we have:
\begin{subequations}
\begin{align}
    \samplemean_{i^*}(\numpulls) + C(\numpulls) &\geq \armmean_{i^*} + \left(1 - \frac{1}{\sqrt{2}}\right) C(\numpulls) \; \text{ for all } \numpulls \leq \frac{\nn}{32} \label{eq:event3a} \\
    \samplemean_\aidx(\numpulls) + C(\numpulls) &\leq \armmean_\aidx + \left(1 + \frac{1}{\sqrt{2}}\right) C(\numpulls) \; \text{ for all } \numpulls \leq \frac{\nn}{32},\text{ and } \label{eq:event3b} \\
    \numpulls_{\aidx, \tau_\aidx} &\geq \frac{\nn}{32} \label{eq:event3c}.
\end{align}
\end{subequations}


Let us define $\gamma_i$ as the $\nicefrac{\nn}{32}$-th time that arm $i$ is pulled, and $\underline{\gamma_i}$ as the $(\nicefrac{\nn}{32}-1)$-th time that arm $i$ is pulled. 

We will prove the lemma for the explicit choice $c = \nicefrac{1}{968}$.
We now have two cases.

\underline{Case 1:} $\numpulls_{i^*, \gamma_\aidx - 1} \geq c\nn$. In this case, the claim follows immediately, since on event $\Ecal''_3$, we have $\numpulls_{i^*, \tau_i } \geq \numpulls_{i^*, \gamma_\aidx}$.

\underline{Case 2:} $\numpulls_{i^*, \gamma_\aidx - 1} < c\nn$. 
As a consequence of the above inequalities, we have:
\begin{align*}
    \samplemean_{i^*}(\numpulls_{i^*, \gamma_\aidx -1}) + C(\numpulls_{i^*, \gamma_\aidx-1}) &\geq \armmean_{i^*} + \left(1 - \frac{1}{\sqrt{2}}\right) C(\numpulls_{i^*, \gamma_\aidx-1}) \\
    &\geq  \armmean_{i^*} + \left(1 - \frac{1}{\sqrt{2}}\right) C(c \nn) \\
    &= \Delta_\aidx + \armmean_\aidx +\left(1 - \frac{1}{\sqrt{2}}\right) C(c \nn) \\
    &\geq \Delta_\aidx + \samplemean_\aidx\left(\frac{\nn}{32} - 1\right) - \frac{1}{\sqrt{2}} C\left(\frac{\nn}{32} - 1\right) + \left(1 - \frac{1}{\sqrt{2}}\right) C(c \nn),
\end{align*}
where the first inequality follows from Equation~\eqref{eq:event3a}, the second inequality is because $C(\cdot)$ is decreasing in its argument, and the third inequality follows from Equation~\eqref{eq:event3b}.
By definition, $\numpulls_{i, \underline{\gamma}_\aidx} = \nicefrac{\nn}{32} - 1$, and by Lemma~\ref{lem:mui-taui} arm $\aidx$ must be pulled at least one more time. Furthermore, since $c = \nicefrac{1}{968}$ we have
\begin{align*}
- \frac{1}{\sqrt{2}} C\left(\frac{\nn}{32} - 1\right) + \left(1 - \frac{1}{\sqrt{2}}\right) C(c \nn) &= - \frac{1}{\sqrt{2}} C\left(\frac{\nn}{32} - 1\right) + 5.5 \left(1 - \frac{1}{\sqrt{2}}\right) C\left(\frac{\nn}{32}\right) \\
&\geq 3.2 \Delta_i \\
&>  C\left(\frac{\nn}{32} - 1\right) - \Delta_\aidx,
\end{align*}
where the final two steps follow from the relations 
\[
4 \Delta_i \leq C(\kappa_i/32) \leq C(\kappa_i/32 - 1) \leq 4.2 \Delta_i.
\]
Putting together the pieces yields
\[
\samplemean_{i^*}(\numpulls_{i^*, \gamma_\aidx-1}) + C(\numpulls_{i^*, \gamma_\aidx-1}) > \samplemean_{i}(\numpulls_{i, \gamma_\aidx-1}) + C(\numpulls_{i, \gamma_\aidx-1}) = \samplemean_\aidx(\numpulls_{\aidx, \underline{\gamma}_\aidx}) + C(\numpulls_{\aidx, \underline{\gamma}_\aidx}).
\]
But by definition, arm $\aidx$ is pulled at round $\gamma_\aidx$, and so we have the desired contradiction. Consequently, we must have $\numpulls_{i^*, \gamma_\aidx} > c\nn$. The claim then follows from the observation that $n_{i, \tau_i} \geq n_{i, \gamma_i}$ (by Eq~\eqref{eq:event3c}).
\qed
\section{Additional Experimental Details and Results}\label{app:exp}

In this section, we provide additional details on the experiments with simulated and battery charging data in Section~\ref{sec:expts-main}, as well as further experimental results.



\subsection{Simulation results with SAE}

First, we present the simulation results with the SAE algorithm in Figure~\ref{fig:all_sim_experiments_appendix}.

\begin{figure}[ht]
    \centering
    \subfloat[\centering ]{{\includegraphics[width=0.3\textwidth]{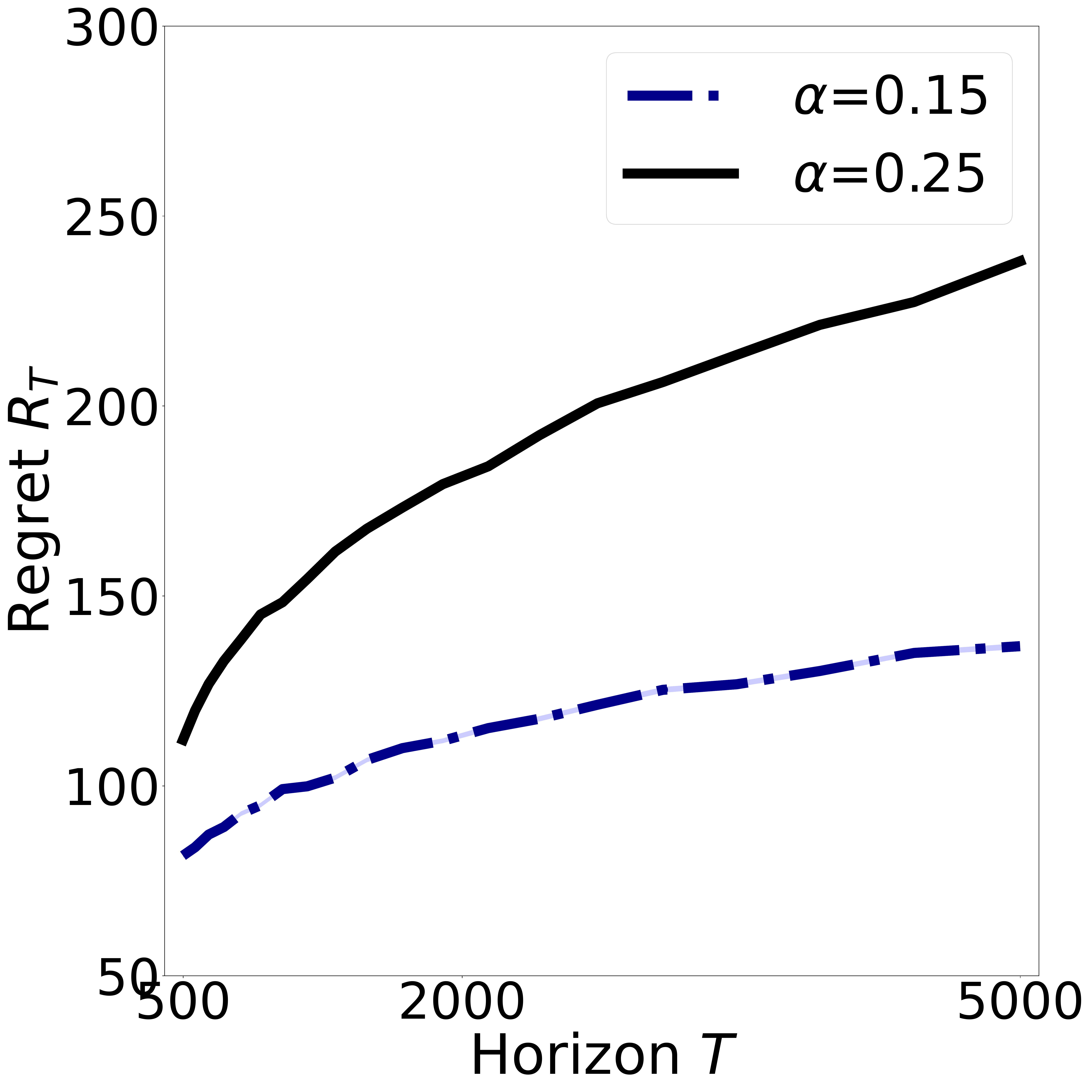} }}
    \hspace{0.01\textwidth}%
    \subfloat[\centering ]{{\includegraphics[width=0.3\textwidth]{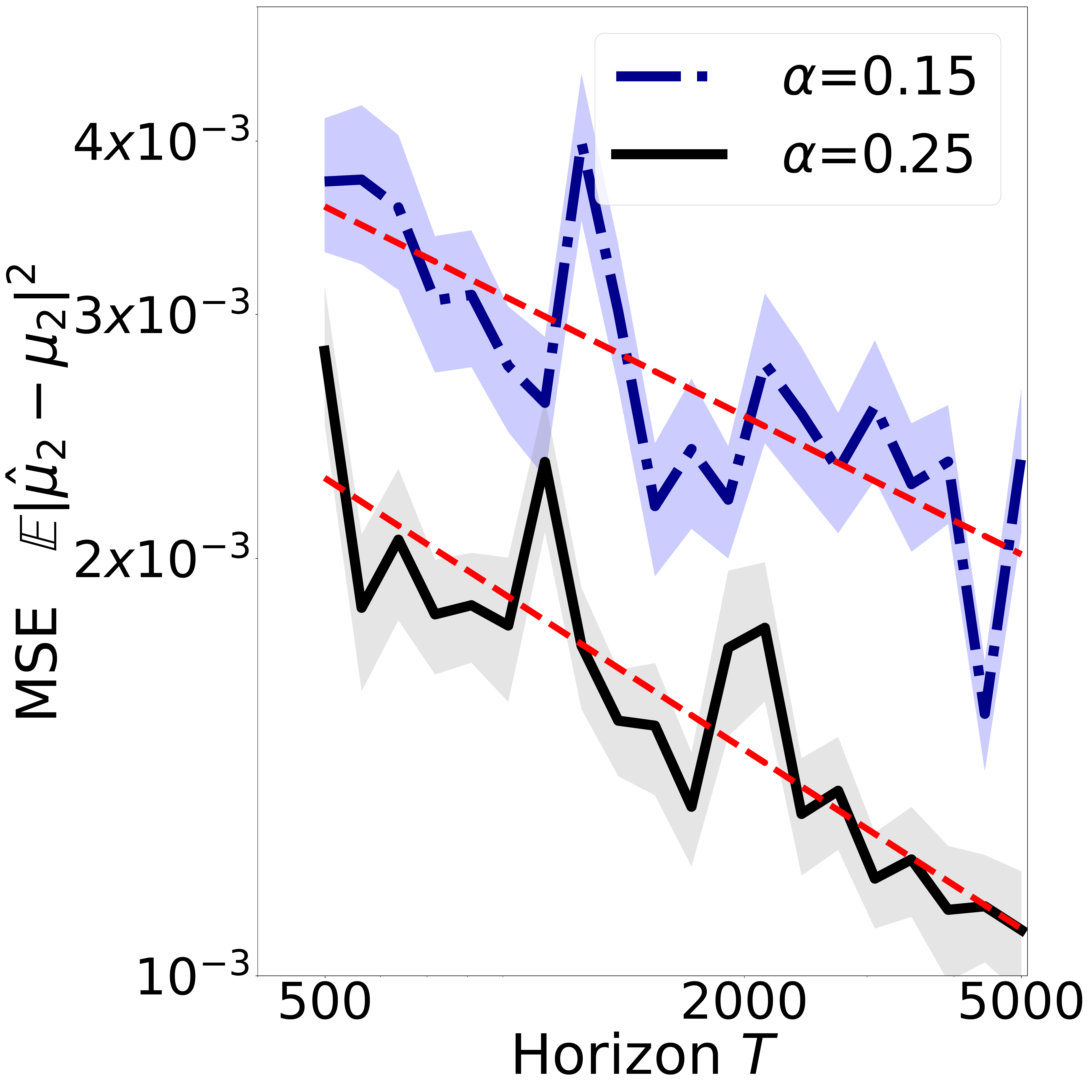} }}
    \subfloat[\centering ]{{\includegraphics[width=0.3\textwidth]{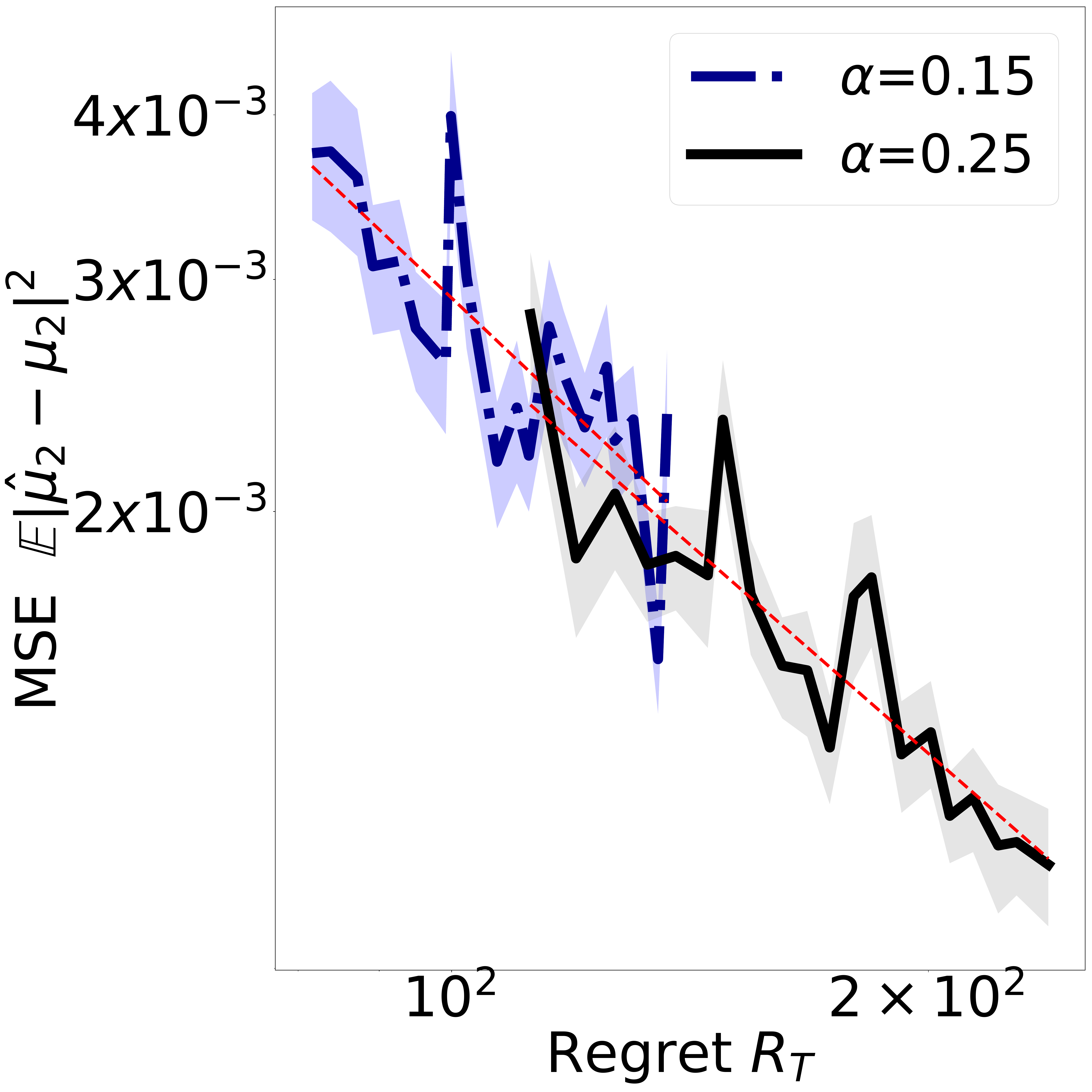} }} \\
    \subfloat[\centering ]{{\includegraphics[width=0.3\textwidth]{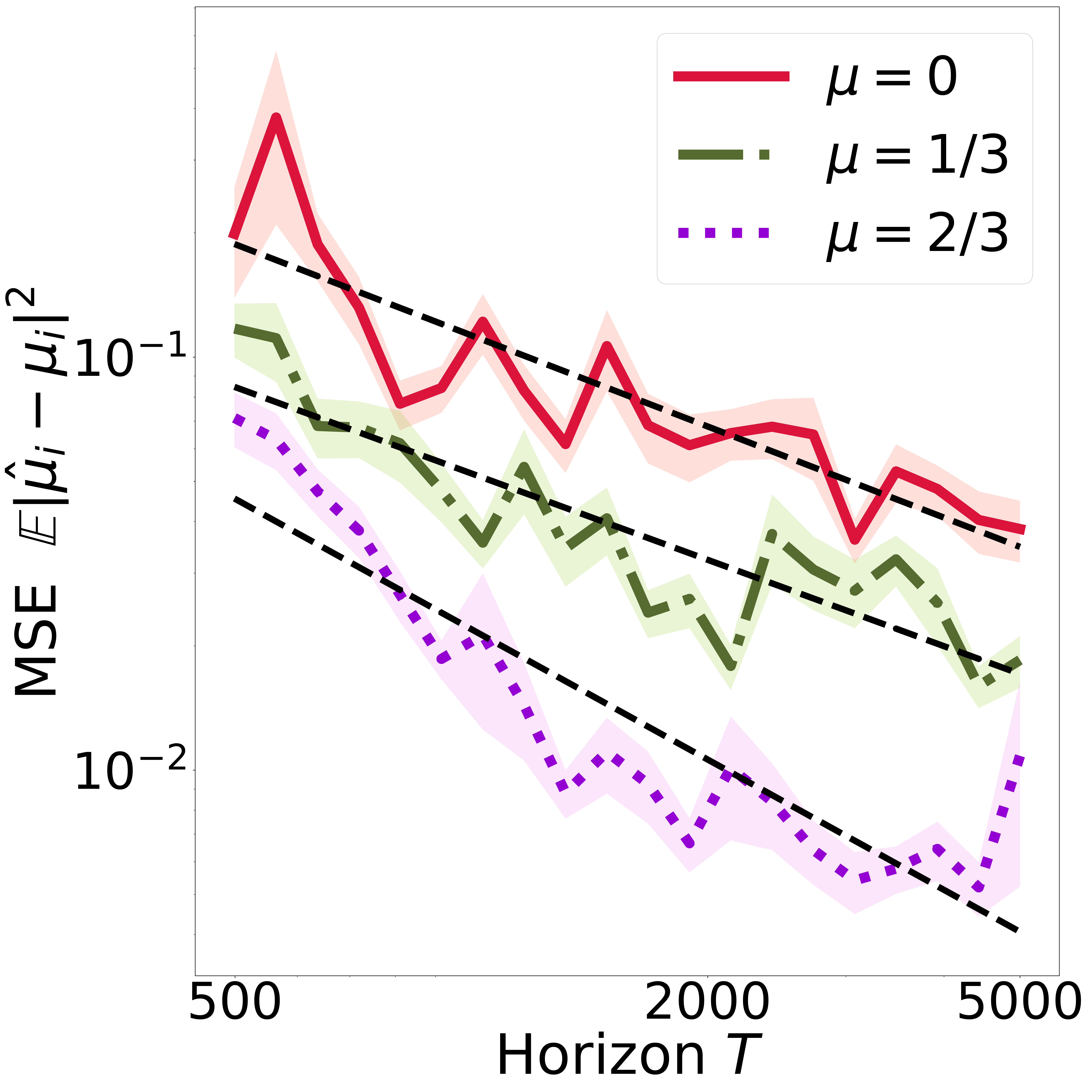} }}
    \caption{Results of 250 runs of simulation experiments for the SAE algorithm. Figures (a-c) are for a two-armed bandit instance with $\mu = (1, 1/2)$ and Gaussian rewards with unit variance. Here, individual curves represent two values of $\alpha \in \{0.15, 0.25\}$. Figure (d) is a $4$-armed instance with  $\mu = (1, 2/3, 1/3, 0)$ and Gaussian rewards with variance $1/4$. Here, individual curves represent the three suboptimal arms. Overall, these log-log plots corroborate our principal finding that better reward estimation is achievable from higher regret demonstrations; see the text for a detailed discussion.}
    \label{fig:all_sim_experiments_appendix}
\end{figure}


\subsection{Simulating multi-armed bandits}

We design a simple simulator for $K$-arm multi-bandit instances. In all our experiments, we assume Gaussian rewards for each arm, i.e $r_i \sim N(\mu_i, \sigma^2)$. Note that we fixed the variance $\sigma^2$ across all arms. The code for reproducing the results will be shared publicly on publication.

\paragraph{Algorithm implementation:} Algorithms \ref{alg:SAE-any-regret} and \ref{alg:UCB-any-regret} provide $O(T^\alpha)$ regret for any $\alpha \in (0,1)$. Using our simulator, we collect demonstrations for different $\alpha \in \{0.15, 0.25\}$. 

For the two-armed bandit instance, we let $\mu_1 = 1$ and $\mu_2 = 0.5$, i.e with a fixed $\Delta=0.5$.
For the $K$-arm instances, we choose the means $\mu_i$ to be linearly spaced between $[0, 1]$, (e.g for K=4, $\mu=(1, 2/3, 1/3, 0)$) with fixed variance $\sigma^2=0.25$ across all arms. 
We report results averaged over $\NUMRUNS$ independent demonstrations. 
We evaluate the estimators in Procedures \ref{proc:SAE-estimator} and \ref{proc:UCB-estimator} for different time-horizons $T$, evenly spaced in log space $\in [500, 5000]$.




\paragraph{Mean-squared error vs regret:} In Corollary~\ref{cor:two-armed}, we characterized the relationship between error in estimating rewards, and regret of the demonstrator's algorithm. Recall that for different values of $T$, the regret of both our upper-confidence-bound algorithms grows as $O(T^\alpha)$. To study relationship between mean-squared error (MSE) and regret, we fix $T$ and collect multiple demonstrations for instance with a fixed gap $\Delta$. The mean regret $R_T$ and corresponding standard error are computed by averaging across these demonstrations. Similarly, we estimate the gap using Procedure \ref{proc:UCB-estimator} and \ref{proc:SAE-estimator}, measuring MSE as average of the squared error for $T \in [500, 5000]$. 

\subsection{Dependence on $\Delta$} 

\begin{figure}[h]
    \centering
    \subfloat[\centering ]{{\includegraphics[width=0.35
    \textwidth]{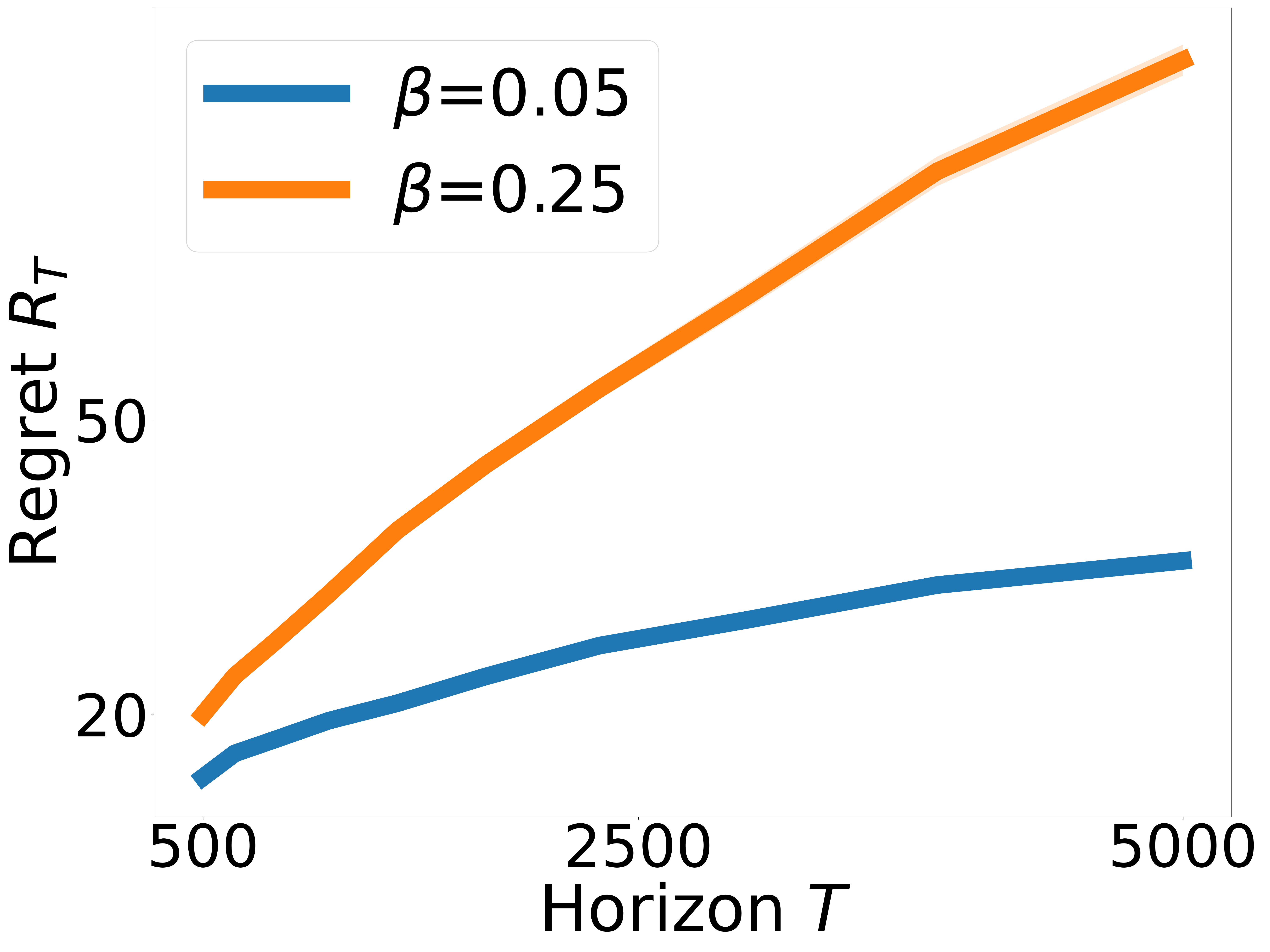} }}
    \hspace{0.01\textwidth}%
    \subfloat[\centering ]{{\includegraphics[width=0.3\textwidth]{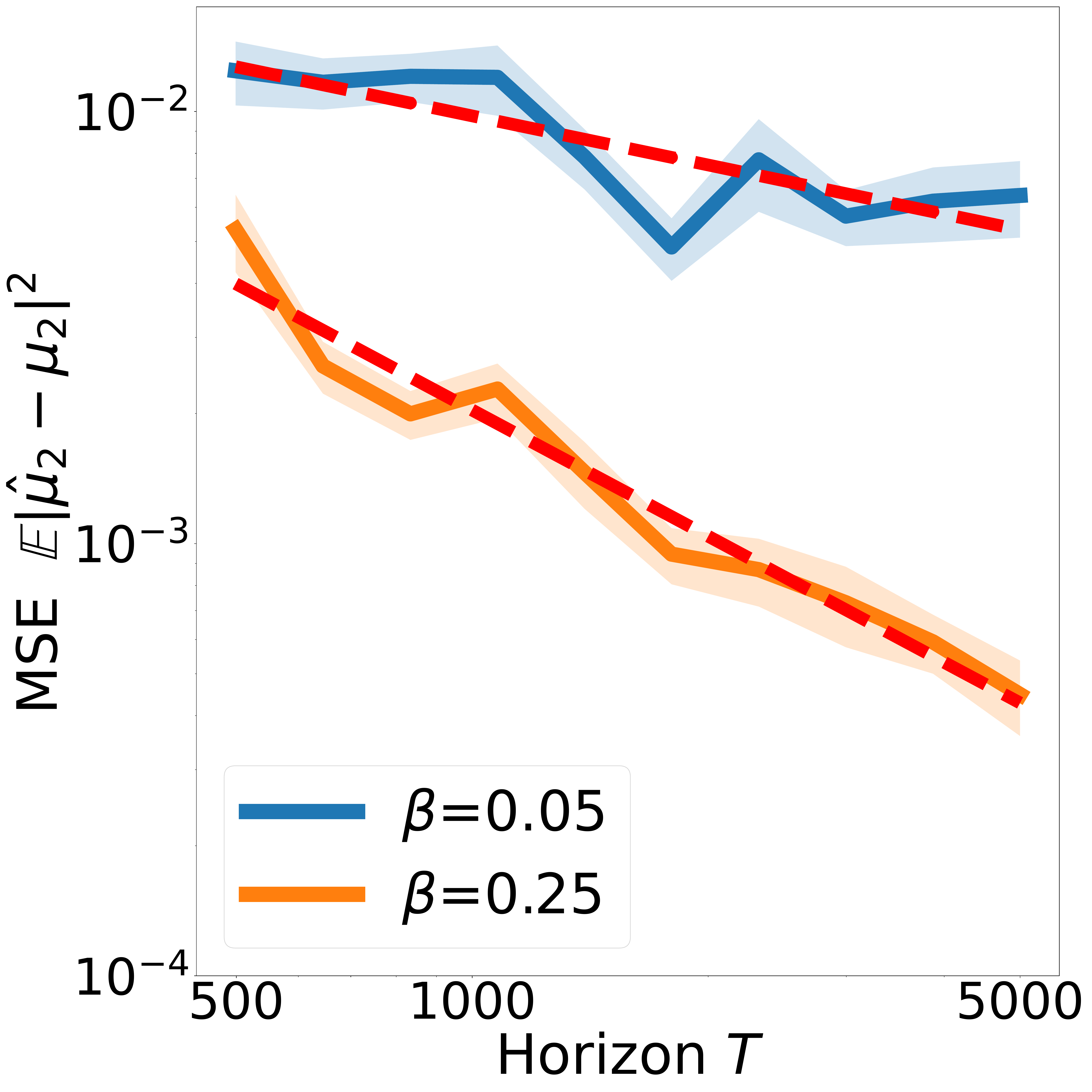} }}
    \caption{For the two-arm case, we construct MAB instances of varying difficulty by choosing the value of the suboptimality gap $\Delta_T$ for horizon $T$ as $\Delta_T = 1/T^\beta$. In figure (a), we verify that the regret increases for higher $\beta$ (i.e for fixed T, suboptimality gap reduces with higher $\beta$). Figure (b) empirically supports our predictions; for fixed $\beta$, estimation error decreases with $T$.}
    \label{fig:vary_delta}
\end{figure}

In this section, we explore the role of the suboptimality gap $\Delta$ in reward estimation for the case $K = 2$. Corollary~\ref{cor:two-armed} predicts that decreasing $\Delta$ will make reward estimation easier because it increases the regret.
To investigate whether this happens empirically, we make $\Delta_T$ decay smoothly with increasing time-horizon $T$, following the power law $\Delta_T = 1/T^\beta$, for $\beta \in (0, 0.5)$.
We expect the following behavior:
\begin{enumerate}
    \item for fixed $\beta$, the reward estimation error decays with increasing horizon $T$.
    \item for fixed horizon $T$, the rate of decay of reward estimation error increases with $\beta$.
\end{enumerate}

We consider two cases: $\beta \in \{0.05, 0.5\}$.
Figure~\ref{fig:vary_delta} shows that the regret indeed increases with a decrease in suboptimality gap (higher $T$). 
We observe that the reward estimation error decreases with $T$ for both values of $\beta$.
Moreover, the estimation error decays faster for larger values of $\beta$.
 
\subsection{Comparisons with the naive estimator}\label{app:naive-estimator-results}



We provide further comparisons between the proposed estimator and the naive estimator as described in Section~\ref{sec:upper_bound}. We evaluated the naive estimator with different values of $C_0 \in \{ 0.2, 0.75, 1.0, 1.5\}$  in~Fig. \ref{fig:simple_baseline}. The experiment setting is similar to Fig \ref{fig:all_sim_experiments} with two-arm stochastic bandits and a UCB demonstrator, where the mean rewards of the two arms are $\mu_1=1.0, \mu_2=0.5$ with standard deviation $\sigma=1.0$. For the baseline plots in Fig \ref{fig:simple_baseline} (b)-(e), the horizon $T$ is linearly spaced in $[500, 1000]$. All the results are averaged over 50 runs and plotted with the standard errors.

\begin{figure}[ht]
    \centering
    \subfloat[\centering ]{{\includegraphics[width=0.3\textwidth]{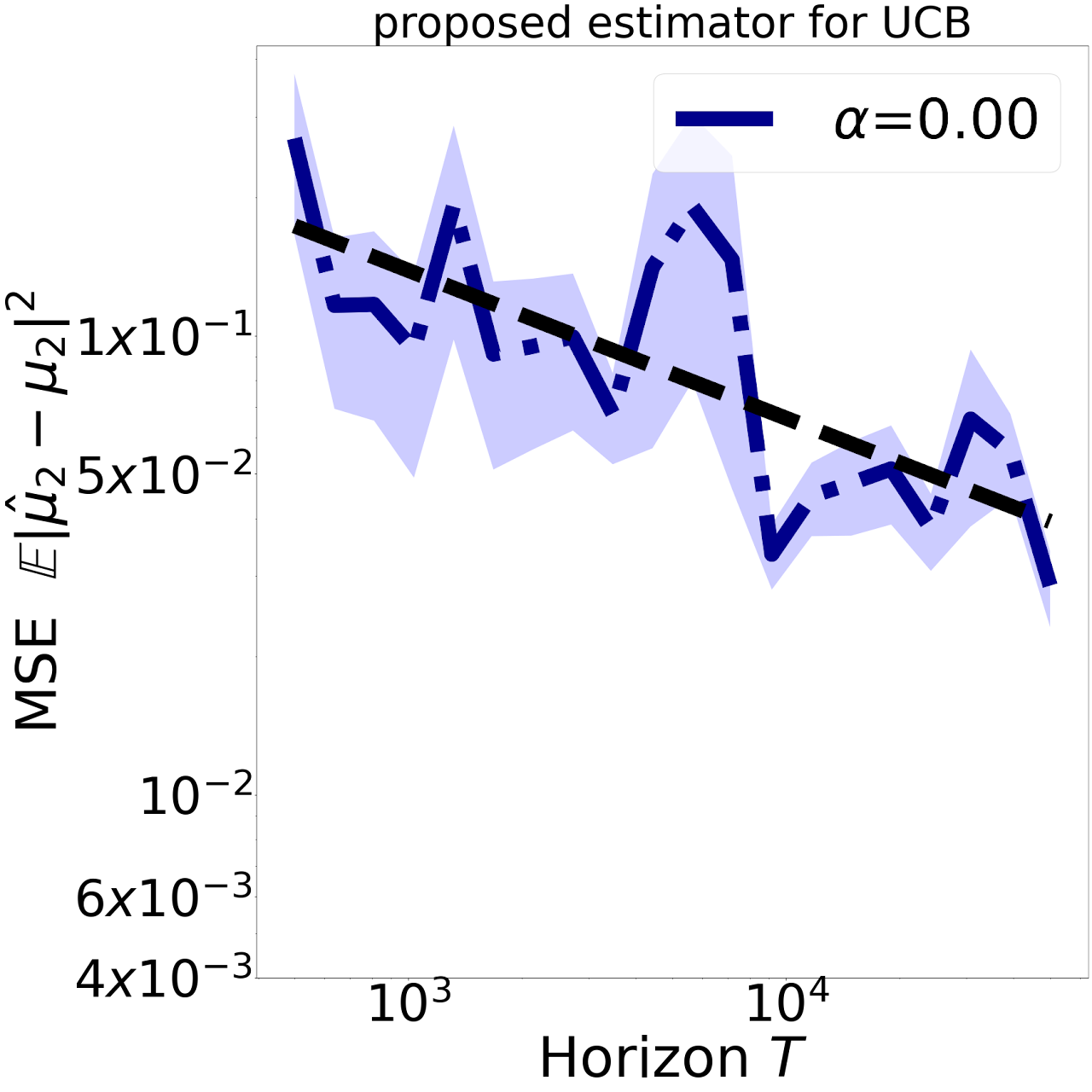} }}
    \hspace{0.01\textwidth}%
    \subfloat[\centering ]{{\includegraphics[width=0.3\textwidth]{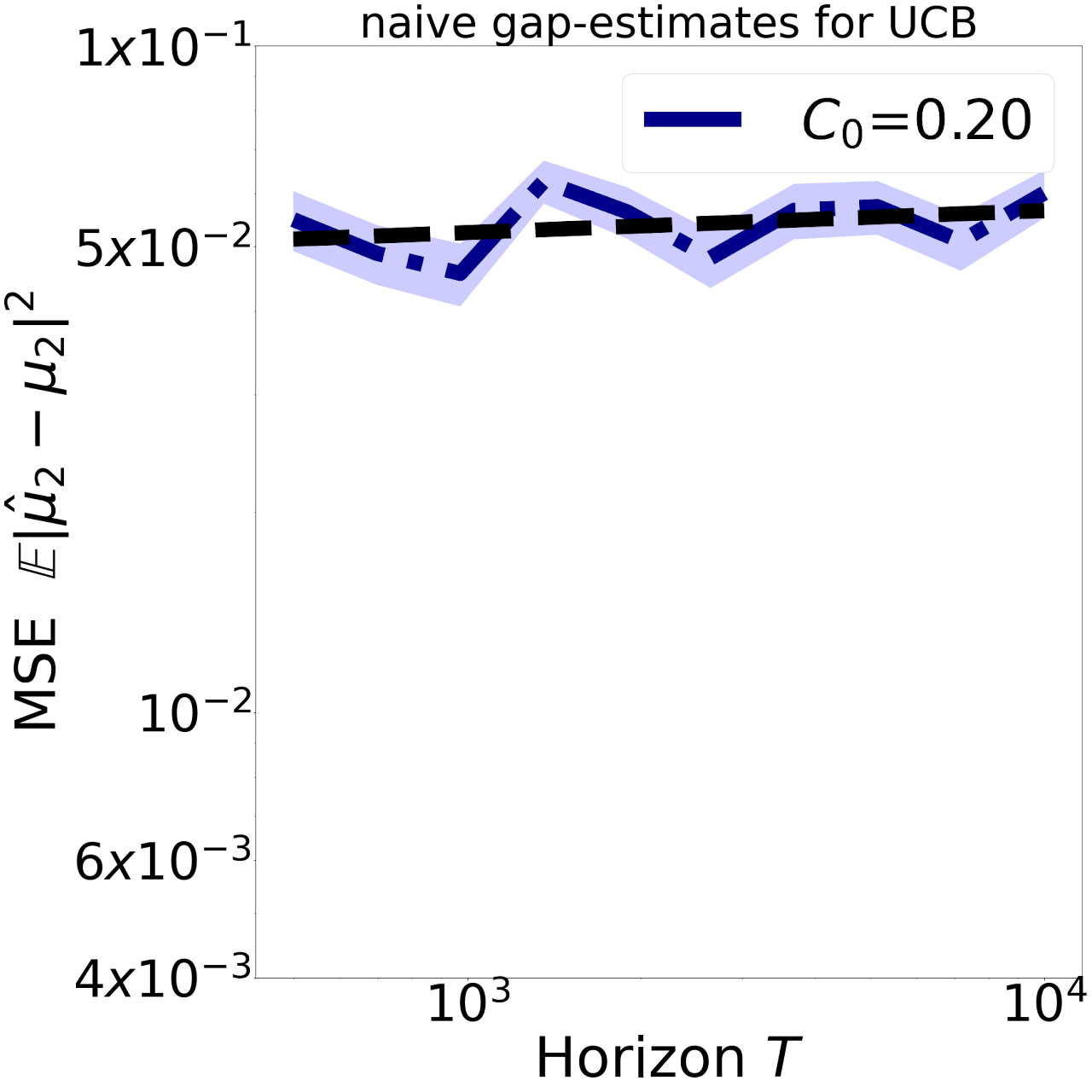} }}
    \newline
    \subfloat[\centering ]{{\includegraphics[width=0.3\textwidth]{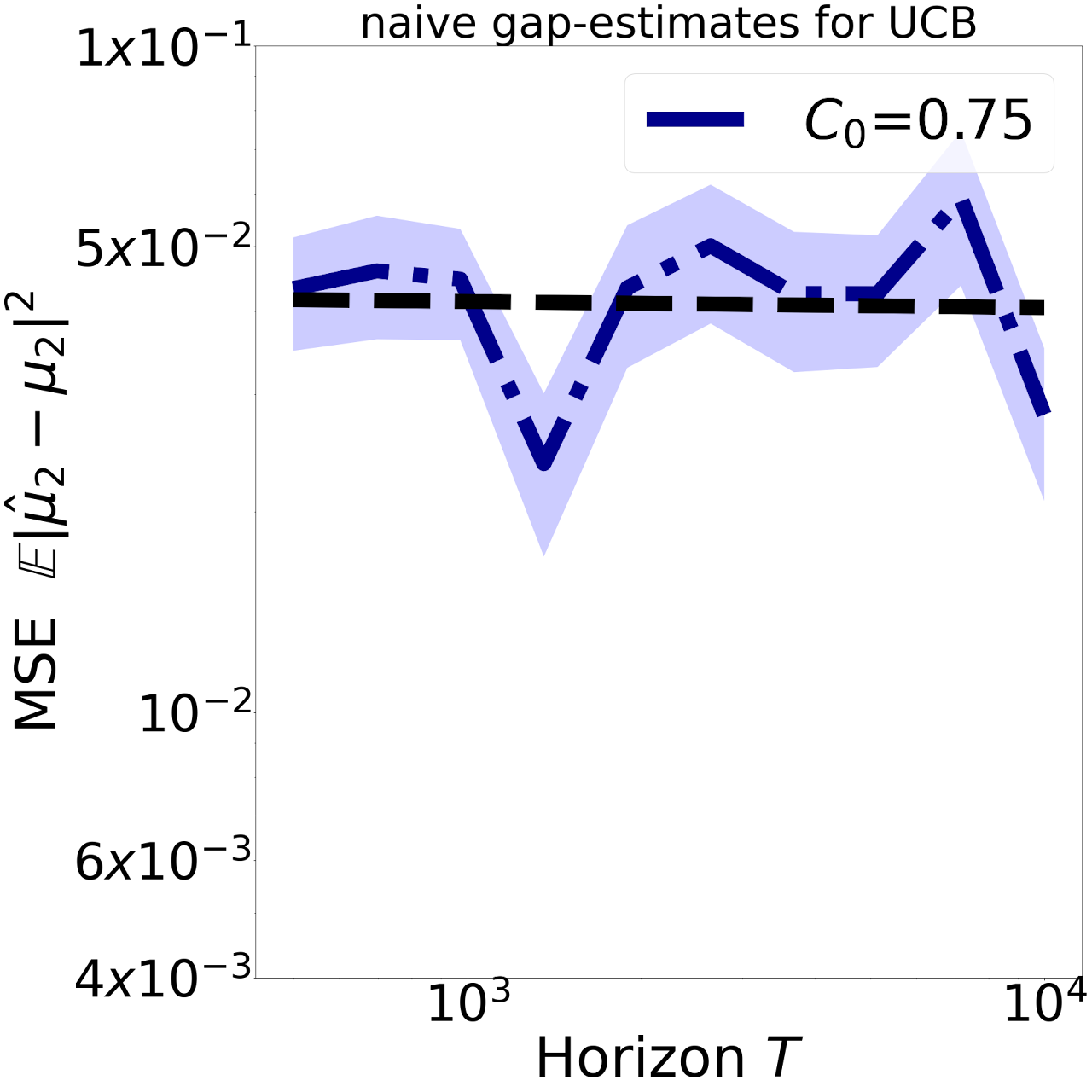} }}
    \subfloat[\centering ]{{\includegraphics[width=0.3\textwidth]{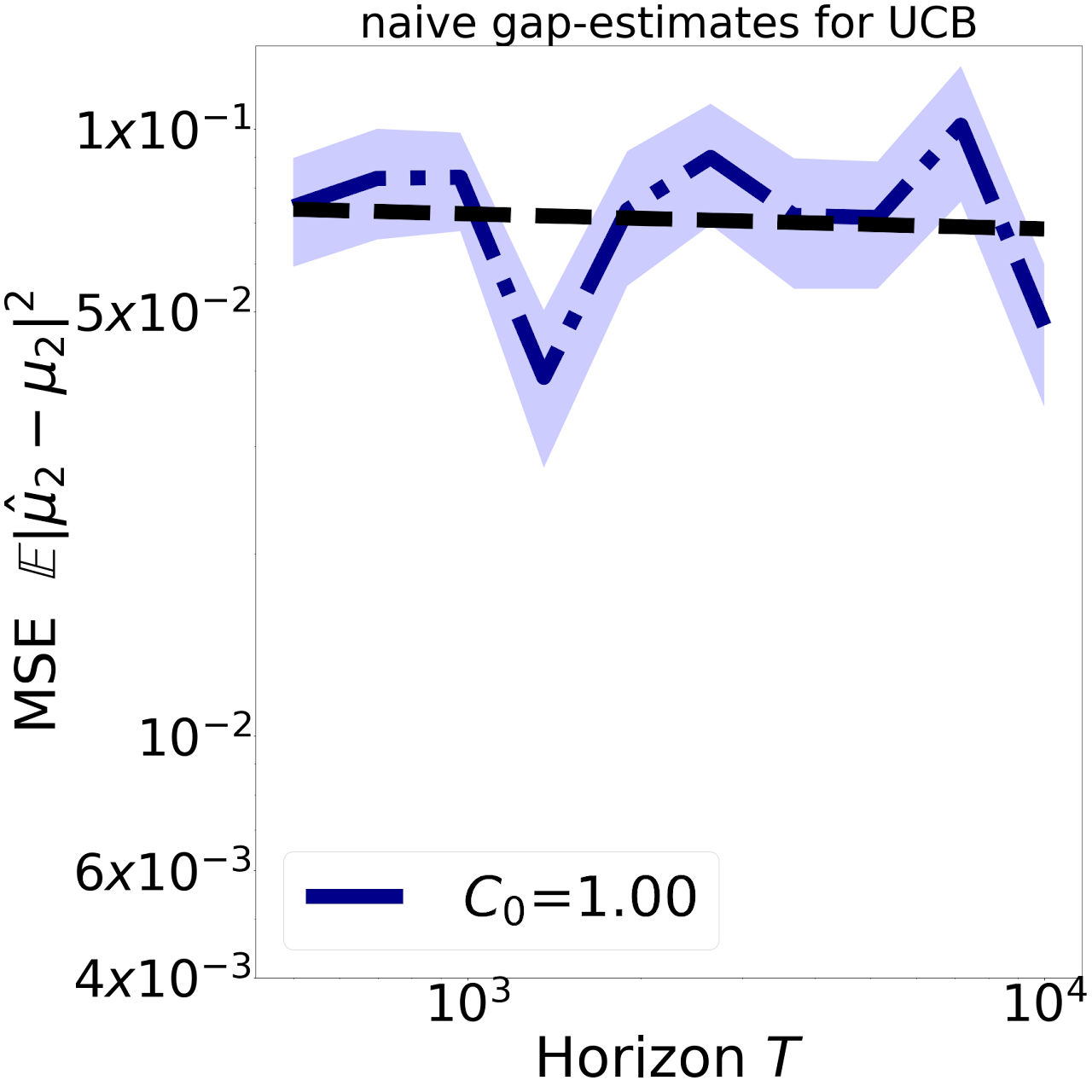} }}
    \subfloat[\centering ]{{\includegraphics[width=0.3\textwidth]{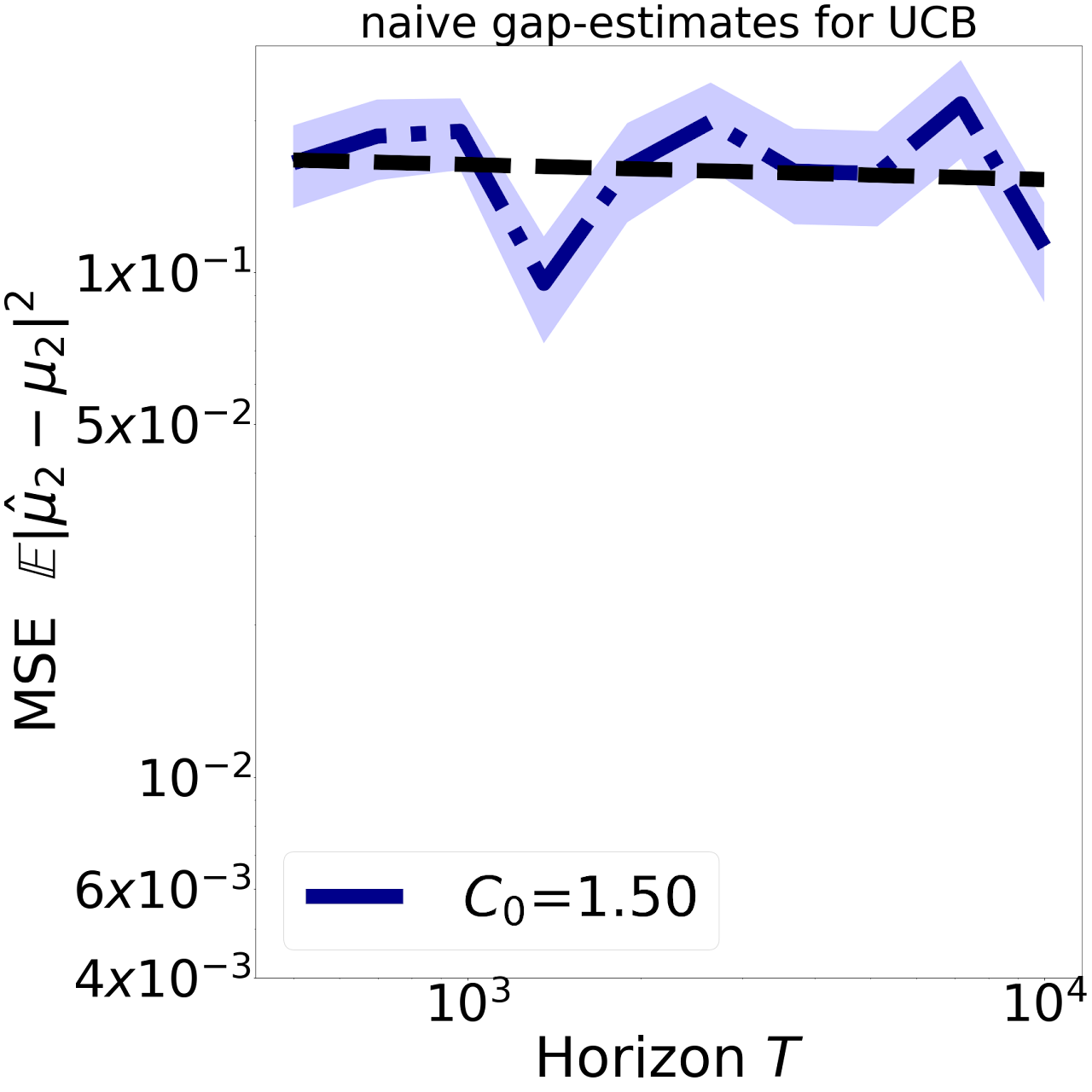} }}
    \caption{Comparing our proposed estimator with a simple estimator $\hat{\Delta} = C_0 \sqrt{ \frac{\log T}{n_a}}$ for $C_0 \in \{0.2, 0.75, 1.0, 1.5 \}$.}
    \label{fig:simple_baseline}
\end{figure}

\subsection{Battery charging}
 
\paragraph{Dataset:} The original dataset \citep{attia2020closed} provides battery life-cycles for 224 protocols, in different temperature regimes. The problem of identifying the optimal protocol (with highest mean lifetime) is cast as a MAB problem with 224 arms, where the three regimes are different instances. The distribution of reward means $\mu_i$ varies significantly across the regimes (see Figure~\ref{fig:batter_exp_appendix}).
In our experiments, we compare the ``low" and ``high" temperature regimes.
We subsample 20 arms which are representative of the distribution. In particular, we generate a histogram of rewards for the ``high" setting with $n=20$ bins, and pick an arm randomly from each bin. We fix this subset of $20$ arms for all our experiments in this section. 
Unless mentioned, we evaluate the estimators with number of independent runs ($N$) as $N=\NUMRUNS$. 

\paragraph{Normalization:} The lifecycle of batteries across the regimes is in the range [573, 1208], with empirical standard-deviation (for the Gaussian prior) given by $\sigma = 164$. We normalize the distribution parameters such that $\mu_i  \in [0, 1]$ for all arms $i \in [K]$. The normalization constant is fixed to be maximum of the life-cycles across all environments, i.e., $\mu_{max}=1208$. This preprocessing provides instances with $\mu_i \in (0.474, 1]$ and $\sigma^2=0.018$.

\paragraph{Adjusting for variance:} The estimators in Procedures \ref{proc:SAE-estimator} and \ref{proc:UCB-estimator} are defined under the assumption that $\sigma=1$. For non-unit $\sigma$, we extend the procedures to their \textit{variance-adjusted} versions, scaling the confidence interval by $5\sigma$, i.e $C_{i,t} = 5\sigma\sqrt{\frac{T^\alpha - 1}{\alpha n_{i,t}}}$, while still using the same estimators. 

\begin{figure}[ht]
    \centering
    \subfloat[\centering ]{{\includegraphics[width=0.25\textwidth]{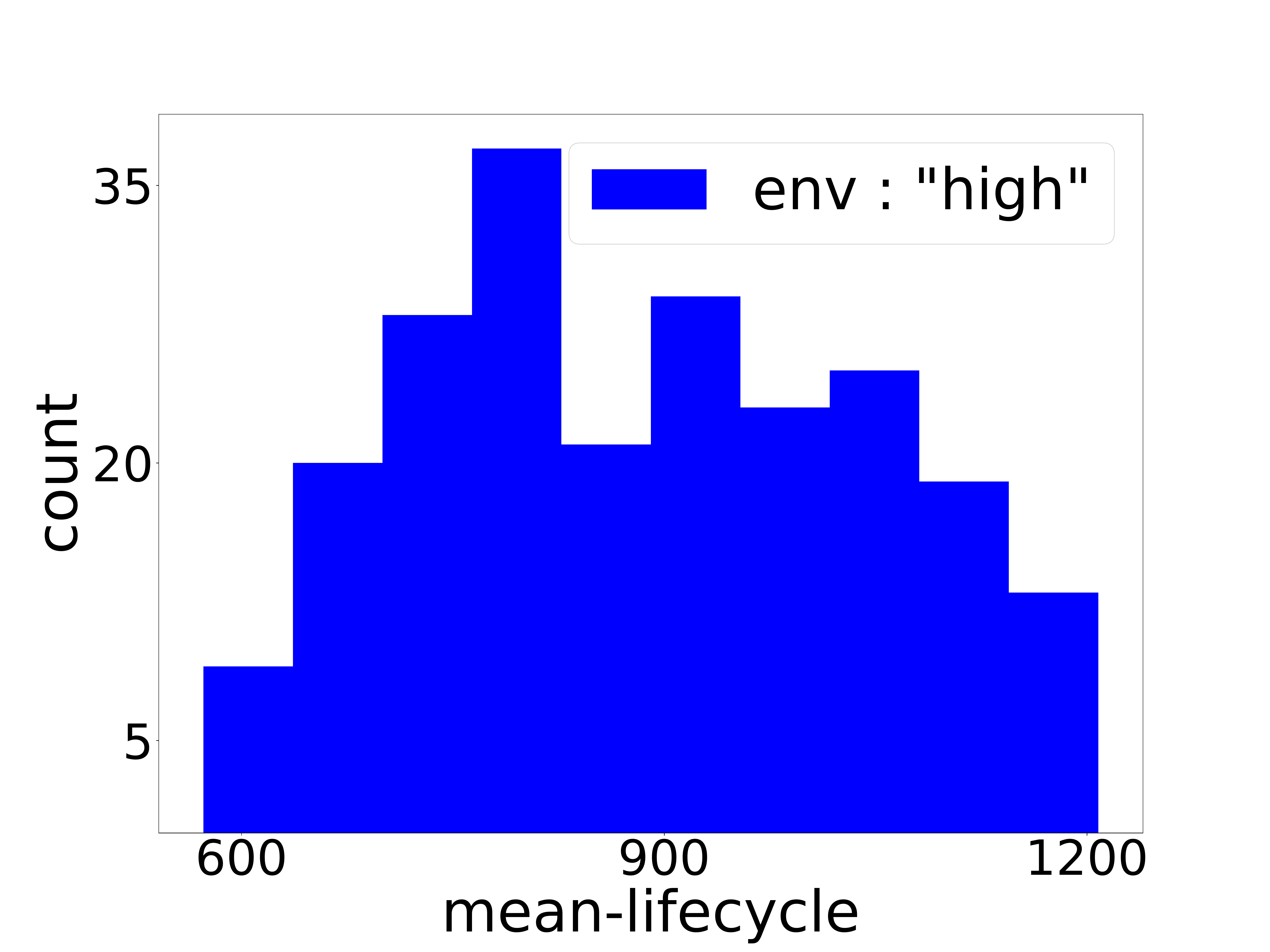} }}
    \hspace{0.01\textwidth}%
    \subfloat[\centering ]{{\includegraphics[width=0.25\textwidth]{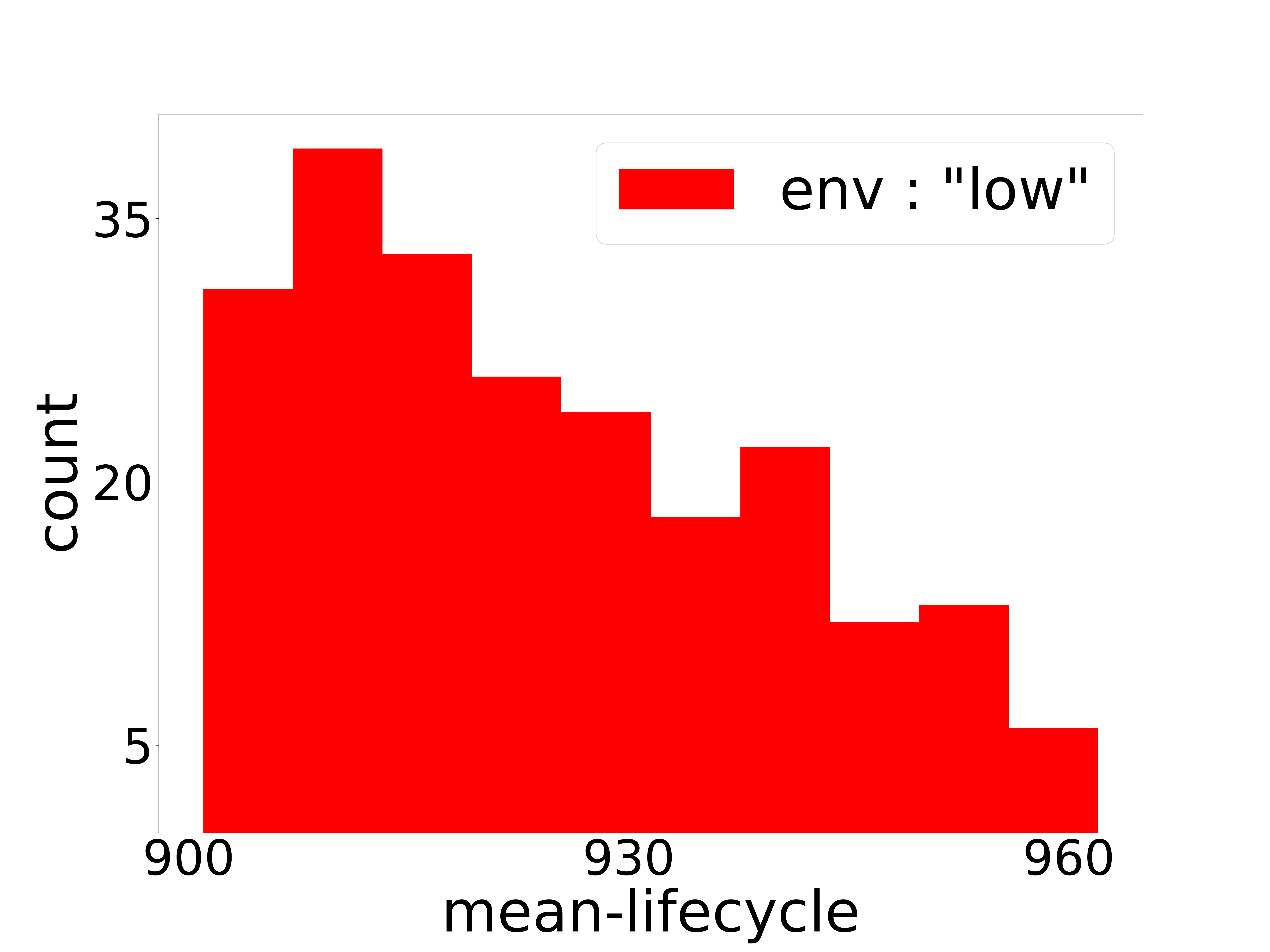} }}
    \hspace{0.01\textwidth}%
    \subfloat[\centering ]{{\includegraphics[width=0.2\textwidth]{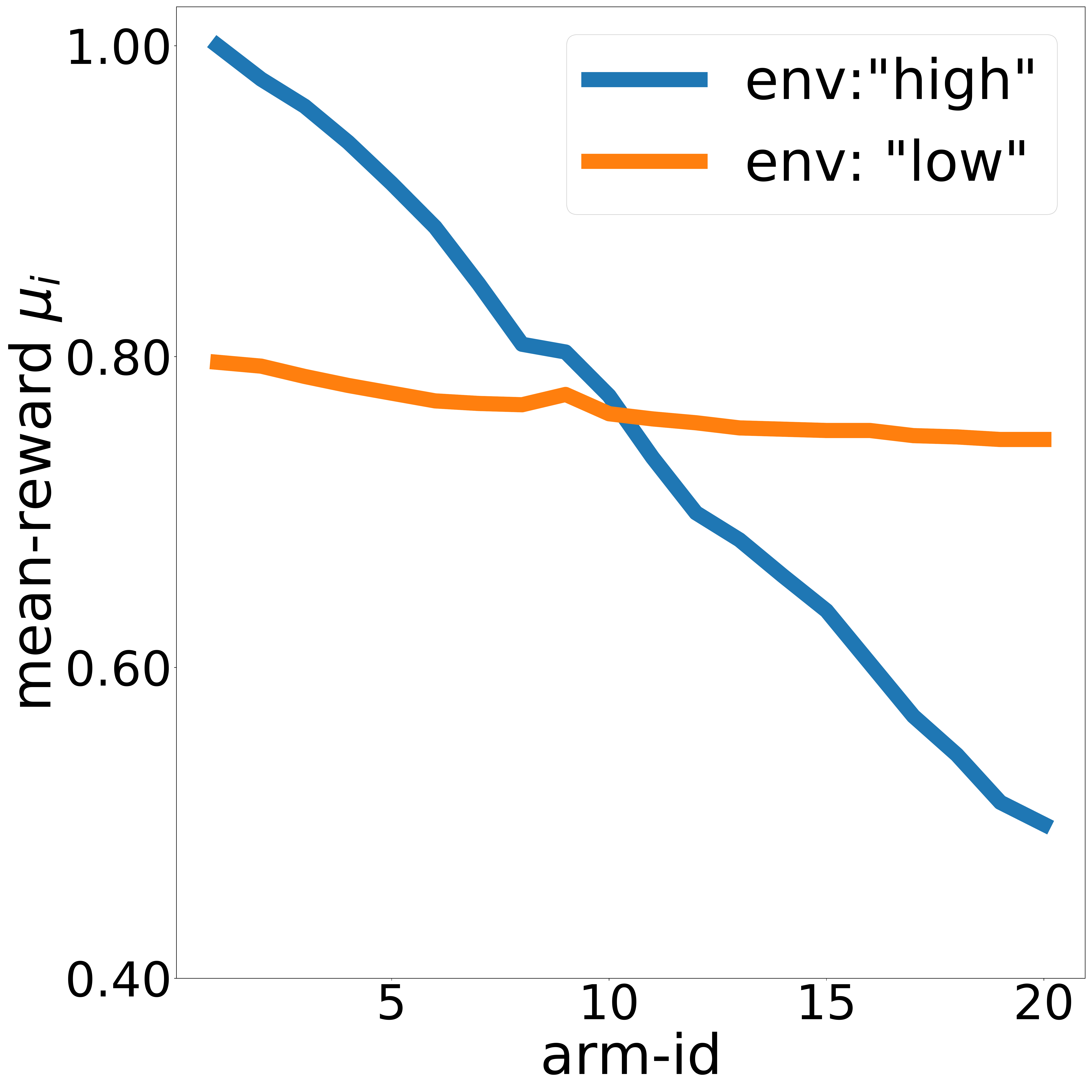} }}%
    \hspace{0.01\textwidth}%
    \subfloat[\centering ]{{\includegraphics[width=0.2\textwidth]{figures/experiments/battery/error_ucb_mse_all_env_4arm_5K.pdf} }}%
    \caption{Reward distributions in the ``low" and ``high" temperature regimes vary significantly. Figure (d) represents the error of estimating arm 12 in both ``low" and ``high" regimes with $4$ protocols.}
    \label{fig:batter_exp_appendix}
\end{figure}

\paragraph{Additional results:} 
While the distributions of mean lifetime vary between the high and low temperature regimes, there are protocols that enjoy similar performance across both regimes. For instance, Figure~\ref{fig:batter_exp_appendix}(a) shows that arm $8$ has normalized rewards of $0.802$ and $0.775$ in the high and low regimes respectively. 
In Figure~\ref{fig:batter_exp_appendix}(d), we demonstrate that it is easier to estimate the mean lifetime of this arm in the low regime. 
We run the same experiment for large subset of arms in the dataset. 
In this setting we take $\alpha=0.001$ to get a low-regret demonstrator, and fix T$\in \{25000, 45000, 70000\}$. 
In Figure~\ref{fig:battery_allarms}, we verify pictorially that the reward estimation error reduces uniformly across all arms as we increase the demonstration horizon (see the caption for a detailed explanation).


\begin{figure}[ht]
    \centering
    \includegraphics[width=\textwidth]{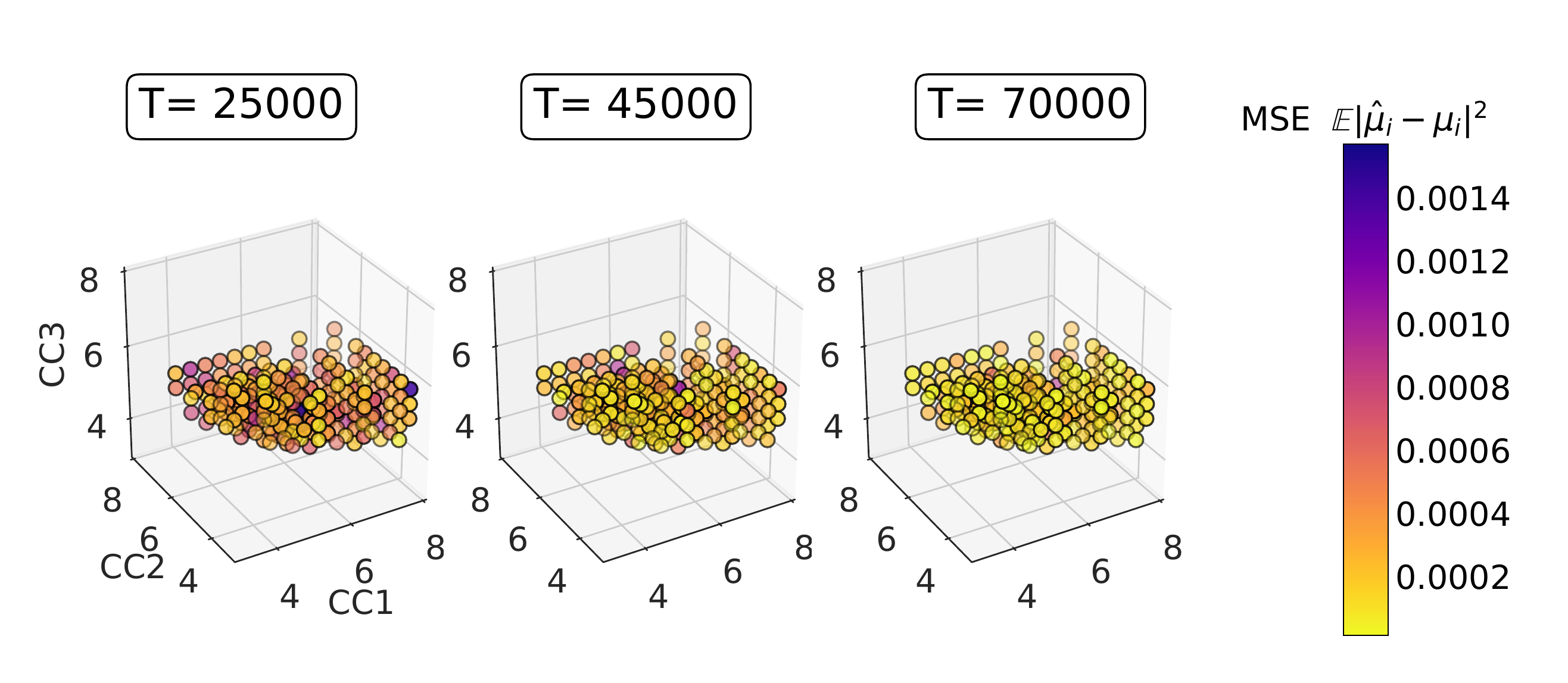}%
    \caption{Each charging protocol in the battery lifecycle dataset is defined by three independent variables (CC1, CC2, CC3). These parameters correspond to constant-currents applied to the battery in a specified range ($0–20\%, 20–40\%$ and $40–60\%$, respectively). Here, each point in the plot corresponds to one such protocol and the color profile represents mean-squared-error in estimating the average lifetime. As we increase the time-horizon $T$ of the demonstration, our estimates improve uniformly across protocols.}
    \label{fig:battery_allarms}
\end{figure}




\subsection{Gene expression}

\paragraph{Dataset:} Identifying the top genes responsible for virus replication could provide information about potential targets for antiviral therapy in the host. In one such study, \cite{hao2008drosophila} investigate 13K genes in \emph{drosophila} in the context of influenza, by adding fluorescence virus to single-gene knock-down cell strains. Measuring the fluorescence level, the authors estimate the importance of the corresponding gene in replication; where lower fluorescence indicates that the knock-down gene encourages replication. 
This problem of identifying top-$k$ genes under noisy measurements has previously been studied under the best-arm identification setting \citep{jun2016top}. 

\paragraph{Normalization:} Following the original dataset, we model rewards for arm $i$ to follow $N(\mu_i, 0.1)$. As indicated by Figure~\ref{fig:gene_expression} (a), the reward means  $\mu_i$ lie in the range $(-1.3, 2.01)$.
We normalize the reward means to be within the range $\mu_i \in$ [0, 1] by centering and scaling. 
Accordingly, the variance per arm is normalized to $0.0092$.
In summary, we have $r_i \sim N(\mu_i, 0.0092)$.

\paragraph{Results:} Our goal is to estimate the mean reward $\mu_i$ of each knock-down gene from a single demonstration with uniform error guarantees. We subsample K $\in \{100, 200, 400\}$ arms from a dataset of 12979 arms, and evaluate our estimator on each of the resulting instances. 
While sampling the subset with $K$ arms, we ensure that arm $12979$ is present across all instances, and track the error in estimating its mean reward across different instances. 
In Figure~\ref{fig:gene_expression}(b), we demonstrate that our estimator works well across all values of $K$.
Figure~\ref{fig:gene_expression}(b) also shows that the estimation error depends minimally on $K$ as predicted by our theory.

\begin{figure}[ht]
    \centering
    \subfloat[\centering ]{{\includegraphics[width=0.27\textwidth]{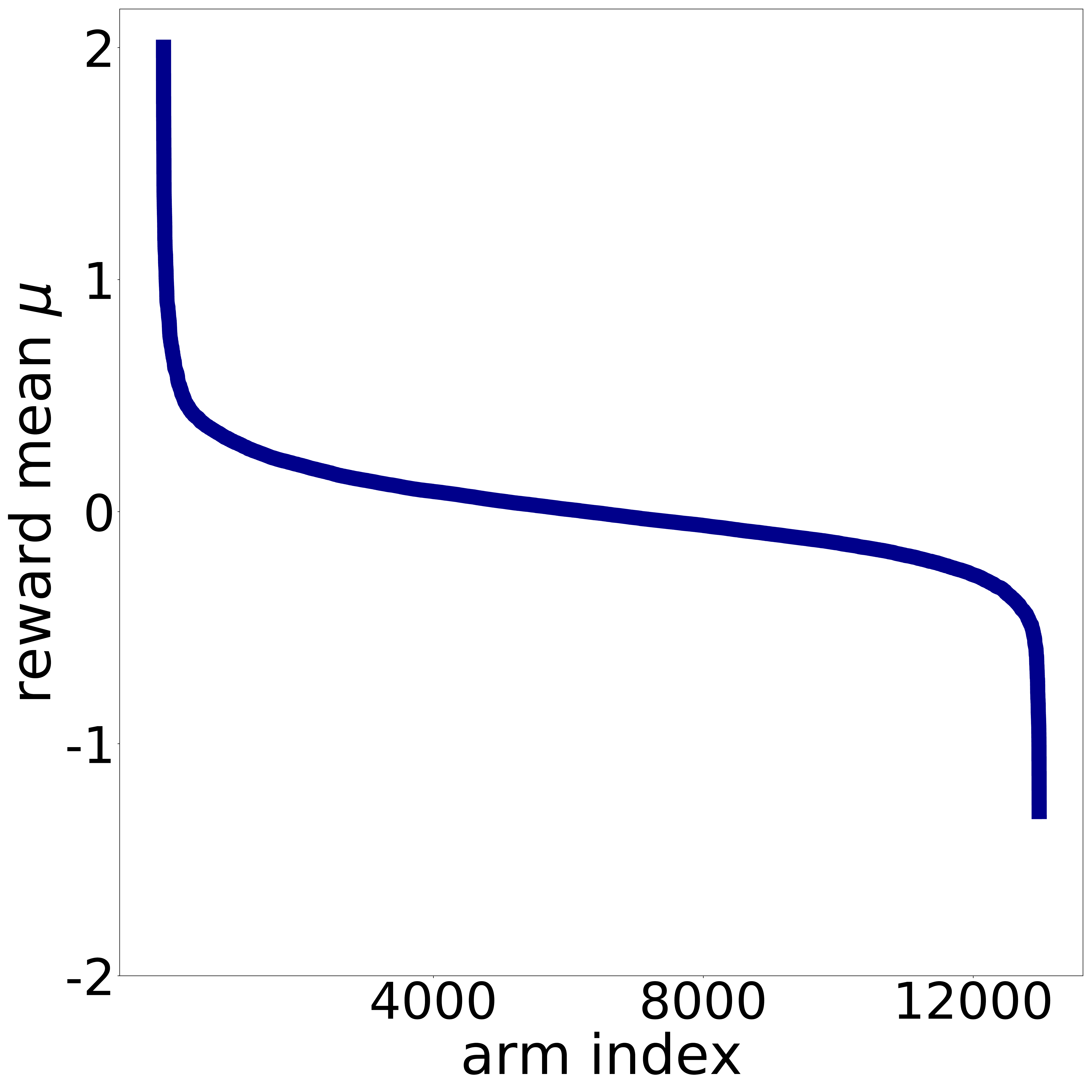} }}%
    \hspace{0.01\textwidth}%
    \subfloat[\centering ]{{\includegraphics[width=0.27\textwidth]{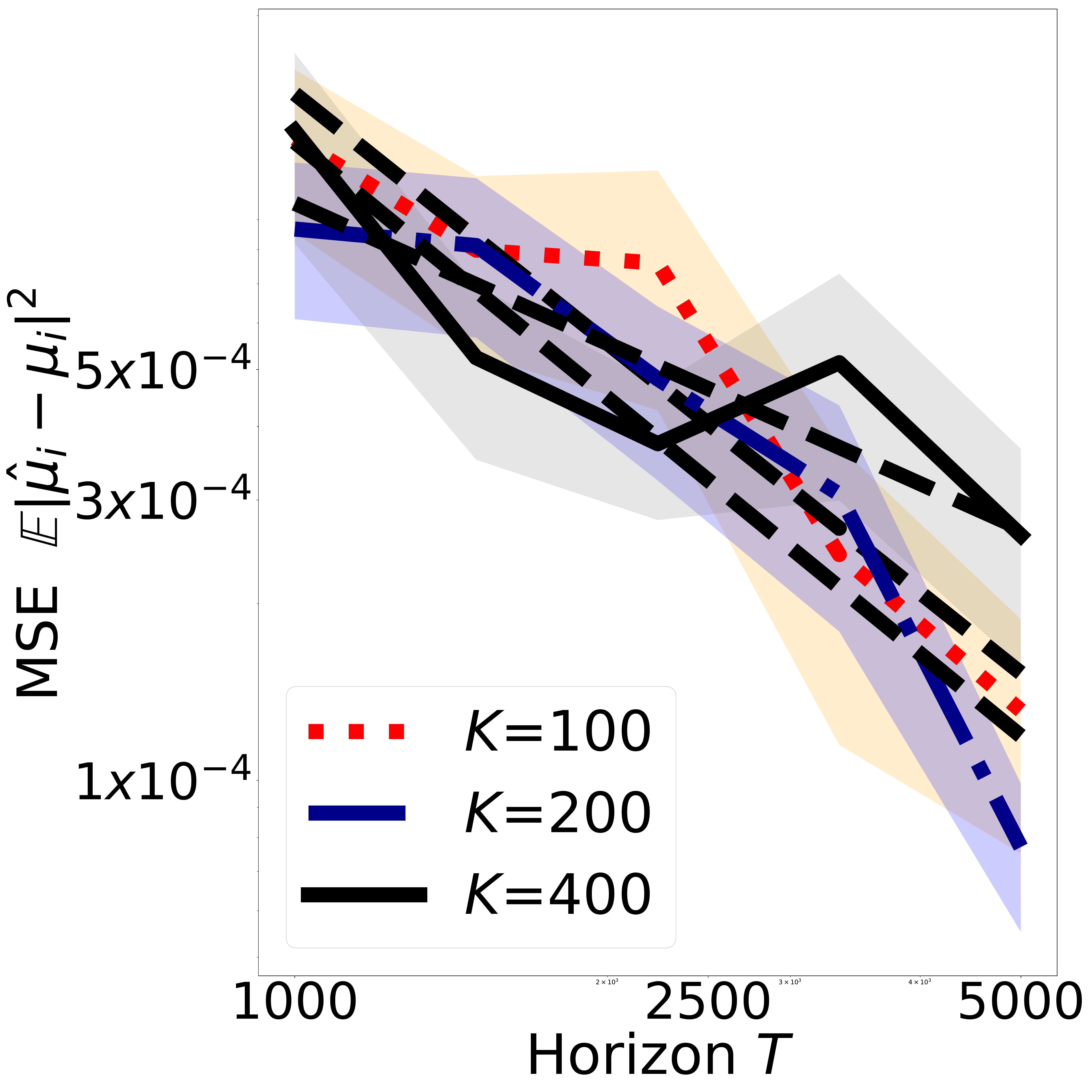} }}%
    \caption{(a) depicts mean-reward per-arm for all the $12979$ arms before normalization. (b) We track the reward estimation error of arm $12979$ (this arm is added to all instances) as a function of $T$ for $K \in \{100, 200, 400\}$.}
    \label{fig:gene_expression}
\end{figure}

\end{document}